\def\Secref#1{Section~\ref{#1}}
\def\eqref#1{equation~\ref{#1}}
\def\eq#1{~(\ref{#1})}
\def\Appref#1{Appendix~\ref{#1}}
\def\Asmpref#1{Assumption~\ref{#1}}
\def\1{\bm{1}}
\DeclareMathAlphabet{\mathsfit}{\encodingdefault}{\sfdefault}{m}{sl}
\SetMathAlphabet{\mathsfit}{bold}{\encodingdefault}{\sfdefault}{bx}{n}
\def\gO{{\mathcal{O}}}
\def\gS{{\mathcal{S}}}
\def\gZ{{\mathcal{Z}}}
\def\sR{{\mathbb{R}}}
\newcommand{\E}{\mathbb{E}}
\newcommand{\KL}{D_{\mathrm{KL}}}
\newcommand{\eg}{\emph{e.g.}}
\newcommand{\ie}{\emph{i.e.}}
\newcommand{\inner}[2]{\left\langle #1,#2 \right\rangle}
\newcommand{\rbr}[1]{\left(#1\right)}
\newcommand{\sbr}[1]{\left[#1\right]}
\newcommand{\nbr}[1]{\left\|#1\right\|}
\newcommand{\abr}[1]{\left|#1\right|}
\newcommand{\Tongzheng}[1]{{\color{red} [Tongzheng: #1]}}
\newcommand{\lina}[1]{{\color{orange} [Lina: #1]}}
\newcommand{\revise}[1]{{\color{black}#1}}
\newcommand{\AlgName}{{Latent Variable Representation}\xspace}
\newcommand{\algabb}{{LV-Rep}\xspace}
\newtheorem{theorem}{Theorem}
\newtheorem{lemma}[theorem]{Lemma}
\newtheorem{definition}{Definition}
\newtheorem{assumption}{Assumption}
\theoremstyle{remark}
\newtheorem{remark}{Remark}
\title{Latent Variable Representation for Reinforcement Learning}
\author{Tongzheng Ren\textsuperscript{1, 2, \thanks{Equal Contribution. Project Website: \href{https://rlrep.github.io/lvrep/}{https://rlrep.github.io/lvrep/}}} \quad Chenjun Xiao\textsuperscript{3, \footnotemark[1]} \quad Tianjun Zhang\textsuperscript{1, 4} 
\quad Na Li\textsuperscript{1, 5} \quad Zhaoran Wang\textsuperscript{6} \\
\bf{Sujay Sanghavi\textsuperscript{2} \quad 
 Dale Schuurmans\textsuperscript{1, 3} \quad Bo Dai\textsuperscript{1, 7}}
\\
 \textsuperscript{1}Google Research, Brain Team \quad \textsuperscript{2}UT Austin \quad \textsuperscript{3}University of Alberta \quad \textsuperscript{4}UC Berkeley \\ \textsuperscript{5}Harvard University \quad \textsuperscript{6}Northwestern University \quad \textsuperscript{7}Georgia Tech\\
 \texttt{tongzheng@utexas.edu, chenjun@ualberta.ca, bodai@google.com}\\
}
\begin{document}

\maketitle

\begin{abstract}

Deep latent variable models have achieved significant empirical successes in model-based reinforcement learning (RL) due to their expressiveness in modeling complex transition dynamics. On the other hand, it remains unclear theoretically and empirically how latent variable models may facilitate learning, planning, and exploration to improve the sample efficiency of RL. In this paper, we provide a representation view of the latent variable models for state-action value functions, which allows both tractable variational learning algorithm and effective implementation of the optimism/pessimism principle in the face of uncertainty for exploration. In particular, we propose a computationally efficient planning algorithm with UCB exploration by incorporating kernel embeddings of latent variable models. Theoretically, we establish the sample complexity of the proposed approach in the online and offline settings. Empirically, we demonstrate superior performance over current state-of-the-art algorithms across various benchmarks.
\end{abstract}

\setlength{\abovedisplayskip}{2pt}
\setlength{\abovedisplayshortskip}{2pt}
\setlength{\belowdisplayskip}{2pt}
\setlength{\belowdisplayshortskip}{2pt}
\setlength{\jot}{2pt}

\setlength{\floatsep}{2ex}
\setlength{\textfloatsep}{2ex}

\vspace{-2mm}
\section{Introduction}\label{sec:intro}
\vspace{-2mm}



Reinforcement learning~(RL) seeks an optimal policy that maximizes the expected accumulated rewards by interacting with an unknown environment sequentially. Most  research  in RL is based on the framework of Markov decision processes~(MDPs)~\citep{puterman2014markov}. For MDPs with finite states and actions, there is already a clear understanding with sample and computationally efficient algorithms~\citep{auer2008near, dann2015sample,osband2014model,azar2017minimax,jin2018q}. However, the cost of these RL algorithms quickly becomes unacceptable for large or infinite state problems. Therefore, function approximation or parameterization is a major tool to tackle the curse of dimensionality. Based on the parametrized component to be learned,  RL algorithms can roughly be classified into two categories: model-free and model-based RL, where the algorithms in the former class directly learn a value function or policy to maximize the cumulative rewards, while algorithms in the latter class  learn a model to mimic the environment and the optimal policy is obtained by planning with the learned simulator. 

Model-free RL algorithms exploit an end-to-end learning paradigm for policy and value function training, and have achieved  empirical success in robotics~\citep{peng2018sim}, video-games~\citep{mnih2013playing}, and dialogue systems~\citep{jiang2021towards}, to name a few, thanks to  flexible deep neural network parameterizations. The flexibility of such parameterizations, however, also comes with a cost in optimization and exploration. Specifically, it is well-known that  temporal-difference methods become unstable or even divergent with general nonlinear function approximation~\citep{boyan1994generalization,tsitsiklis1996analysis}. Uncertainty quantization for general nonlinear function approximators is also underdeveloped.
Although there are several theoretically interesting model-free exploration algorithms with general nonlinear function approximators~\citep{wang2020reinforcement,kong2021online,jiang2017contextual}, a computationally-friend exploration method for model-free RL is still missing. 


Model-based RL algorithms, on the other hand, exploit more information from the environment during learning, and are therefore considered to be more promising in terms of sample efficiency~\citep{wang2019benchmarking}. Equipped with powerful deep models, model-based RL can successfully reduce approximation error, and have demonstrated strong performance in practice~\citep{hafner2019dream,hafner2019learning,wu2022daydreamer}, following with some theoretical justifications~\citep{osband2014model,foster2021statistical}. However, the reduction of approximation error brings new challenges in planning and exploration, which have not been treated seriously from the empirical and theoretical aspects. Specifically, with general nonlinear models, the planning problem itself is already no longer tractable, and the problem becomes more difficult with an exploration mechanism introduced. While  theoretical analysis typically assumes a planning oracle providing an optimal policy, some approximations are necessary in practice, including dyna-style planning~\citep{chua2018deep,luo2018algorithmic}, random shooting~\citep{kurutach2018model,hafner2019dream}, and policy search with backpropagation through time~\citep{deisenroth2011pilco,heess2015learning}. These may lead to sub-optimal policies, even with perfect models, wasting potential modeling power. 

In sum, for both model-free and model-based algorithms, there has been insufficient work considering both statistical and computation tractability and efficiency in terms of learning, planning and exploration in a unified and coherent perspective for algorithm design. This raises the question:
\begin{center}
    \emph{Is there a way to design a {\bf provable} and {\bf practical} algorithm to remedy both the statistical and computational difficulties of RL?}
\end{center}
Here, by ``provable'' we mean the statistical complexity of the algorithm can be rigorously characterized without explicit dependence on the number of states but instead the fundamental complexity of the parameterized representation space; while by ``practical'' we mean the learning, planning and exploration components in the algorithm are computationally tractable and can be implemented in real-world scenarios. 

This work provides an affirmative answer to the question above by establishing the representation view of  latent variable dynamics models through a connection to linear MDPs. Such a connection immediately provides a computationally tractable approach to planning and exploration in the \emph{linear space} constructed by the flexible deep latent variable model. 
Such a latent variable model view also provides a \emph{variational} learning method that remedies the intractbility of MLE for general linear MDPs~\citep{agarwal2020flambe,uehara2021representation}. 
Our main contributions consist of the following:
\begin{itemize}[leftmargin=20pt, parsep=0pt, partopsep=0pt]
    \item We establish the representation view of latent variable dynamics models in RL, which naturally induces ~\emph{\AlgName~(\algabb)} for linearly representing the state-action value function, and paves the way for a practical variational method for representation learning (\Secref{sec:method});
    \item We provide computation efficient algorithms to implement the principle of optimistm and pessimism in the face of uncertainty with the learned~\algabb for online and offline RL (\Secref{sec:rl_rep}); 
    \item We theoretically analyze the sample complexity of~\algabb in both online and offline settings, which reveals the essential complexity beyond the cardinality of the latent variable (\Secref{sec:analysis}); 
    \item We empirically demonstrate~\algabb outperforms the state-of-the-art model-based and model-free RL algorithms on several RL benchmarks~(\Secref{sec:exp})
\end{itemize}
\vspace{-2mm}
\section{Preliminaries}\label{sec:prelim}
\vspace{-2mm}



In this section, we provide brief introduction to MDPs and linear MDP, which play important roles in the algorithm design and theoretical analysis. We also provide the required background knowledge on functional analysis in Appendix~\ref{appendix:tech_back}. 

\vspace{-2mm}
\subsection{Markov Decision Processes}
We consider the infinite horizon discounted Markov decision process (MDP) specified by the tuple $\mathcal{M} = \langle \mathcal{S}, \mathcal{A}, T^*, r, \gamma, d_0 \rangle$, where $\mathcal{S}$ is the state space, $\mathcal{A}$ is a discrete action space, $T^*:\mathcal{S} \times \mathcal{A} \to \Delta(\mathcal{S})$ is the transition, $r:\mathcal{S} \times \mathcal{A} \to [0, 1]$ is the reward, $\gamma \in (0, 1)$ is the discount factor and $d_0 \in \Delta(\mathcal{S})$ is the initial state distribution. Following the standard convention \citep[e.g.][]{jin2020provably}, we assume $r(s, a)$ and $d_0$ are known to the agent. We aim to find the policy $\pi:\mathcal{S} \to \Delta(\mathcal{A})$, that maximizes the following discounted cumulative reward:
\begin{align*}
\textstyle
    V_{T^*, r}^{\pi} := \mathbb{E}_{T^*, \pi}\left[\sum_{i=0}^{\infty} \gamma^i r(s_i, a_i)\bigg|s_0 \sim d_0\right]. 
\end{align*}
We define the state value function $V:\mathcal{S} \to \left[0, \frac{1}{1-\gamma}\right]$ and state-action value function $Q:\mathcal{S}\times \mathcal{A} \to \left[0, \frac{1}{1-\gamma}\right]$ following the standard notation:
\begin{align*}
\textstyle
    Q_{T^*, r}^\pi(s, a) =  \mathbb{E}_{T^*, \pi} \left[\sum_{i=0}^{\infty} \gamma^i r(s_i, a_i)\bigg|s_0 = s, a_0 = a\right], \quad  
    V_{T^*, r}^\pi(s) = \mathbb{E}_{a\sim\pi(\cdot|s)}\left[Q_{T^*, r}^\pi(s, a)\right],
\end{align*}
It is straightforward to see that $V_{T^*, r}^\pi = \mathbb{E}_{s \sim d_0} \left[V_{T^*, r}^\pi(s)\right]$, as well as the following Bellman equation:
\begin{align*}
       Q_{T^*, r}^\pi(s, a) =  r(s, a) + \gamma\mathbb{E}_{s^\prime\sim T^*(\cdot|s, a)}\left[V_{T^*, r}^\pi(s^\prime)\right].
\end{align*}
We also define the discounted occupancy measure $d_{T^*}^\pi$ of policy $\pi$ as follows:
\begin{align*}
\textstyle
    d_{T^*}^\pi(s, a) =  \mathbb{E}_{T^*, \pi}\left[\sum_{i=0}^{\infty} \gamma^i \mathbf{1}_{s_i = s, a_i = a}\bigg|s_0 \sim d_0\right].
\end{align*}
By the definition of the discounted occupancy measure, we can see $V_{T^*, r}^\pi = \mathbb{E}_{(s, a) \sim d_{T^*}^\pi}\left[r(s, a)\right]$. Furthermore, with the property of the Markov chain, we can obtain
\begin{align*}
    d_{T^*}^\pi(s, a) = (1-\gamma) d_0 \cdot \pi(a|s) + \gamma \mathbb{E}_{(\widetilde{s}, \widetilde{a}) \sim d_{T^*}^\pi(s, a)} \left[T^*(s|\widetilde{s}, \widetilde{a}) \times \pi(a|s)\right].
\end{align*}

\vspace{-2mm}
\subsection{Linear MDP}
In the tabular MDP, where the state space $|\mathcal{S}|$ is finite, there exist lots of work on sample- and computation-efficient RL algorithms \citep[e.g.][]{azar2017minimax, jin2018q}. However, such methods can still be expensive when $|\mathcal{S}|$ becomes large or even infinite, which is quite common for in real-world applications.
To address this issue, we would like to introduce function approximations into RL algorithms to alleviate the statistical and computational bottleneck. The linear MDP~\citep{jin2020provably,agarwal2020flambe} is a promising subclass admits special structure for such purposes.
\begin{definition}[Linear MDP~\citep{jin2020provably,agarwal2020flambe}]\label{def:low-rank-MDP}
    An MDP is called a linear MDP if there exists $\phi^*:\mathcal{S}\times \mathcal{A} \to \mathcal{H}$ and $\mu^*:\mathcal{S} \to \mathcal{H}$ for some proper Hilbert space $\mathcal{H}$, such that $T^*(s^\prime|s, a) = \left\langle \phi^*(s, a), \mu^*(s^\prime)\right\rangle_{\mathcal{H}}$.
\end{definition}
The complete definition of linear MDPs require $\phi^*$ and $\mu^*$ satisfy certain normalization conditions, which we defer to Section~\ref{sec:analysis} for the ease of presentation. The most significant benefit for linear MDP is that, for any policy $\pi:\mathcal{S} \to \mathcal{A}$, $Q_{T^*, r}^\pi(s, a)$ is linear with respect to $[r(s, a), \phi^*(s, a)]$, thanks to the following observation:
\begin{equation}\label{eq:linear_q}
    Q_{T^*, r}^\pi(s, a) = r(s, a) + \gamma\mathbb{E}_{s^\prime\sim T^*(\cdot|s, a)}\left[V_{T^*, r}^\pi(s^\prime)\right] = r(s, a) + \left\langle \phi^*(s, a), \int_{\mathcal{S}} \mu^*(s^\prime) V_{T^*, r}^\pi(s^\prime) d s^\prime\right\rangle_{\mathcal{H}}.
\end{equation}
Plenty of sample-efficient algorithms have been developed based on the linear MDP structure with known $\phi^*$~\citep[\eg][]{yang2020reinforcement, jin2020provably, yang2020provably}. This requirement limits their practical applications. In fact, in most cases, we do not have access to $\phi^*$ and we need to perform representation learning to obtain an estimate of $\phi^*$. However, the learning of $\phi$ relies on efficient exploration for the full-coverage data, while the design of exploration strategy relies on the accurate estimation of $\phi$. The coupling between exploration and learning induces extra difficulty. 

Recently,~\citet{uehara2021representation} designed UCB-style exploration for iterative finite-dimension representation updates with theoretical guarantees. The algorithm requires the computaiton oracle for the maximum likelihood estimation (MLE) to the conditional density estimation, 
\begin{align}\label{eq:mle_unnormalized}
    \max_{\phi, \mu} \sum_{i=1}^n \log \langle \phi(s_i, a_i), \mu(s_i^\prime)\rangle_{\mathcal{H}}, \quad \mathrm{s.t.} \quad  \forall (s, a), \quad \left\langle \phi(s, a), \int_{\mathcal{S}}\mu(s^\prime) ds^\prime\right\rangle_{\mathcal{H}} = 1,
\end{align}
which is difficult as we generally do not have specific realization of $(\phi, \mu)$ pairs to make the constraints hold for arbitrary $(s, a)$ pairs, and therefore, impractical for real-world applications. 

\vspace{-2mm}
\section{Latent Variable Models as Linear MDPs}\label{sec:method}
\vspace{-2mm}
In this section, we first reveal the linear representation view of the transitions with a latent variable structure. This essential connection brings several benefits for learning, planning and exploration/exploitation. More specifically, the latent variable model view provides us a tractable variational learning scheme, while the linear representation view inspires computational-efficient planning and exploration/exploitation mechanism. 
%

We focus on the transition operator $T^*:\mathcal{S}\times \mathcal{A} \to \Delta(\mathcal{S})$ with a latent variable structure, \ie, there exist latent space $\mathcal{Z}$ and two conditional probability measure $p^*(z|s, a)$ and $p^*(s^\prime|z)$, such that
\begin{align}
    \label{eq:latent_variable_model}
    T^*(s^\prime|s, a) = \int_{\mathcal{Z}} p^*(z|s, a) p^*(s^\prime|z) d \mu,
\end{align}
where $\mu$ is the Lebesgue measure on $\mathcal{Z}$ when $\mathcal{Z}$ is continuous and $\mu$ is the counting measure on $\mathcal{Z}$ when $\mathcal{Z}$ is discrete.

Assume that $p^*(\cdot|s, a) \in L_2(\mu)$, $p^*(s^\prime|\cdot) \in L_2(\mu)$, we have the equivalent formulation of~\eq{eq:latent_variable_model} as
\begin{align*}
    T^*(s^\prime|s, a) = \left\langle p^*(\cdot|s, a), p^*(s^\prime|\cdot)\right\rangle_{L_2(\mu)}, 
\end{align*}
which obviously demonstrates the linear MDP structure following Defintion~\ref{def:low-rank-MDP}, and immediately implies $\phi^*(s, a) = p^*_z\rbr{\cdot|s, a}$, and $\mu^*\rbr{s^\prime} = p^*(s^\prime|\cdot)$. We call $p^*_z\rbr{\cdot|s, a}$ as \emph{\AlgName~(\algabb)}. 

\vspace{-2mm}
\paragraph{Connection to~\citet{ren2022free}.} 
\revise{To provide a concrete example of LV-rep, we consider the stochastic nonlinear control model with Gaussian noise \citep{ren2022free}, which is widely used in most of model-based RL algorithms.} Such a model can be understood as a special case of~\algabb. In~\citet{ren2022free}, the transition operator is defined as
\begin{equation}
\textstyle
    T^*\rbr{s^\prime|s, a} = \rbr{2\pi\sigma^2}^{-d/2}\exp\rbr{-\nbr{s^\prime - f^*\rbr{s, a}}^2/(2\sigma^2)} = \inner{p^*(\cdot|s, a)}{p^*\rbr{s^\prime|\cdot}}_{L_2(\mu)}, 
\end{equation}
where $p^*\rbr{z|s, a} \propto \exp\rbr{-2\nbr{z - f^*\rbr{s, a}}^2/\sigma^2}$ and $p^*\rbr{s^\prime|z} \propto \exp\rbr{-2\nbr{z - s^\prime}^2/\sigma^2}$, both following the Gaussian distributions. The proposed~\algabb can exploit more general distributions beyond Gaussian for $p^*\rbr{\cdot|s, a}$ and $p^*\rbr{s^\prime|z}$, that introduces more flexibility in transition modeling.

Our definition of \algabb is more general than the original definition \revise{(Definition 2)} in \citet{agarwal2020flambe}, which assumes $|\mathcal{Z}|$ is finite. 
As shown by \citet{agarwal2020flambe}, block MDPs \citep{du2019provably, misra2020kinematic} with finite latent state space $\mathcal{Z}$ have a latent variable representation where $\mathcal{S}$ corresponds to the set of observation, $\mathcal{Z}$ corresponds to the set of latent state, and \revise{$p^*(z^\prime|s, a)$ is a composition of deterministic $p(z|s)$ and a transition $p(z^\prime|z, a)$.} \citet{agarwal2020flambe} also remarks that, compared with the latent variable representation, the original low-rank representation relaxes the simplex constraint on the $p^*(z|s, a)$, and thus, can be more compact with fewer dimensions. However, the ambient dimension may not be a proper measure of the representation complexity. As we will show in~\Secref{sec:analysis}, even we work on the infinite $\mathcal{Z}$, as long as $p(z|s, a) \in \mathcal{H}_k$ and $k$ satisfies standard regularity conditions, we can still perform sample-efficient learning. A proper measure of the representation complexity is still an open problem to the whole community.
 
The \algabb with \revise{$p^*(\cdot|s, a)$ and} $p^*\rbr{s^\prime|\cdot}$ naturally satisfies the distribution requirements, which brings the benefits of efficient sampling and learning. 

\vspace{-2mm}
\paragraph{Efficient Simulation from~\algabb.} Specifically, 
we can easily draw samples from the learned model $\hat{T}(s^\prime|s, a) = \int_{\mathcal{Z}} \hat{p}(z|s, a)\hat{p}(s^\prime|z)d\mu$ by first sampling $z_i\sim \hat{p}(z|s, a)$, then sampling $s_i^\prime \sim p(s^\prime|z_i)$, without the need to call other complicated samplers, \eg, MCMC, for the general unnormalized transition operator in linear MDPs. Such a property is important for computation-efficient planning on the learned model. 

\vspace{-2mm}
\paragraph{Variational Learning of~\algabb.}
Another significant benefit of the \algabb is that, we can leverage the variational method to obtain a tractable surrogate objective of MLE, which is also known as the evidence lower bound (ELBO) \citep{kingma2019introduction}, that can be derived as follows:
\begin{align}
    \log T(s^\prime|s, a) = & \log \int p^*\rbr{z|s, a}p^*(s^\prime|z)dz = \log \int \frac{p^*\rbr{z|s, a}p^*(s^\prime|z)}{q(z|s, a, s^\prime)} q(z|s, a, s^\prime)dz\nonumber \\
    = & \max_{q\in \Delta\rbr{\gZ}} \E_{z\sim q(\cdot|s, a, s^\prime)} \left[\log p^*(s^\prime|z) \right] - \KL\left(q(z|s, a, s^\prime)\| p^*(z|s, a)\right),
    \label{eq:ELBO}
\end{align}
where $q(z|s, a, s^\prime)$ is an auxiliary distribution. The last equality comes from Jensen's inequality, and the equality only holds when $q\rbr{z|s, a, s^\prime} = p\rbr{z|s, a, s^\prime} \propto p\rbr{z|s, a}p\rbr{s^\prime|z}$. 

Compared with the standard MLE used in \citep{agarwal2020flambe, uehara2021representation}, maximizing the ELBO is more computation-efficient, as it avoids the computation of integration at any time. 
Meanwhile, if the family of variational distribution $q$ is sufficient flexible that contains the optimal $p(z|s, a, s^\prime)$ for any possible $(p(z|s, a), p(s^\prime|z))$ pair,
then maximizing the ELBO is equivalent to perform MLE, \ie, they share the same solution, 


\vspace{-2mm}
\subsection{Reinforcement Learning with~\algabb}\label{sec:rl_rep}
\vspace{-2mm}

As the transition operator is linear with respect to~\algabb, the state-action function for arbitrary policy can be linearly represented by~\algabb. Once the~\algabb is learned, we can execute planning and exploration in the linear space formed by~\algabb. Due to the space limit, we mainly consider online exploration setting, and the offline policy optimization is explained in~\Appref{sec:offline}. 

\begin{algorithm}[t] 
\caption{Online Exploration with \algabb} \label{alg:online_algorithm-main}
\begin{algorithmic}[1]
  \State \textbf{Input:} Model class $\mathcal{P}=\{(p(z|s, a), p(s^\prime|z))\}, \mathcal{Q} = \{q(z|s, a, s^\prime)\}$, Iteration $N$.
  \State \textbf{Initialize} $\pi_0(s) = \mathcal{U}(\mathcal{A})$ where $\mathcal{U}(\mathcal{A})$ denotes the uniform distribution on $\mathcal{A}$; $\mathcal{D}_0 = \emptyset$; $\mathcal{D}_0^\prime = \emptyset$.
  \For{episode $n=1,\cdots,N$ } 
  \State Collect the transition $(s,a,s^\prime, a^\prime, \Tilde{s})$ where $s\sim d_{T^*}^{\pi_{n-1}}$, $a\sim \mathcal{U}(\mathcal{A})$, $s^\prime\sim T^*(\cdot | s,a)$,$ a^\prime \sim \mathcal{U}(\mathcal{A})$, $\tilde{s} \sim T^*(\cdot|s^\prime, a^\prime)$. $\mathcal{D}_n = \mathcal{D}_{n-1} \cup \{s,a,s^\prime\}$, $\mathcal{D}_n^\prime = \mathcal{D}_{n-1}^\prime \cup \{s^\prime, a^\prime, \tilde{s}\}$. \label{line:data_collection}
  \State Learn the latent variable model $\hat{p}_n(z|s, a)$ with $\mathcal{D}_n \cup \mathcal{D}_n^\prime$ via maximizing the ELBO in \eq{eq:ELBO}, and obtain the learned model $\hat{T}_n$.\label{line:representation_online} 

  \State Set the exploration bonus $\hat{b}_n(s,a)$ as \eq{eq:empirical_bonus}.\label{line:bonus}
  \State Update policy \label{line:online_plan}
    $ \pi_n=\mathop{\arg\max}_{\pi}V^{\pi}_{\hat{T}_n,r+\hat {b}_n}$.
  \EndFor
  \State \textbf{Return } $\pi_1,\cdots,\pi_N$.
\end{algorithmic}
\end{algorithm}

\vspace{-2mm}
\paragraph{Practical Parameterization of $Q$ function.} 
With the linear factorization of dynamics through latent variable models~\eq{eq:linear_q}, we have
\begin{equation}\label{eq:exp_q}
    Q_{T^*, r}^\pi(s, a) = r(s, a) + \gamma\E_{p^*(z|s, a)}\sbr{w^\pi(z)},
\end{equation}
\revise{where $w^\pi(z) = \int_{\mathcal{S}} p^*(z|s^\prime) V_{T^*, r}^\pi(s^\prime) ds^\prime$ can be viewed as a value function of the latent state.} When the latent variable is in finite dimension, \ie, $\abr{\gZ}$ is finite, we have $w = [w(z)]_{z\in\gZ}\in \sR^{\abr{\gZ}}$, and the expectation $\E_{p^*(z|s, a)}\sbr{w^\pi(z)}$ can be computed exactly by enumerating over $\mathcal{Z}$. 

However, when $\mathcal{Z}$ is not a finite set, generally we can not exactly compute the expectation, which makes the representation of $Q$ function through $p^*(z|s, a)$ hard.
Particularly, under our normalization condition~\Asmpref{assumption:normalization} shown later, we have $w^\pi \in \mathcal{H}_k$ where $\mathcal{H}_k$ is a reproducing kernel Hilbert space with kernel $k$. When $k$ admits a random feature representation (see Definition~\ref{def:random_feature_representation}), we can then express $w^\pi$ as:
\begin{align*}
    w^\pi(z) = \int_{\Xi} \widetilde{w}^\pi(\xi)\psi(z; \xi) d P(\xi),
\end{align*}
where the concrete $P(\xi)$ depends on the kernel $k$. Plug this representation of $w^\pi(z)$ into~\eq{eq:exp_q}, we obtain the approximated representation of $Q_{T^*, r}^\pi(s, a)$ as:
\begin{align*}
Q_{T^*, r}^\pi(s,a) & = r(s, a) + \gamma\int_{\mathcal{Z}}  w^\pi(z) p^*(z|s, a) d\mu \\
& = r(s, a) + \gamma\int_\mathcal{Z} \int_\Xi \widetilde{w}(\xi) \psi(z; \xi) d P(\xi)\cdot  p^*(z|s,a) d\mu\\
& \approx r(s, a) + \frac{\gamma}{m} \sum_{i\in[m]} \widetilde{w}(\xi_i) \psi(z_i; \xi_i),
\end{align*}
which shows that we can approximate $Q_{T^* r}^\pi(s, a)$ with a linear function on top of the random feature $\varphi(s, a) = [\psi(z_i; \xi_i)]_{i\in [m]}$ where $z_i \sim p^*(z|s, a)$ and $\xi_i \sim P(\xi)$. 
This can be viewed as a two-layer neural network with fixed first layer weight $\xi_i$ and activation $\psi$ and trainable second layer weight $\widetilde{w} = \sbr{\widetilde{w}(\xi_i)}_{i=1}^m\in\sR^m$. 

\vspace{-2mm}
\paragraph{Planning and Exploration with \algabb.} 
Following the idea of REP-UCB~\citep{uehara2021representation}, we introduce an additional bonus to implement the principle of optimism in the face of uncertainty.
We use the standard elliptical potential for the upper confidence bound, which can be computed efficiently as below,
\begin{align}
    \hat{\varphi}_n(s, a) = & \left[\psi(z_i; \xi_i)\right]_{i\in [m]}, \quad \text{where} \quad \{z_i\}_{i\in [m]} \sim \hat{p}_n(z|s, a), \quad \{\xi_i\}_{i\in [m]} \sim P(\xi), \nonumber\\
    \hat{b}_n(s, a) = & \alpha_n \hat{\varphi}_n(s, a) \hat{\Sigma}_n^{-1} \hat{\varphi}_n(s, a),\quad \text{with}\quad \hat{\Sigma}_n = \sum_{(s_i, a_i) \in \mathcal{D}_n} \hat{\varphi}_n(s_i, a_i) \hat{\varphi}_n(s_i, a_i)^\top + \lambda I, \label{eq:empirical_bonus}
\end{align}
where $\alpha_n$ and $\lambda$ are some constants, and $\mathcal{D}_n$ is the collected dataset. 

The planning can be then completed by Bellman recursion with bonus, \ie, 
\begin{equation}
    Q^\pi\rbr{s, a} = r\rbr{s, a} + \hat{b}_n\rbr{s, a} + \gamma \E_{T}\sbr{V^\pi\rbr{s'}}.
\end{equation}
We can exploit the augmented feature $[r(s, a), \varphi(s, a), \hat{b}_n\rbr{s, a}]$ to linearly represented $Q^\pi$ after bonus introduced. However, there will be an extra $\gO\rbr{m^2}$ due to the bonus in feature. Therefore, we consider a two-layer MLP upon $\varphi$ to parametrize $Q(s, a) = w_0r(s, a) +\widetilde{w}_1^\top\varphi(s, a) + \widetilde{w}_2^\top\sigma\rbr{\widetilde{w}_3^\top\varphi(s, a)}$, where $\sigma\rbr{\cdot}$ is a nonlinear activation function, used for complement the effect of the nonlinear $\hat{b}_n$.
We finally conduct approximate dynamic programming style algorithm~\citep[\eg][]{munos2008finite} with the $Q$ parameterization.

\vspace{-2mm}
\paragraph{The Complete Algorithm.} We show the complete algorithm for the online exploration with \algabb in Algorithm~\ref{alg:online_algorithm-main}. Our algorithm follows the standard protocol for sequential decision making. In each episode, the agent first executes the exploratory policy obtained from the last episode and collects the data (Line~\ref{line:data_collection}). The data are later used for training the latent variable model by maximizing the ELBO defined in \eqref{eq:ELBO} (Line~\ref{line:representation_online}). With the newly learned $\hat{p}_n(z|s, a)$, we add the exploration bonus defined in \eqref{eq:empirical_bonus} to the reward (Line~\ref{line:bonus}), and obtain the new exploratory policy by planning on the learned model with the exploration bonus (Line~\ref{line:online_plan}), that will be used in the next episode. Note that, in Line~\ref{line:data_collection}, we requires to sample $s \sim d_{T^*}^{\pi_{n-1}}$, which can be obtained by starting from $s_0\sim d_t$, executing $\pi_{n-1}$, stopping with probability $1-\gamma$ at each time step $t\geq 0$ and returning $s_t$. \algabb can also be used for offline exploitation, and we defer the corresponding algorithm to Appendix~\ref{sec:offline}.
\vspace{-2mm}
\section{Theoretical Analysis}\label{sec:analysis}
\vspace{-2mm}
In this section, we provide the theoretical analysis of representation learning with \algabb. Before we start, we introduce the following two assumptions, that are widely used in the community~\citep[e.g.][]{agarwal2020flambe, uehara2021representation}.
\begin{assumption}[Finite Candidate Class with Realizability]
    \label{assumption:function_class}
    $|\mathcal{P}| < \infty$ and $(p^*(z|s, a), p^*(s^\prime|z)) \in \mathcal{P}$. Meanwhile, for all $(p(z|s, a), p(s^\prime|z)) \in \mathcal{P}$, $p(z|s, a, s^\prime) \in \mathcal{Q}$. 
\end{assumption}
\begin{remark}
    The assumption on $\mathcal{P}$ is widely used in the community \citep[e.g.][]{agarwal2020flambe, uehara2021representation}, while the assumption on $\mathcal{Q}$ is to guarantee the estimator obtained by maximizing the ELBO defined in \eqref{eq:ELBO} is identical to the estimator obtained by MLE. We would like to remark that, the extension to other data-independent function class complexity (e.g. Rademacher complexity \citep{bartlett2002rademacher}) can be straightforward with a refined non-asymptotic generalization bound of MLE.
\end{remark}

\begin{assumption}[Normalization Conditions]
    \label{assumption:normalization}
    $\forall P\in \mathcal{P}, (s, a) \in \mathcal{S}\times\mathcal{A}, \|p(\cdot|s, a)\|_{\mathcal{H}_k} \leq 1$. Furthermore, $\forall g:\mathcal{S} \to \mathbb{R}$ such that $\|g\|_{\infty} \leq 1$, we have $\left\|\int_{\mathcal{S}} p(s^\prime|\cdot) g(s^\prime) d s^\prime\right\|_{\mathcal{H}_k} \leq C$. 
\end{assumption}

\begin{remark}
    Our assumptions on normalization conditions is substantially different from standard linear MDPs.
    Specifically, standard linear MDPs assume that the representation $\phi(s, a)$ and $\mu(s^\prime)$ are of finite dimension $d$, with $\|\phi(s, a)\|_2 \leq 1$ and $\forall \|g\|_{\infty}\leq 1$, $\left\|\int_{\mathcal{S}} \mu(s^\prime) g(s^\prime) d s^\prime\right\|_{2} \leq d$. When $|\mathcal{Z}|$ is finite, as $\|f\|_{L_2(\mu)} \leq \|f\|_{\mathcal{H}_k}$, our normalization conditions are more general than 
    the counterparts of the standard linear MDPs and we can use the identical normalization conditions as the standard linear MDPs. However, when $|\mathcal{Z}|$ is infinite, if we assume $\|p(z|s, a)\|_{L_2(\mu)} \leq 1$, we cannot provide a sample complexity bound without polynomial dependency on $|\mathcal{P}|$, which can be unsatisfactory. Furthermore, we would like to note that, the assumption $\int_{\mathcal{S}} p(s^\prime|z) g(s^\prime) d s^\prime \in \mathcal{H}_k$ is mild, which is necessary for justifying the estimation from the approximate dynamic programming algorithm. 
\end{remark}




\begin{theorem}[PAC Guarantee for Online Exploration, Informal]
    \label{thm:pac_online_informal}
    Assume the reproducing kernel $k$ satisfies the regularity conditions in Appendix~\ref{sec:technical_conditions}. If we properly choose the exploration bonus $\hat{b}_n(s, a)$, we can obtain an $\varepsilon$-optimal policy with probability at least $1-\delta$ after we interact with the environments for $N = \mathrm{poly}\left(C, |\mathcal{A}|,(1-\gamma)^{-1},\varepsilon, \log(|\mathcal{P}|/\delta)\right)$ episodes.
\end{theorem}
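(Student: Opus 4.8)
The plan is to follow the REP-UCB analysis framework of \citet{uehara2021representation}, adapting each of its three pillars to the infinite-dimensional RKHS setting induced by the latent variable representation: (i) a statistical guarantee for the ELBO/MLE model estimator, (ii) optimism delivered by the elliptical bonus of \eqref{eq:empirical_bonus}, and (iii) a potential-function argument bounding the cumulative exploration, in which the effective dimension (maximum information gain $\Gamma_N$) of the kernel $k$ plays the role that the ambient dimension $d$ plays in the classical linear-MDP proof.

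First I would establish the model-learning error. By Assumption~\ref{assumption:function_class}, the variational family $\mathcal{Q}$ contains the exact posterior $p(z|s,a,s')$ for every candidate in $\mathcal{P}$, so maximizing the ELBO in \eqref{eq:ELBO} coincides with maximum likelihood over the finite realizable class $\mathcal{P}$. A standard MLE generalization bound then yields, with probability $1-\delta$,
\[
\mathbb{E}_{(s,a)\sim \rho_n}\left[\left\|\hat{T}_n(\cdot|s,a) - T^*(\cdot|s,a)\right\|_{TV}^2\right] \lesssim \frac{\log(|\mathcal{P}|/\delta)}{n},
\]
where $\rho_n$ is the (mixture) state-action distribution generating $\mathcal{D}_n \cup \mathcal{D}_n'$. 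The paired sampling in Line~\ref{line:data_collection} --- collecting both $(s,a,s')$ and $(s',a',\tilde s)$ --- is precisely what lets me control the one-step-ahead prediction error that appears when the learned feature is used to form the $Q$-function, rather than merely the error at the sampled state-action.

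Next I would set up optimism. By the linear-MDP identity \eqref{eq:linear_q}, $Q^\pi$ is linear in $\phi^*(s,a)=p^*(\cdot|s,a)$, and after the random-feature expansion it is approximately linear in $\hat\varphi_n$. I would then show that the elliptical bonus $\hat b_n$ dominates the per-step model-prediction error, so that the planned value $V^{\pi_n}_{\hat{T}_n,\,r+\hat b_n}$ optimistically upper-bounds $V^{\pi^*}_{T^*,r}$. This requires two concentration ingredients beyond the finite-dimensional case: a random-feature approximation bound guaranteeing that $m$ features reproduce $w^\pi\in\mathcal{H}_k$ up to $O(1/\sqrt{m})$ error, and a self-normalized RKHS concentration inequality controlling $\|\hat\varphi_n(s,a)\|_{\hat\Sigma_n^{-1}}$. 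The normalization constant $C$ from Assumption~\ref{assumption:normalization} enters exactly here, bounding $\|w^\pi\|_{\mathcal{H}_k}$ and hence the scale of the bonus.

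Finally, I would close the loop with a simulation-lemma/telescoping decomposition of $V^{\pi^*}_{T^*,r} - \tfrac1N\sum_n V^{\pi_n}_{T^*,r}$ into the optimism gap, the cumulative bonus $\sum_n \mathbb{E}_{(s,a)\sim d_{T^*}^{\pi_n}}[\hat b_n(s,a)]$, and the residual model error. The cumulative bonus is controlled by the elliptical-potential (maximum-information-gain) lemma, which in the RKHS replaces the factor $d$ by $\Gamma_N$; the regularity conditions of Appendix~\ref{sec:technical_conditions} ensure $\Gamma_N=\mathrm{polylog}(N)$, giving $\sum_n \hat b_n = \tilde{O}(\sqrt{N})$ up to factors of $C$ and $|\mathcal{A}|$. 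Balancing the $\sqrt{\log(|\mathcal{P}|/\delta)/n}$ model error against the bonus and solving for the $N$ that drives the average suboptimality below $\varepsilon$ yields the claimed polynomial sample complexity. \emph{The main obstacle} is the passage from the finite-dimensional elliptical-potential argument to the RKHS: one must verify that the random-feature covariance $\hat\Sigma_n$ faithfully captures the uncertainty geometry of $\mathcal{H}_k$ uniformly over $(s,a)$, so that both the information-gain bound and the optimism step remain valid despite the extra Monte-Carlo layer introduced by sampling the features $\{z_i,\xi_i\}$.
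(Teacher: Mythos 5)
Your overall architecture---ELBO-equals-MLE under Assumption~\ref{assumption:function_class}, a TV-error generalization bound, optimism via an elliptical bonus, and an elliptical-potential argument to control the cumulative bonuses---matches the paper's proof (Lemmas~\ref{lem:mle}, \ref{lem:optimism}, and \ref{lem:regret}). But there are two genuine gaps. First, you build the optimism and potential steps on the random-feature bonus $\hat b_n(s,a)=\alpha_n\hat\varphi_n(s,a)^\top\hat\Sigma_n^{-1}\hat\varphi_n(s,a)$ of \eqref{eq:empirical_bonus}, and you yourself flag the resulting Monte-Carlo uniformity problem (whether the sampled-feature covariance captures the geometry of $\mathcal{H}_k$ uniformly over $(s,a)$) as ``the main obstacle''---yet you never resolve it, and resolving it is nontrivial because the features $\{z_i\}$ are resampled at every $(s,a)$. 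The paper sidesteps this obstacle entirely: its formal statement (Theorem~\ref{thm:pac_online}) defines the theoretical bonus at the operator level, $\hat b_n(s,a)=\min\bigl\{\alpha_n\|\hat p_n(\cdot|s,a)\|_{L_2(\mu),\hat\Sigma_{n,\hat p_n}^{-1}},2\bigr\}$ with $\hat\Sigma_{n,\hat p_n}=\sum_{i}\hat p_n(\cdot|s_i,a_i)\hat p_n(\cdot|s_i,a_i)^\top+\lambda T_k^{-1}$ acting on $L_2(\mu)$, so no random-feature approximation appears anywhere in the analysis. The required concentration is then between this empirical operator and its population counterpart (Lemma~\ref{lem:bonus_concentration}), proven via a Bernstein martingale inequality combined with covering-number bounds for the RKHS unit sphere obtained from entropy numbers and Carl's inequality. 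Note also that the regularizer is $\lambda T_k^{-1}$, not $\lambda I$: this choice is exactly what lets the eigendecay of $k$ drive both the covering and the potential arguments, and a self-normalized bound with $\lambda I$ regularization on finite random features would not substitute for it.

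Second, your claim that the regularity conditions give $\Gamma_N=\mathrm{polylog}(N)$, hence $\sum_n\hat b_n=\tilde O(\sqrt N)$, is false in one of the three eigendecay regimes the theorem covers. Under $\beta$-polynomial decay (e.g., Mat\'ern kernels), the potential grows polynomially: Lemma~\ref{lem:potential_function_RKHS} gives $\log\det\bigl(I+\alpha\,\mathbb{E}_{\nu}[\phi\phi^\top]\bigr)=O\bigl(C_{\mathrm{poly}}\,\alpha^{1/(2\beta)}\log\alpha\bigr)$, and the regret is correspondingly of order $N^{\frac{1}{2}+\frac{1}{2(1+\beta)}}$ up to logarithms, not $\sqrt N$. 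The PAC conversion then closes only because the assumption requires $\beta>1$, producing the exponent $\frac{2(1+\beta)}{\beta-1}$ in the sample complexity. Your final balancing step must therefore be carried out separately per eigendecay regime, with $\lambda$ and the resulting rate chosen differently in each case, exactly as the paper does in Lemma~\ref{lem:regret} and Theorem~\ref{thm:pac_guarantee_online}.
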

\begin{remark}
    Although $|\mathcal{Z}|$ may not be finite, we can still obtain a sample complexity independent w.r.t. $\abr{\gS}$, while has polynomial dependency on $C$, $|\mathcal{A}|$, $(1-\gamma)^{-1}$ and $\varepsilon$ and $\log |\mathcal{P}|$ with the assumption that $p(\cdot|s, a) \in \mathcal{H}_k$ and some standard regularity conditions for the kernel $k$. This means that we do not really need to assume a discrete $\mathcal{Z}$ with finite cardinality, but only need to properly control the complexity of the representation class, by either the ambient dimension $|\mathcal{Z}|$, or some ``effective dimension'' that can be derived from  the eigendecay of the kernel $k$ (see Appendix~\ref{sec:technical_conditions} for the details). The formal statement for Theorem~\ref{thm:pac_online_informal} and the proof is deferred to Appendix~\ref{sec:online_proof}. We also provide the PAC guarantee for offline exploitation with \algabb in Appendix~\ref{sec:offline_proof}.
\end{remark}

\begin{remark}
    Our proof strategy is based on the analysis of REP-UCB \citep{uehara2021representation}. However, there are substantial differences between our analysis and the analysis of REP-UCB, as the representation we consider can be infinite-dimensional, and hence the analysis of REP-UCB, which assumes that the feature is finite-dimensional, cannot be directly applied in our case. As we mentioned, to address the infinite dimension issue, we assume the representation $p(z|s, a)\in\mathcal{H}_k$ and prove two novel lemmas, one for the concentration of bonus (Lemma~\ref{lem:bonus_concentration}) and one for the ellipsoid potential bound (Lemma~\ref{lem:potential_function_RKHS}) when the representation lies in the RKHS. 
    We further note that, different from the work on the kernelized bandit and kernelized MDP \citep{srinivas2010gaussian, valko2013finite, yang2020provably} that assume the reward function and $Q$ function lies in some RKHS, we assume the condition density of the latent random variable lies in the RKHS and the $Q$ function is the $L_2(\mu)$ inner product of two functions in RKHS. As a result, the techniques used in their work cannot be directly adapted to our setting, and their regret bounds depend on the alternative notions of maximum information gain and effective dimension of the specific kernel, which can be implied by the eigendecay conditions we assume in Appendix~\ref{sec:technical_conditions} (see \citet{yang2020provably} for the details). 
\end{remark}
\vspace{-2mm}
\section{Related Work}\label{sec:related_work}
\vspace{-2mm}

There are several other theoretically grounded representation learning methods under the assumption of linear MDP. However, most of these work either consider more restricted model or totally ignore the computation issue. 
\citet{du2019provably, misra2020kinematic} focused on the representation learning in block MDPs, which is a special case of linear MDP~\citep{agarwal2020flambe}, and proposed to learn the representation via the regression. However, both of them used policy-cover based exploration that need to maintain large amounts of policies in the training phase, which induces a significant computation bottleneck. \citet{uehara2021representation} and \citet{zhang2022efficient} exploit UCB upon learned representation to resolve this issue. However, their algorithms depend on some computational oracles, \ie, MLE for unnormalized conditional distribution in~\eq{eq:mle_unnormalized} or a $\max-\min-\max$ optimization solver motivated from \citet{modi2021model}, respectively, 
that can be hard to implement in practice. 

A variety of recent work have been proposed to replace the computational oracle with more tractable estimators. For example, \citet{ren2022free} exploited representation with the structure of Gaussian noise in nonlinear stochastic control problem with arbitrary dynamics, which restricts the flexibility. \citet{zhang2022making, qiu2022contrastive} proposed to use a contrastive learning approach as an alternative.
However, similar to other contrastive learning approach, both of their methods require the access to a negative distribution supported on the whole state space, and their performance highly depends on the quality of the negative distribution.
\citet{ren2022spectral} designed a new objective based on the idea of the spectral decomposition.
But the solution for their objective is not necessarily to be a valid distribution, and the generalization bound is worse than the MLE when the state space is finite. 

Algorithmically, many representation learning methods have been developed for different purposes, such as state extractor from vision-based features~\citep{laskin2020curl,laskin2020reinforcement,kostrikov2020image}, bi-simulation~\citep{ferns2004metrics,gelada2019deepmdp,zhang2020learning}, successor feature~\citep{dayan1993improving,barreto2017successor,kulkarni2016deep}, spectral representation from transition operator decomposition~\citep{mahadevan2007proto,wu2018laplacian,duan2019state}, contrastive learning~\citep{oord2018representation,nachum2021provable,yang2020provably}, and so on. 
However, most of these methods are designed for state-only feature, ignoring the action dependency, and learning from pre-collected datasets, without taking the planning and exploration in to account and ignoring the coupling between representation learning and exploratin. Therefore, there is no rigorously theoretical characterization provided. 

We would like to emphasize that the proposed~\algabb is the algorithm which achieves both statistical efficiency theoretically and computational tractability empirically. For more related work on model-based RL, please refer to~\Appref{appendix:more_related}. 

\vspace{-2mm}
\section{Experiments}\label{sec:exp}
\vspace{-2mm}

We extensively test our algorithm on the Mujoco \citep{todorov2012mujoco} and DeepMind Control Suite \citep{tassa2018deepmind}. 
Before presenting the experiment results, 
we first discuss some details towards a practical implementation of \algabb.

\newcommand{\algabbc}{{LV-Rep-C}\xspace}
\newcommand{\algabbd}{{LV-Rep-D}\xspace}

\begin{table*}[t!]
\caption{\footnotesize Performance on various MuJoCo control tasks. All the results are averaged across 4 random seeds and a window size of 10K. Results marked with $^*$ is adopted from MBBL.
\revise{\algabb-C and \algabb-D use continuous and discrete latent variable model respectively.}
\algabb achieves strong performance compared with baselines.
 }
\scriptsize
\setlength\tabcolsep{1.5pt}
\label{tab:MuJoCo_results}
\centering
\begin{tabular}{p{2cm}p{2.2cm}p{2cm}p{1.8cm}p{1.8cm}p{1.8cm}p{1.8cm}p{1.8cm}}
\toprule
& & HalfCheetah & Reacher & Humanoid-ET & Pendulum & I-Pendulum \\ 
\midrule  
\multirow{4}{*}{Model-Based RL} & ME-TRPO$^*$ & 2283.7$\pm$900.4 & -13.4$\pm$5.2 & 72.9$\pm$8.9 & \textbf{177.3$\pm$1.9} & -126.2$\pm$86.6\\
& PETS-RS$^*$  & 966.9$\pm$471.6 & -40.1$\pm$6.9 & 109.6$\pm$102.6 & 167.9$\pm$35.8 & -12.1$\pm$25.1\\
& PETS-CEM$^*$  & 2795.3$\pm$879.9 & -12.3$\pm$5.2 & 110.8$\pm$90.1 & 167.4$\pm$53.0 & -20.5$\pm$28.9\\
& Best MBBL & 3639.0$\pm$1135.8 & \textbf{-4.1$\pm$0.1} & 1377.0$\pm$150.4 & \textbf{177.3$\pm$1.9} & \textbf{0.0$\pm$0.0}\\
\midrule
\multirow{3}{*}{Model-Free RL} & PPO$^*$ & 17.2$\pm$84.4 & -17.2$\pm$0.9 & 451.4$\pm$39.1 & 163.4$\pm$8.0 & -40.8$\pm$21.0 \\
& TRPO$^*$ & -12.0$\pm$85.5 & -10.1$\pm$0.6 & 289.8$\pm$5.2 & 166.7$\pm$7.3 & -27.6$\pm$15.8 \\
& SAC$^*$ (3-layer)  & 4000.7$\pm$202.1 & -6.4$\pm$0.5 & \textbf{1794.4$\pm$458.3} & 168.2$\pm$9.5 & -0.2$\pm$0.1\\
\midrule
\multirow{5}{*}{Representation RL} & DeepSF & 4180.4$\pm$113.8 & -16.8$\pm$3.6 & 168.6$\pm$5.1 & 168.6$\pm$5.1 & -0.2$\pm$0.3\\
& SPEDE & 4210.3$\pm$92.6 & -7.2$\pm$1.1 & 886.9$\pm$95.2 & 169.5$\pm$0.6 & 0.0$\pm$0.0\\
& {\bf \algabb-C} & \textbf{5557.6$\pm$439.5} & \textbf{-5.8$\pm$0.3} & {1086$\pm$278.2} & 167.1$\pm$3.1 & \textbf{0.0$\pm$0.0} \\
& {\bf \algabb-D} & \textbf{4616.5$\pm$261.5} & \textbf{-6.0$\pm$0.2} & 1359.2 $\pm$198.6  & 170.2 $\pm$ 4.2 & \textbf{0.0$\pm$0.0} \\
\bottomrule 
\end{tabular}
\centering
\begin{tabular}{p{2cm}p{2.2cm}p{2cm}p{1.8cm}p{1.8cm}p{1.8cm}p{1.8cm}p{1.8cm}}
& & Ant-ET & Hopper-ET & S-Humanoid-ET & CartPole & Walker-ET \\ 
\midrule  
\multirow{4}{*}{Model-Based RL} & ME-TRPO$^*$ & 42.6$\pm$21.1 & 1272.5$\pm$500.9 & -154.9$\pm$534.3 & 160.1$\pm$69.1 & -1609.3$\pm$657.5\\
& PETS-RS$^*$ & 130.0$\pm$148.1 &  205.8$\pm$36.5 & 320.7$\pm$182.2 & 195.0$\pm$28.0 & 312.5$\pm$493.4 \\
& PETS-CEM$^*$ & 81.6$\pm$145.8 & 129.3$\pm$36.0 & 355.1$\pm$157.1 & 195.5$\pm$3.0 & 260.2$\pm$536.9 \\
& Best MBBL & 275.4$\pm$309.1 & 1272.5$\pm$500.9 & \textbf{1084.3$\pm$77.0} & \textbf{200.0$\pm$0.0} & 312.5$\pm$493.4\\
\midrule
\multirow{3}{*}{Model-Free RL} & PPO$^*$ & 80.1$\pm$17.3  & 758.0$\pm$62.0 & 454.3$\pm$36.7 & 86.5$\pm$7.8 & 306.1$\pm$17.2\\
& TRPO$^*$ & 116.8$\pm$47.3  & 237.4$\pm$33.5 & 281.3$\pm$10.9 & 47.3$\pm$15.7 & 229.5$\pm$27.1\\
& SAC$^*$ (3-layer) & 2012.7$\pm$571.3  & 1815.5$\pm$655.1 & 834.6$\pm$313.1 & \textbf{199.4$\pm$0.4} & \textbf{2216.4$\pm$678.7}\\
\midrule
\multirow{4}{*}{Representation RL} & DeepSF & 768.1$\pm$44.1  & 548.9$\pm$253.3 & 533.8$\pm$154.9 & 194.5$\pm$5.8 & 165.6$\pm$127.9\\
& SPEDE & 806.2$\pm$60.2  & 732.2$\pm$263.9 & 986.4$\pm$154.7 & 138.2$\pm$39.5 & 501.6$\pm$204.0\\
& {\bf \algabb-C} & \textbf{2511.8$\pm$460.0} & \textbf{2204.8$\pm$496.0} & 963.1$\pm$45.1 & \textbf{200.7$\pm$0.2} & \textbf{2523.5$\pm$333.9} \\
& {{\bf \algabb-D}} & {\textbf{2436.0$\pm$603.1}} & \textbf{2187.5$\pm$453.6} & 956.8$\pm$ 87.5 & \textbf{198.5 $\pm$ 2.0} & \textbf{2209.0$\pm$589.2} \\
\bottomrule 
\end{tabular}
\end{table*}

\vspace{-2mm}
\subsection{Implementation Details} 
\label{sec:exp-implementation}
\vspace{-2mm}
As discussed, the latent variable representation is learned by minimizing the ELBO \eq{eq:ELBO}. 
We consider two practical implementations. The first one applies a continuous latent variable model, where the distributions are approximated using Gaussian with parameterized mean and variance, similarly to \citep{hafner2019learning}. We call this method \algabbc. 
The second implementation considers using a discrete sparse latent variable model \citep{hafner2019learning}, which we call \algabbd.  
As discussed in line 7 of Algorithm~\ref{alg:online_algorithm-main}, we apply a planning algorithm with the learned latent representation to improve the policy. 
We use \emph{Soft Actor Critic (SAC)} \citep{haarnoja2018soft} as our planner, 
where the critic is parameterized as shown in \eq{eq:exp_q}. 
In practice, 
we find that it is beneficial to have more updates for the latent variable model than critic.   
We also use a target network for the latent variable model to stabilize training.

\vspace{-2mm}
\subsection{Dense-Reward MuJoCo Benchmarks} 
\vspace{-2mm}

We first conduct experiments on dense-reward Mujoco locomotion control tasks, 
which are commonly used test domains for both model-free and model-based RL algorithms. 
We compare \algabb with model-based algorithms, including ME-TRPO \citep{kurutach2018model}, PETS \citep{chua2018deep}, and the best model-based results from~\citep{wang2019benchmarking}, among 9 baselines~\citep{luo2018algorithmic,deisenroth2011pilco,heess2015learning,clavera2018model, nagabandi2018neural,tassa2012synthesis,levine2014learning}, as well as model-free algorithms, including PPO \citep{schulman2017proximal}, TRPO \citep{schulman2015trust} and SAC \citep{haarnoja2018soft}. 

We compare all algorithms after running $200K$ environment steps. 
Table~\ref{tab:MuJoCo_results} presents all experiment results, where all results are averaged over 4 random seeds.   
In practice we found \algabbc provides comparable or better performance (see Figure~\ref{fig:mujoco} for example), so that we report its result for \algabb in the table. 
We present the best model-based RL performance for comparison.  
The results clearly show that \algabb provides significant better or comparable performance compared to  all model-based algorithms. 
In particular, 
in the most challenging domains such as Walker and Ant, 
most model-based methods completely fail the task, while \algabb achieves the state-of-the-art performance in contrast. 
Furthermore, 
\algabb show dominant performance in all domains
comparing to two representative representation learning based RL methods, Deep Successor Feature~(DeepSF)~\citep{barreto2017successor} and SPEDE~\citep{ren2022free}. 
\algabb also achieves better performance than the strongest model-free algorithm SAC in most challenging domains except Humanoid. 

Finally, we provide learning curves of \algabbc and \algabbd in comparison to SAC in Figure~\ref{fig:mujoco}, 
which clearly shows that comparing to the SOTA model-free baseline SAC, \algabb enjoys great sample efficiency in these tasks. 

\begin{figure*}[thb]
{\includegraphics[width=3.3cm]{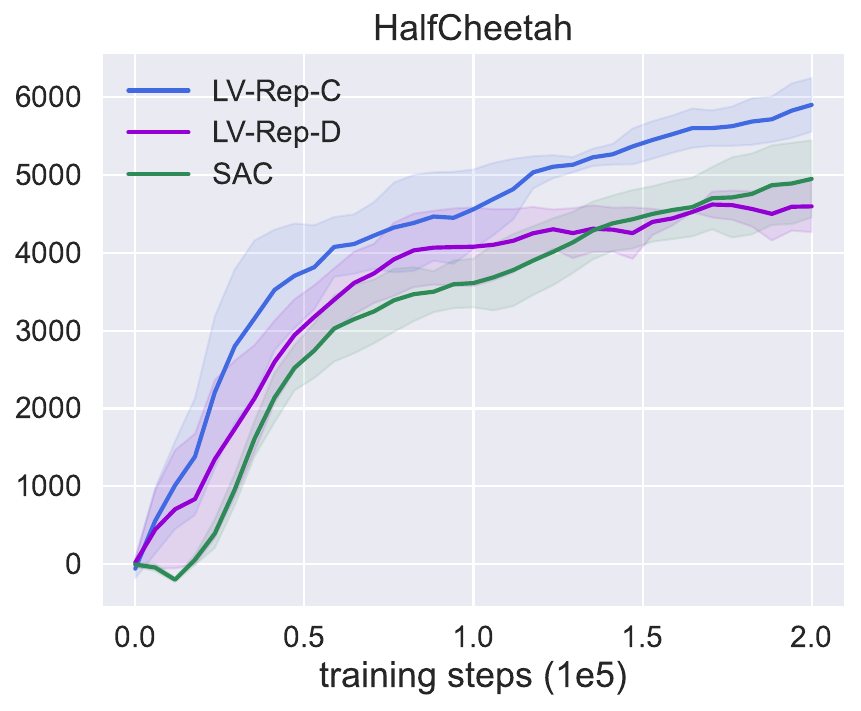}
\label{fig:gym-cheetah}
}
{\includegraphics[width=3.3cm]{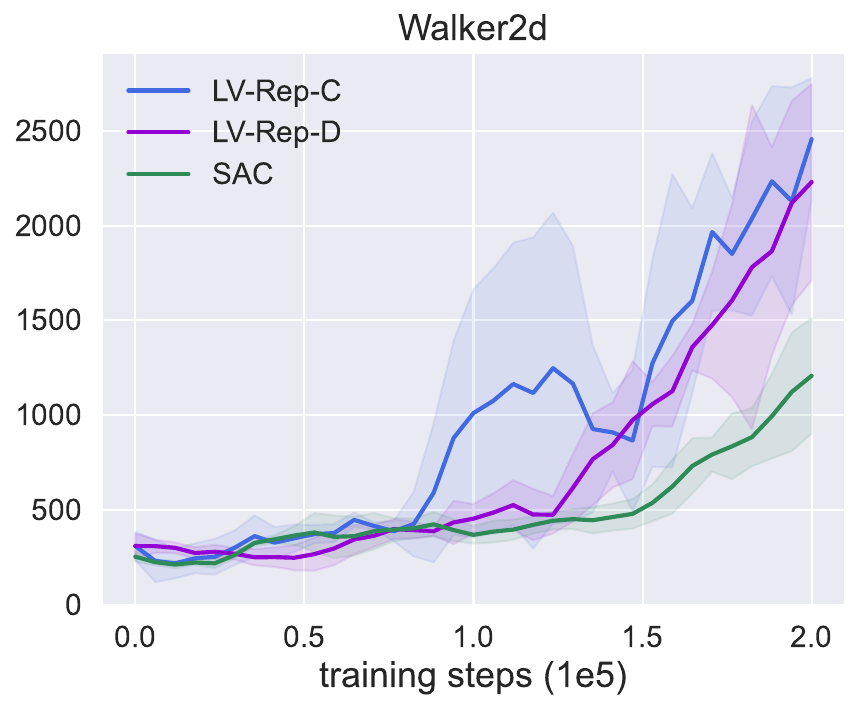}
\label{fig:gym-walker}
}
{\includegraphics[width=3.3cm]{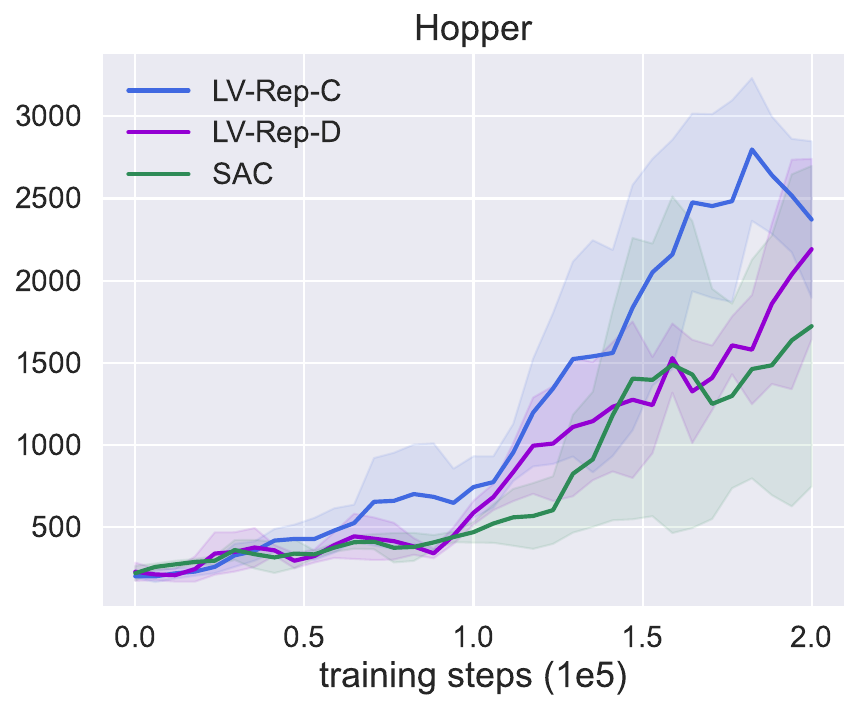}
\label{fig:gym-hopper}
}
{\includegraphics[width=3.3cm]{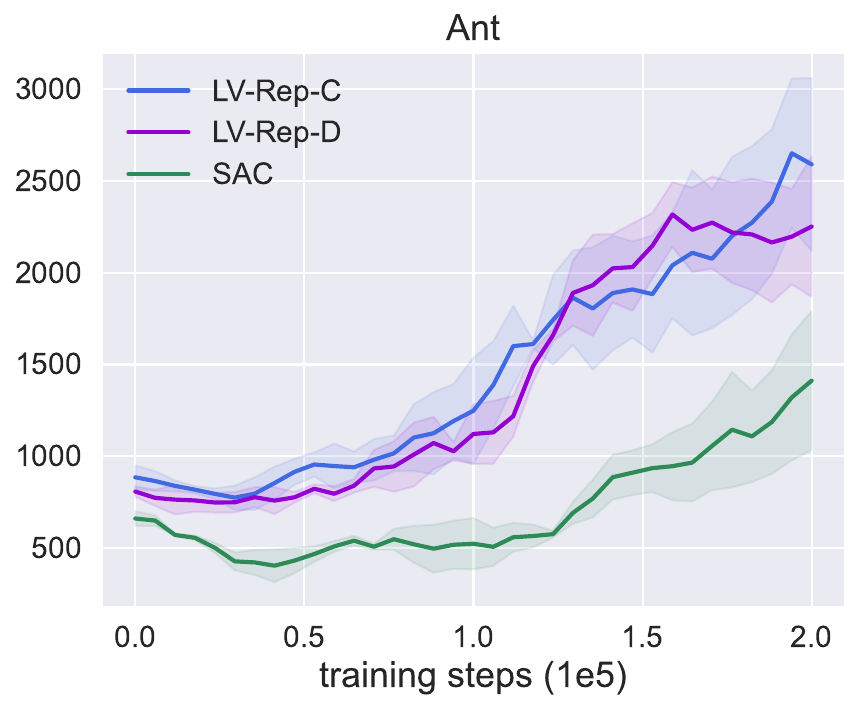}
\label{fig:gym-ant}
}
\caption{
We show the learning curves in Mujoco control compared to the baseline algorithms. 
The $x$-axis shows the training iterations and $y$-axis shows the performance. 
All plots are averaged over 4 random seeds. The shaded area shows the standard error.  
We only compare to SAC as it has the best overall performance in all baseline methods.}
\label{fig:mujoco}
\end{figure*}

\subsection{Sparse-Reward DeepMind Control Suite} 

In this experiment we show the effectiveness of  our proposed methods in sparse reward problems. 
We compare \algabb with the state-of-the-art model-free RL methods including SAC and PPO. Since the proposed~\algabb significantly dominates all the model-based RL algorithms in MuJoCo from~\citet{wang2019benchmarking}, we consider a different model-based RL method, \ie, Dreamer \citep{hafner2019learning}, and add another representation-based RL methods, \ie, Proto-RL~\citep{yarats2021reinforcement}, besides DeepSF~\citep{barreto2017successor}.

We compare all algorithms after running 200K environment steps across 4 random seeds. 
Results are presented in Table~\ref{tab:DM_results}. 
We report the result of \algabbc for \algabb as it gives better empirical performance. 
We can clearly observe that \algabb dominates the performance across all domains. 
In relatively dense-reward problems, \emph{cheetah-run} and \emph{walker-run}, \algabb outperforms all  baselines by a large margin. 
Remarkably, for sparse reward problems, \emph{hopper-hop} and \emph{humanoid-run}, 
\algabb provides reasonable results while other methods do not even start learning. 

\revise{
We also plot the learning curves of \algabb with all competitors
in Figure~\ref{fig:dmc}. 
This shows that \algabb outperforms other baselines in terms of both sample efficiency and final performance. 
}

\begin{table*}[t]
\caption{\footnotesize Performance of on various Deepmind Suite Control tasks. All the results are averaged across four random seeds and a window size of 10K. Comparing with SAC, our method achieves even better performance on sparse-reward tasks.
}
\scriptsize
\setlength\tabcolsep{3.5pt}
\label{tab:DM_results}
\centering
\begin{tabular}{p{2cm}p{1cm}p{2cm}p{2cm}p{2cm}p{2cm}}
\toprule
& & cheetah\_run &  walker\_run & humanoid\_run & hopper\_hop \\ 
\midrule
\multirow{1}{*}{Model-Based RL} & Dreamer & 
 542.0 $\pm$ 27.7 &  337.7$\pm$67.2 &  1.0$\pm$0.2 & 46.1$\pm$17.3\\
\midrule
\multirow{2}{*}{Model-Free RL} & PPO & 227.7$\pm$57.9 &  51.6$\pm$1.5 & 1.1$\pm$0.0 & 0.7$\pm$0.8\\
& SAC  & 453.4$\pm$57.9 &  488.5$\pm$40.2 & 1.1$\pm$0.1 & 10.8$\pm$6.6\\
\midrule  
\multirow{3}{*}{Representation RL} & DeepSF & 295.3$\pm$43.5 &  27.9$\pm$2.2 & 0.9$\pm$0.1 & 0.3$\pm$0.1 \\
& Proto RL &  305.5$\pm$37.9 &  433.5$\pm$56.8 &  0.3$\pm$0.6  & 1.0$\pm$0.2\\
& {\bf \algabb} & \textbf{639.3$\pm$24.5} &  \textbf{724.2$\pm$37.8} &  \textbf{11.8$\pm$6.8} & \textbf{72.9$\pm$40.6}\\
\bottomrule 
\end{tabular}
\end{table*}

\begin{figure*}[thb]
{\includegraphics[width=3.3cm]{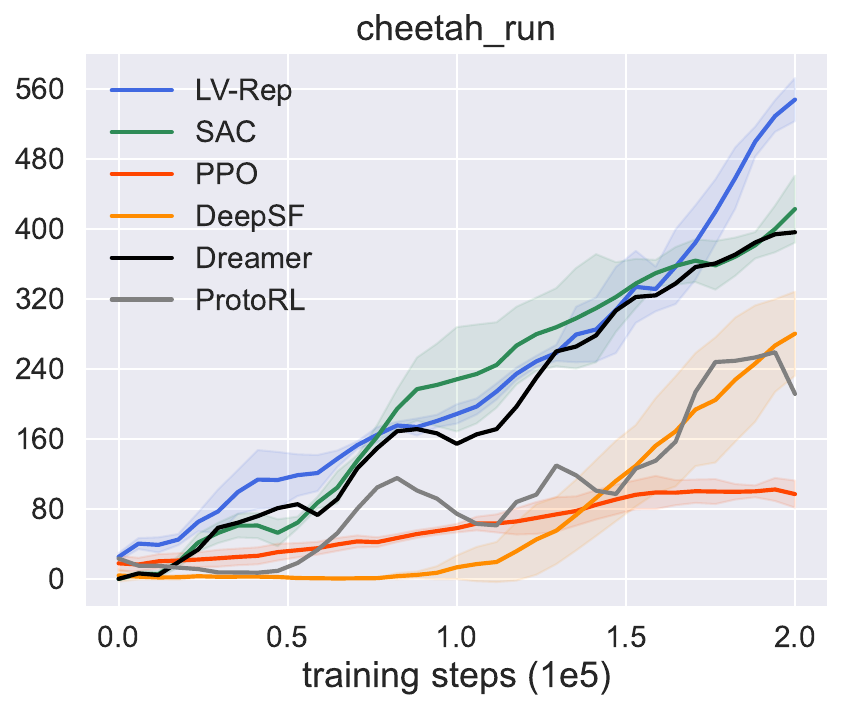}
\label{fig:dmc-cheetah}
}
{\includegraphics[width=3.3cm]{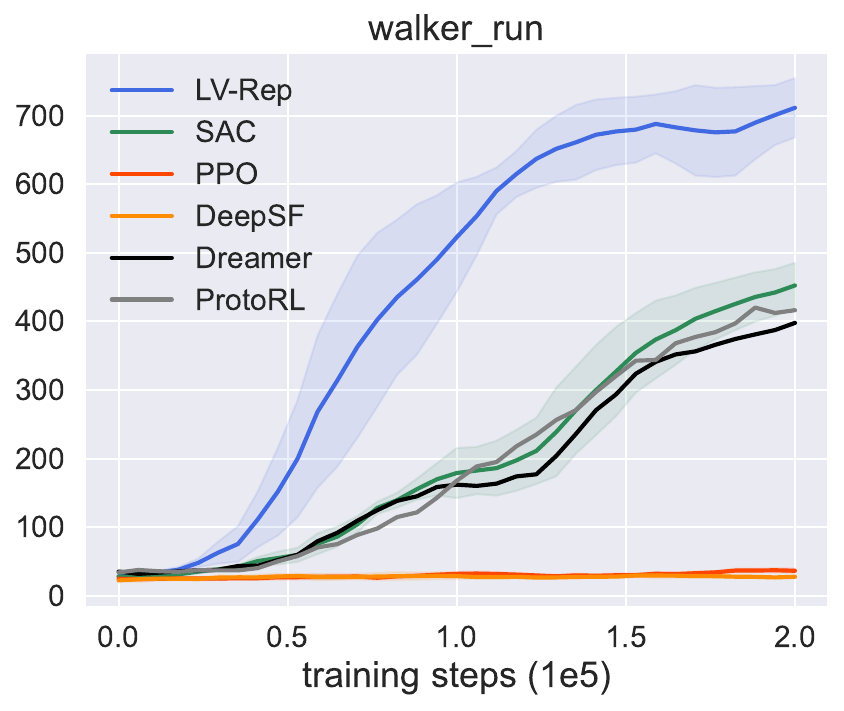}
\label{fig:dmc-walker}
}
{\includegraphics[width=3.3cm]{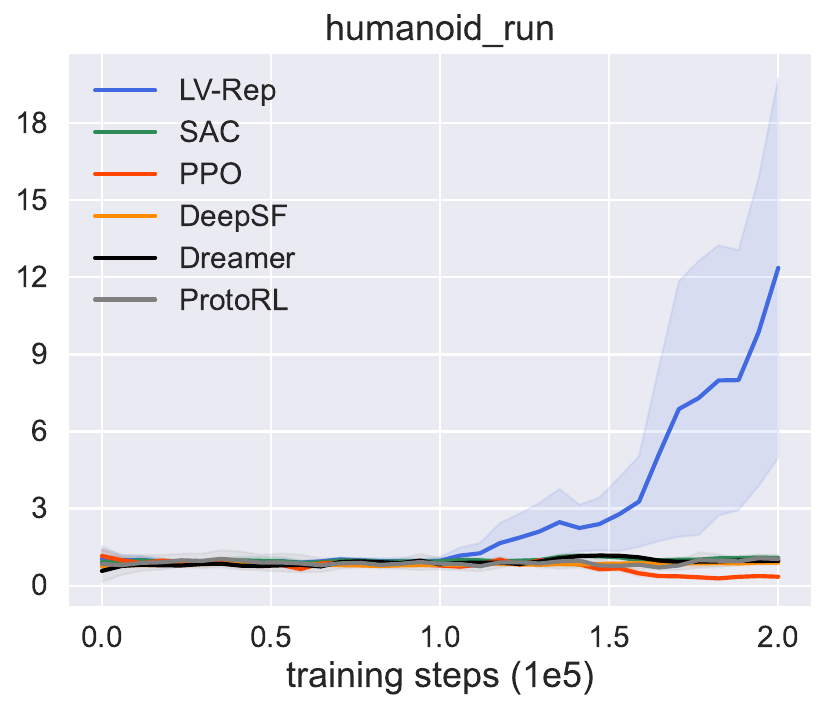}
\label{fig:dmc-humanoid}
}
{\includegraphics[width=3.3cm]{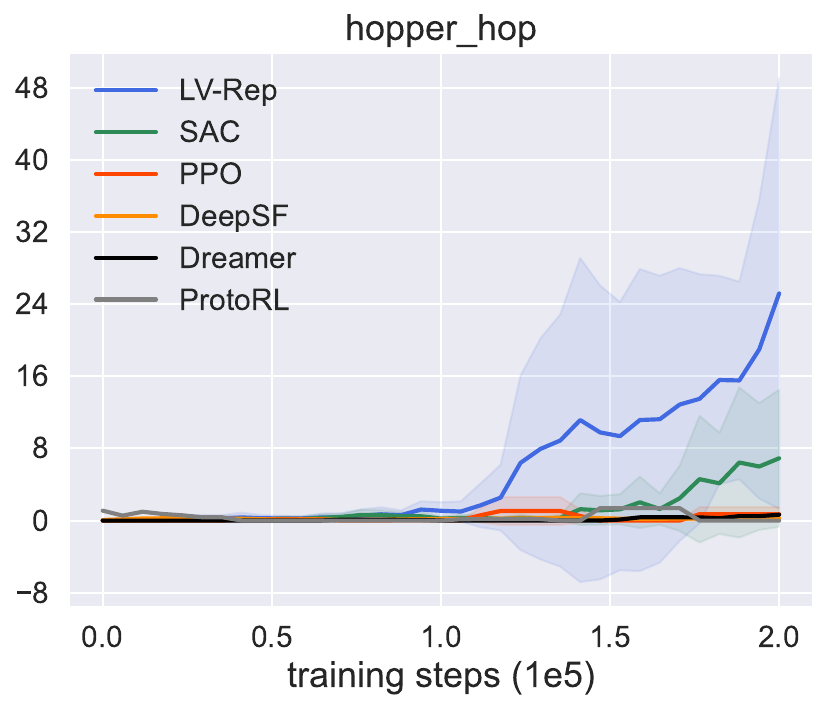}
\label{fig:dmc-hopper}
}
\caption{
\revise{We show the results in DeepMind Control Suite compared to the baseline algorithms. 
The $x$-axis shows the training iterations and $y$-axis shows the performance. 
All plots are averaged over 4 random seeds. The shaded area shows the standard error.}  
}
\label{fig:dmc}
\end{figure*}

\vspace{-2mm}
\section{Conclusion}
\vspace{-2mm}
In this paper, we reveal the representation view of latent variable dynamics model, which induces the~\emph{\AlgName~(\algabb)}. Based on the~\algabb, a new provable and practical algorithm for reinforcement learning is proposed, achieving the balance between flexibility and efficiency in terms of statistical complexity, with tractable learning, planning and exploration. We provide rigorous theoretical analysis, which is applicable for~\algabb with both finite- and infinite-dimension and comprehensive empirical justification, which demonstrates the superior performances. 

\bibliography{ref}
\bibliographystyle{iclr2023_conference}

\newpage
\appendix

\section{More Related Work}\label{appendix:more_related}

Our method is also closely related to the model-based reinforcement learning. 
These methods maintain an estimation of the dynamics and reward from the data, and extract the optimal policy via planning modules. The major differences among these methods are in terms of {\bf i)}, model parameterization and learning objectives, and {\bf ii)}, the approximated algorithms used for planning.

Specifically, Gaussian processes are exploited in~\citep{deisenroth2011pilco}. A stochastic deep dynamics with restricts Gaussian noise assumption is widely used~\citep{heess2015learning,kurutach2018model,chua2018deep,clavera2018model, nagabandi2018neural}.
~\citet{hafner2019dream, hafner2019learning, hafner2020mastering, lee2020stochastic} recently exploits recurrent latent state space model, but focused on Partially Observable MDP setting, which is beyond the scope of our paper. Different approximated planning algorithms, including Dyna-style, shooting, and policy search with backpropagation through time, have been tailored in these methods. Please refer to~\citep{wang2019benchmarking} for detailed discussion. 



As we discussed in~\Secref{sec:intro}, these algorithms did not balance the flexibility in modeling and tractability in planning and exploration, which may lead to sub-optimal performances. While the proposed~\algabb not only is more flexible beyond Gaussian noise assumption, but also lead to provable and tractable learning, planning, and exploration, and thus, achieving better empirical performances. 
\section{Algorithms and Theoretical Analysis for Offline Exploitation}
\label{sec:offline}
\begin{algorithm}[ht] 
\caption{Offline Exploitation with \algabb} \label{alg:offline_algorithm}
\begin{algorithmic}[1]
  \State \textbf{Input:} Model class $\mathcal{P}=\{(p(z|s, a), p(s^\prime|z))\}, \mathcal{Q} = \{q(z|s, a, s^\prime)\}$, Offline Dataset $\mathcal{D}_n$.
  \State Learn the latent variable model $\hat{p}(z|s, a)$ with $\mathcal{D}_n $ via maximizing the ELBO defined in \eqref{eq:ELBO}, and obtain the learned model $\hat{T}$.\label{line:representation_offline} 
  \State Set the exploitation penalty $\hat{b}(s,a)$ as \eqref{eq:empirical_bonus}.\label{line:penalty}
  \State Obtain policy \label{line:offline_plan}
    $ \hat{\pi}=\mathop{\arg\max}_{\pi}V^{\pi}_{\hat{T},r-\hat{b}}$.
  \State \textbf{Return } $\hat{\pi}$.
\end{algorithmic}
\end{algorithm}
In this section, we show the algorithms for offline exploitation. For offline exploitation, we have the access to a offline dataset, which we assume is collected from the stationary distribution of the fixed behavior policy $\pi_b$, which we will denote as $\rho$. And we are not allowed to interact with the environments to collect new data. The only difference between the algorithms for offline exploitation and online exploration is that, as we do not have access to the new data from the environment, we cannot further explore the state-action pair that the offline dataset do not cover. Hence, we need to penalize the visitation to the unseen state action pair to avoid the risky behavior.
\section{Implementation Details}
\label{sec:appendix-impl}

Our algorithm is implemented using Pytorch. 
For DeepMind control, 
we use an open source implementation as our SAC baseline \citep{pytorch_sac}. 
\revise{
As discussed in Section~\ref{sec:exp-implementation}, 
we find it is beneficial to have more updates for the latent variable model than critic in practice. 
We use a parameter \emph{feature-updates-per-step} that decides how many updates are performed for the latent variable model at each training step.
For all Mujoco and DeepMind Control experiments, 
we tune this parameter from $\{1,3,5,10,20\}$ and report the best result. 
}
Finally, 
in Table~\ref{tab:hyper_online}, we list all other hyperparameters and network architecture we use for our experiments.

{\color{black} For evaluation in Mujoco, in each evaluation (every 5K steps) we test our algorithm for 10 episodes. We average the results over the last 4 evaluations and 4 random seeds. For Dreamer and Proto-RL, we change their network from CNN to 3-layer MLP and disable the image data augmentation part (since we test on the state space). We tune some of their hyperparameters (e.g., exploration steps in Proto-RL) and report the best number across our runs.}

\begin{table*}[h]
\caption{Hyperparameters used for \algabb in all the environments in MuJoCo and DM Control Suite.}
\footnotesize
\setlength\tabcolsep{3.5pt}
\label{tab:hyper_online}
\centering
\begin{tabular}{p{6cm}p{3cm}p{5cm}p{2.5cm}p{2.5cm}p{2cm}p{2cm}}
\toprule
& Hyperparameter Value \\ 
\midrule
Actor lr & 0.0003 \\
Model lr & 0.0003 \\
Actor Network Size (MuJoCo) & (256, 256) \\
Actor Network Size (DM Control) & (1024, 1024) \\
\algabb Feature Embedding Dim (MuJoCo) & 256 \\
\algabb Feature Embedding Dim (DM Control) & 1024  \\
ERP Embedding Network Size (DM Control) & (1024, 1024, 1024) \\
Critic Network Size (MuJoCo) & (256, 256, 1) \\
Critic Network Size (DM Control) & (1024, 1024, 1) \\
Discount & 0.99\\
Critic Target Update Tau & 0.005 \\
Latent Variable Target Update Tau & 0.005 \\
Batch Size & 256 \\
\bottomrule 
\end{tabular}
\end{table*}

\newpage
\section{Technical Backgrounds}\label{appendix:tech_back}
In this section, we introduce several important concepts from functional analysis that will be repeatedly used in our theoretical analysis. We start from the concept of the $\mathbb{R}$-vector space.
\begin{definition}[$\mathbb{R}$-vector space \citep{steinwart2008support}]
    An $\mathbb{R}$-vector space is defined as a triple $(E, +, \cdot)$, where $E$ is a non-empty set, $+:E\times E\to E$ and $\cdot:\mathbb{R}\times E \to E$ satisfies the following properties:
    \begin{itemize}
        \item $\forall x, y, z \in E$, $(x + y) + z = x + (y + z)$.
        \item $\forall x, y\in E$, $x + y = y + x$.
        \item $\exists$ an element $0\in E$, such that $\forall x\in E$, $x + 0 = x$.
        \item $\forall x\in E$, $\exists -x \in E$, such that $x + (-x) = 0$.
        \item $\forall \alpha, \beta\in\mathbb{R}, x\in E$, $(\alpha\beta)\cdot x = \alpha\cdot(\beta\cdot x)$.
        \item $\forall x\in E$, $1\cdot x = x$.
        \item $\forall \alpha, \beta\in\mathbb{R}, x\in E$, $(\alpha + \beta)\cdot x = \alpha\cdot x + \beta\cdot x$.
        \item $\forall \alpha\in\mathbb{R}, x, y\in E$, $\alpha\cdot(x + y) = \alpha \cdot x + \alpha \cdot y$.
    \end{itemize}
    The $\cdot$ denotes the scalar multiplication will be omitted if there will be no confusion.
\end{definition}
\begin{definition}[Norm and Banach Space \citep{steinwart2008support}] Let $E$ be a $\mathbb{R}$-vector space. A map $\|\cdot\|:E \to [0, \infty)$ is a norm on $E$ if
\begin{itemize}
    \item $\|x\| = 0 \Leftrightarrow x = 0$.
    \item $\forall \alpha\in\mathbb{R}, x\in E, \|\alpha x\| = \alpha \|x\|$.
    \item $\forall x, y\in E$, $\|x + y\|\leq \|x\| + \|y\|$.
\end{itemize}
In this case, the pair $(E, \|\cdot\|)$ is called a Banach space, and we use $E$ to denote the Banach space for simplicity if there will be no confusion.    
\end{definition}

\begin{definition}[Bounded Linear Operator \citep{steinwart2008support}]
    Let $E$ and $F$ be two Banach spaces. A map $S:E\to F$ is a bounded linear operator if
    \begin{itemize}
        \item $\forall x, y\in E, S(x + y) = S x + S y$.
        \item $\forall \alpha \in \mathbb{R}, x \in E, S(\alpha x) = \alpha(Sx)$.
        \item $S 0 = 0$.
    \end{itemize}
    Furthermore, $S$ satisfies the following properties
    \begin{itemize}
        \item $\exists c\in [0, \infty]$, such that $\forall x\in E$, $\|Sx\|_{F} \leq c\|x\|_E$.
    \end{itemize}
    Note that, all of the bounded linear operator itself can define an $\mathbb{R}$-vector space, and we can define an operator norm of $S$ as $\|S\|_{\mathrm{op}}:=\sup_{x\in \mathcal{B}_E} \|Sx\|_{F}$, where $\mathcal{B}_{E} = \{x\in E : \|x\|_E \leq 1\}$ is the unit ball of $E$.
\end{definition}

\begin{definition}[Compact Operator \citep{steinwart2008support}]
    A bounded linear operator $S:E\to F$ is compact if the closure of $S\mathcal{B}_E$ is compact in $F$.
\end{definition}
\begin{definition}[Inner Product and Hilbert Space \citep{steinwart2008support}]
    A map $\langle \cdot, \cdot \rangle: \mathcal{H}\times \mathcal{H} \to \mathbb{R}$ on a $\mathbb{R}$-vector space is an inner product if
    \begin{itemize}
        \item $\forall x, y, z\in \mathcal{H}$, $\langle x + y, z\rangle = \langle x, z\rangle + \langle y, z\rangle$.
        \item $\forall \alpha\in\mathbb{R}, x, y\in \mathcal{H}$, $\langle \alpha x, y\rangle=\alpha\langle x, y\rangle$.
        \item $\forall x, y\in \mathcal{H}$, $\langle x, y\rangle = \langle y, x\rangle$.
        \item $\forall x \in \mathcal{H}$, $\langle x, x\rangle \geq 0$, and $\langle x, x\rangle = 0 \Leftrightarrow x = 0$.
    \end{itemize}
    If the norm induced by the inner product $\|x\|_{\mathcal{H}}:=\sqrt{\langle x, x\rangle}$ is complete, the pair $(\mathcal{H}, \langle \cdot, \cdot\rangle)$ is called a Hilbert space. We sometimes use $\mathcal{H}$ to denote the Hilbert space and use $\langle \cdot, \cdot\rangle_\mathcal{H}$ to distinguish between different inner products. Note that, the inner product satisfies the following Cauchy-Schwartz inequality:
    \begin{align*}
        \forall x, y\in \mathcal{H},\quad |\langle x, y\rangle_{\mathcal{H}}|\leq \|x\|_{\mathcal{H}}\|y\|_{\mathcal{H}}.
    \end{align*}
\end{definition}
\begin{definition}[(Self-)Adjoint Operator \citep{steinwart2008support}]
    Let $H_1$ and $H_2$ be two Hilbert spaces, For the operator $S:\mathcal{H}_1 \to \mathcal{H}_2$, the adjoint operator $S^*:\mathcal{H}_2 \to \mathcal{H}_1$ is defined by
    \begin{align*}
        \forall x\in \mathcal{H}_1, y\in \mathcal{H}_2, \quad \langle Sx, y \rangle_{\mathcal{H}_2} = \langle x, S^* y\rangle_{\mathcal{H}_1}.
    \end{align*}
    Furthermore, $S$ is a self-adjoint operator, if $S:\mathcal{H}_1 \to \mathcal{H}_1$, and
    \begin{align*}
        \forall x, y\in \mathcal{H}_1, \quad \langle Sx, y\rangle_{\mathcal{H}_1} = \langle x, Sy \rangle_{\mathcal{H}_1}.
    \end{align*}
    For self-adjoint operator $S$, if $\langle Sx, x\rangle\geq 0$, $S$ is called a positive semi-definite operator, and if $\langle Sx, x\rangle > 0$, $S$ is called a positive definite operator.
\end{definition}
\begin{remark}
    Note that, the definition of the adjoint operator can be generalized to Banach spaces. But adjoint operators for Hilbert spaces are sufficient for our purposes. So we omit the definition of the adjoint operators on Banach spaces.
\end{remark}

\begin{definition}[Orthonormal System and Orthonormal Basis \citep{steinwart2008support}]
    For the Hilbert space $\mathcal{H}$, the family $\{e_i\}_{i\in I}, e_i\in \mathcal{H}$ is an orthonormal system if $\langle e_i, e_i \rangle = 1$, and $\langle e_i, e_j\rangle = 0$ if $i\neq j$. Furthermore, if the closure of the linear span of $\{e_i\}_{i\in I}$ equals to $\mathcal{H}$, it is an orthonormal basis. 
    Note that, each Hilbert space $H$ has an orthonormal basis, and if $\mathcal{H}$ is separable, $\mathcal{H}$ has a countable orthonormal basis. Furthermore, $\forall x\in H$, we have
    \begin{align*}
        x = \sum_{i\in I}\langle x, e_i\rangle e_i.
    \end{align*}
\end{definition}

\begin{theorem}[Spectral Theorem \citep{steinwart2008support}]
    Let $\mathcal{H}$ be a Hilbert space and $T:\mathcal{H}\to \mathcal{H}$ is compact and self-adjoint. Then their exists at most countable $\{\mu_i(T)\}_{i\in I}$ converging to $0$ such that $|\mu_1(T)| \geq |\mu_2(T)| \geq \cdots > 0$ and an orthonormal system $\{e_i\}_{i\in I}$, such that
    \begin{align*}
        \forall x\in \mathcal{H}, \quad Tx = \sum_{i\in I}\mu_i(T) \langle x, e_i\rangle_{\mathcal{H}} e_i.
    \end{align*}
    Here $\{\mu_i(T)\}_{i\in I}$ can be viewed as the set of eigen-value of $T$, as $T e_i = \mu_i(T)$.
\end{theorem}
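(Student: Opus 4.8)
The plan is to construct the orthonormal system $\cbr{e_i}$ and the eigenvalues $\cbr{\mu_i(T)}$ one pair at a time by a greedy inductive procedure, and then verify that the resulting series reproduces $T$. The whole argument reduces to a single existence statement: every nonzero compact self-adjoint $T$ admits an eigenvalue whose modulus equals $\nbr{T}_{\mathrm{op}}$. Granting this, I would peel off the top eigenpair, restrict $T$ to the orthogonal complement of its eigenvector, observe that the restriction is again compact and self-adjoint with no larger norm, and recurse. The eigenvalues produced this way are automatically ordered by decreasing modulus.

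First I would establish the variational identity $\nbr{T}_{\mathrm{op}} = \sup_{x\in\mathcal{B}_{\mathcal{H}}} \abr{\inner{Tx}{x}_{\mathcal{H}}}$, valid for self-adjoint $T$ on a real Hilbert space. One direction is Cauchy--Schwarz; the reverse follows from the polarization identity $\inner{T(x+y)}{x+y}_{\mathcal{H}} - \inner{T(x-y)}{x-y}_{\mathcal{H}} = 4\inner{Tx}{y}_{\mathcal{H}}$ (using $\inner{Tx}{y}_{\mathcal{H}}=\inner{Ty}{x}_{\mathcal{H}}$) together with the parallelogram law, applied to $y = Tx/\nbr{Tx}$. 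This is the crux of the proof, since from it I would extract the first eigenvalue. Choosing a maximizing sequence with $\nbr{x_n}=1$ and $\inner{Tx_n}{x_n}_{\mathcal{H}}\to\mu_1$ where $\abr{\mu_1}=\nbr{T}_{\mathrm{op}}$, the estimate $\nbr{Tx_n - \mu_1 x_n}^2 = \nbr{Tx_n}^2 - 2\mu_1\inner{Tx_n}{x_n}_{\mathcal{H}} + \mu_1^2 \le 2\mu_1^2 - 2\mu_1\inner{Tx_n}{x_n}_{\mathcal{H}} \to 0$ shows $Tx_n - \mu_1 x_n \to 0$. Compactness of $T$ lets me pass to a subsequence along which $Tx_n$ converges to some $y$; then $\mu_1 x_n \to y$, so $x_n$ converges to $e_1 := y/\mu_1$, a unit vector with $Te_1 = \mu_1 e_1$. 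I expect this compactness step to be the main obstacle; everything after it is bookkeeping.

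Next I would iterate. Having produced orthonormal eigenvectors $e_1,\dots,e_k$ with eigenvalues $\mu_1,\dots,\mu_k$, I set $\mathcal{H}_k = \mathrm{span}\cbr{e_1,\dots,e_k}^{\perp}$. Self-adjointness makes $\mathcal{H}_k$ invariant under $T$, and the restriction $T|_{\mathcal{H}_k}$ is again compact and self-adjoint, so the existence step yields a new pair $(\mu_{k+1}, e_{k+1})$ with $\abr{\mu_{k+1}} = \nbr{T|_{\mathcal{H}_k}}_{\mathrm{op}} \le \abr{\mu_k}$. If $T|_{\mathcal{H}_k}=0$ at some stage, the process terminates with a finite system; otherwise it continues indefinitely. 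To see $\mu_i \to 0$, I note $\cbr{e_i}$ is bounded, so by compactness $\cbr{Te_i}=\cbr{\mu_i e_i}$ has a convergent subsequence; but $\nbr{\mu_i e_i - \mu_j e_j}^2 = \mu_i^2 + \mu_j^2$ for $i\neq j$, which cannot be Cauchy unless $\mu_i \to 0$.

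Finally I would verify the decomposition. For fixed $x$, write $r_n = x - \sum_{i=1}^{n}\inner{x}{e_i}_{\mathcal{H}} e_i$, the orthogonal projection of $x$ onto $\mathcal{H}_n$; Bessel's inequality gives $\nbr{r_n}\le\nbr{x}$. Since $r_n\in\mathcal{H}_n$, I have $\nbr{Tr_n} \le \nbr{T|_{\mathcal{H}_n}}_{\mathrm{op}}\nbr{r_n} = \abr{\mu_{n+1}}\nbr{r_n} \le \abr{\mu_{n+1}}\nbr{x} \to 0$. Using $Te_i = \mu_i e_i$ and linearity, $Tr_n = Tx - \sum_{i=1}^n \mu_i(T)\inner{x}{e_i}_{\mathcal{H}} e_i$, so letting $n\to\infty$ yields $Tx = \sum_{i} \mu_i(T)\inner{x}{e_i}_{\mathcal{H}} e_i$, as claimed. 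In the terminating case $T|_{\mathcal{H}_N}=0$ forces $Tr_N=0$, giving the finite sum directly.
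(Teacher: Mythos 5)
Your proposal is correct, and it is the standard variational proof (successive Rayleigh-quotient maximization, invariance of orthogonal complements under a self-adjoint operator, $\mu_i \to 0$ via compactness and the Pythagorean separation $\nbr{\mu_i e_i - \mu_j e_j}^2 = \mu_i^2 + \mu_j^2$, and the tail bound $\nbr{T r_n} \le \abr{\mu_{n+1}}\nbr{x}$), which is essentially the same argument as in \citet{steinwart2008support}, the textbook the paper cites for this theorem in lieu of giving a proof. All the key steps check out, including the crux identity $\nbr{T}_{\mathrm{op}} = \sup_{\nbr{x}\le 1}\abr{\inner{Tx}{x}_{\mathcal{H}}}$ via polarization and the parallelogram law, which is valid here since the paper works over $\mathbb{R}$-Hilbert spaces.
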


\begin{definition}[Trace and Trace class \citep{steinwart2008support}]
    For a compact and self-adjoint operator $T$, if $\sum_{i=1}^{\infty} \mu_i(T) < \infty$, we say $T$ is nuclear or of trace class, and define the nuclear norm and the trace as:
    \begin{align*}
        \|T\|_{*} = \mathrm{Tr}(T) = \sum_{i\in I} \mu_i(T).
    \end{align*}
\end{definition}

\begin{definition}[Hilbert-Schmidt Operator \citep{steinwart2008support}]
    Let $\mathcal{H}_1, \mathcal{H}_2$ be two Hilbert spaces. An operator $S:\mathcal{H}_1\to \mathcal{H}_2$ is Hilbert-Schmidt if 
    \begin{align*}
        \|S\|_{\mathrm{HS}} := \left(\sum_{i\in I} \|S e_i\|_{\mathcal{H}_2}^2\right)^{1/2} < \infty,
    \end{align*}
    where $\{e_i\}_{i\in I}$ is an arbitrary orthonormal basis of $\mathcal{H}_1$. Furthermore, the set of Hilbert-Schmidt operators defined on $\mathcal{H} \to \mathcal{H}$ where $\mathcal{H}$ is a Hilbert space is indeed a Hilbert space with the following inner product:
    \begin{align*}
        \langle T_1, T_2\rangle_{\mathrm{HS}(H)} = \sum_{i\in I} \langle T_1 e_i, T_2 e_i\rangle_{\mathcal{H}}, \quad T_1, T_2 \in \mathrm{HS}(\mathcal{H}),
    \end{align*}
    where $\{e_i\}_{i\in I}$ is an arbitrary orthonormal basis of $\mathcal{H}$.
\end{definition}

\begin{definition}[$L^2(\mu)$ space]
    Let $(\mathcal{X}, \mathcal{A}, \mu)$ be a measure space. The $L^2(\mu)$ space is defined as the Hilbert space consists of square-integrable function with respect to $\mu$, with inner product
    \begin{align*}
        \langle f, g\rangle_{L_2(\mu)} := \int_{\mathcal{X}} f g d \mu,
    \end{align*}
    and the norm
    \begin{align*}
        \|f\|_{L_2(\mu)} := \left(\int_{\mathcal{X}} f^2 d \mu\right)^{1/2}.
    \end{align*}
    Throughout the paper, $\mu$ is specified as the Lebesgue measure for continuous $\mathcal{X}$ and the counting measure for discrete $\mathcal{X}$. Specifically, when $\mathcal{X}$ is discrete, we can represent $f$ as a sequence $[f(x)]_{x\in\mathcal{X}}$, and the corresponding $L_2(\mu)$ inner product and $L_2(\mu)$ norm is identical to the $\ell^2$ inner product and $\ell^2$ norm, which is defined as
    \begin{align*}
        \langle f, g\rangle_{l^2} = \sum_{x\in\mathcal{X}} f(x) g(x), \quad \|f\|_{l^2} = \left(\sum_{x\in\mathcal{X}} f^2(x)\right)^{1/2},
    \end{align*}
    that is closely related to the inner product and norm of the Euclidean space.
\end{definition}

\begin{definition}[Kernel and Reproducing Kernel Hilbert Space (RKHS) \citep{aronszajn1950theory,paulsen2016introduction}]
    A function $k:\mathcal{X} \times \mathcal{X} \to \mathbb{R}$ is a kernel on non-empty set $\mathcal{X}$, if there exists a Hilbert space $\mathcal{H}$ and a feature map $\phi:\mathcal{X} \to \mathcal{H}$, such that $\forall x, x^\prime\in\mathcal{X}$, $k(x, x^\prime) = \langle \phi(x), \phi(x^\prime) \rangle_{\mathcal{H}}$. Furthermore, if $\forall n \geq 1, \{a_i\}_{i\in [n]}\subset \mathbb{R}$ and mutually distinct $\{x_i\}_{i\in [n]}$,
    \begin{align*}
        \sum_{i\in [n]} \sum_{j\in [n]} a_i a_j k(x_i, x_j) \geq 0,
    \end{align*}
    the kernel $k$ is said to be positive semi-definite. And if the inequality is strict, the kernel $k$ is said to be positive definite.

    Given the kernel $k$, the Hilbert space $\mathcal{H}_k$ consists of $\mathbb{R}$-valued function on non-empty set $\mathcal{X}$ is said to be a reproducing kernel Hilbert space associated with $k$ if the following two conditions hold:
    \begin{itemize}
        \item $\forall x\in\mathcal{X}$, $k(x, \cdot) \in \mathcal{H}_k$.
        \item Reproducing Property: $\forall x\in \mathcal{X}, f\in \mathcal{H}_k, f(x) = \langle f, k(x, \cdot)\rangle_{\mathcal{H}_k}$. 
    \end{itemize}
    Here $k$ is also called the reproducing kernel of $\mathcal{H}_k$. The RKHS norm of $f\in\mathcal{H}_k$ is defined as $\|f\|_{\mathcal{H}_k} := \sqrt{\langle f, f \rangle_{\mathcal{H}_k}}$. 
\end{definition}
Some of the well-known kernels include:
\begin{itemize}
    \item Linear Kernel: $k(x, x^\prime) = x^\top x^\prime$, where $x, x^\prime \in \mathbb{R}^d$;
    \item Polynomial Kernel: $k(x, x^\prime) = (1 + x^\top x^\prime)^n$, where $x, x^\prime\in\mathbb{R}^d$, $n\in \mathbb{N}^{+}$.
    \item Gaussian Kernel: $k(x, x^\prime) = \exp\left(-\frac{\|x-x^\prime\|_2^2}{\sigma^2}\right)$, where $x, x^\prime\in\mathbb{R}^d$, $\sigma > 0$ is the scale parameter.
    \item Mat\'ern Kernel: $k(x, x^\prime) = \frac{2^{1-\nu}}{\Gamma(\nu)} r^{\nu} B_{\nu} (r)$, where $x, x^\prime\in\mathbb{R}^d$, $\nu > 0$ is the smoothness parameter, $l > 0$ is the scale parameter, $r = \frac{\sqrt{2\nu}}{l}\|x - x^\prime\|$, $\Gamma(\cdot)$ is the Gamma function and $B_\nu(\cdot)$ is the modified Bessel function of the second kind.
\end{itemize}

\begin{theorem}[Mercer's Theorem \citep{riesz2012functional, steinwart2012mercer}]
    \label{thm:mercer}
    Let $(\mathcal{X}, \mathcal{A}, \mu)$ be a measure space with compact support $\mathcal{X}$ and strictly positive Borel measure $\phi$. $k$ is a continuous positive definite kernel defined on $\mathcal{X} \times \mathcal{X}$. Then there exists at most countable $\{\mu_i\}_{i\in I}$ with $\mu_1 \geq \mu_2 \geq \cdots > 0$ and $\{e_i\}_{i\in I}$ where $\{e_i\}_{i\in I}$ is the set of orthonormal basis of $L_2(\mu)$, such that
    \begin{align*}
        \forall x, x^\prime \in \mathcal{X}, \quad k(x, x^\prime) = \sum_{i\in I} \mu_i e_i(x) e_i(x^\prime),
    \end{align*}
    where the convergence is absolute and uniform. 
\end{theorem}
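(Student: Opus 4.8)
The plan is to realize $k$ as the kernel of an integral operator on $L_2(\mu)$ and read the expansion off the Spectral Theorem already stated in this appendix. First I would define $T_k : L_2(\mu) \to L_2(\mu)$ by $\rbr{T_k f}(x) = \int_{\mathcal{X}} k(x, x') f(x') \, d\mu(x')$. Since $\mathcal{X}$ is compact and $k$ is continuous, $k$ is bounded and lies in $L_2(\mathcal{X}\times\mathcal{X}, \mu\otimes\mu)$, so $T_k$ is Hilbert--Schmidt and hence compact. Symmetry of $k$ (a consequence of positive definiteness) makes $T_k$ self-adjoint, and passing from the finite positive-definiteness sums to the integral form $\inner{T_k f}{f}_{L_2(\mu)} = \int\int k(x,x') f(x) f(x')\, d\mu\, d\mu \ge 0$, by approximating $f$ with simple functions and using continuity of $k$, shows $T_k$ is positive. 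The Spectral Theorem then supplies an orthonormal system $\{e_i\}_{i\in I}$ of eigenfunctions with eigenvalues $\mu_i > 0$ (positivity excludes negative eigenvalues); strict positive definiteness forces $T_k$ to be injective, so $\ker T_k = \{0\}$ and $\{e_i\}_{i\in I}$ is in fact a complete orthonormal basis of $L_2(\mu)$.

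Next I would set up the two ingredients controlling convergence. Each eigenfunction has a continuous representative, since $e_i = \mu_i^{-1} T_k e_i$ and the right-hand side is continuous by continuity of $k$, compactness of $\mathcal{X}$, and dominated convergence. For the size control, consider the truncated remainder $r_n(x,x') := k(x,x') - \sum_{i=1}^n \mu_i e_i(x) e_i(x')$, which is the kernel of $T_k$ restricted to the orthogonal complement of $e_1,\dots,e_n$ and is therefore itself a continuous positive-semidefinite kernel. Nonnegativity on the diagonal gives $\sum_{i=1}^n \mu_i e_i(x)^2 \le k(x,x) \le M := \sup_{x} k(x,x) < \infty$ for all $n$ and $x$, so the diagonal series converges pointwise and is uniformly bounded. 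Absolute convergence of the off-diagonal series then follows from Cauchy--Schwarz, $\sum_{i=m}^n \mu_i \abr{e_i(x) e_i(x')} \le \rbr{\sum_{i=m}^n \mu_i e_i(x)^2}^{1/2}\rbr{\sum_{i=m}^n \mu_i e_i(x')^2}^{1/2}$.

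It remains to identify the pointwise limit with $k$ and to upgrade to uniform convergence, which I expect to be the crux. On the diagonal I would use the trace identity $\int_{\mathcal{X}} k(x,x)\, d\mu = \Tr(T_k) = \sum_{i\in I}\mu_i = \int_{\mathcal{X}} \sum_{i\in I} \mu_i e_i(x)^2\, d\mu$, where the last equality uses $\nbr{e_i}_{L_2(\mu)} = 1$ and monotone convergence; since $k(x,x) - \sum_i \mu_i e_i(x)^2$ is nonnegative with zero integral against the strictly positive measure $\mu$, it vanishes $\mu$-almost everywhere, and injectivity of $T_k$ (completeness of $\{e_i\}$) promotes this to equality at every $x$, so the continuous partial sums $g_n(x) = \sum_{i=1}^n \mu_i e_i(x)^2$ increase pointwise to the continuous limit $k(x,x)$. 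Dini's theorem on the compact set $\mathcal{X}$ then yields uniform convergence $g_n \to k(\cdot,\cdot)$ on the diagonal, and the Cauchy--Schwarz bound $\abr{r_n(x,x')} \le r_n(x,x)^{1/2} r_n(x',x')^{1/2} = \rbr{k(x,x) - g_n(x)}^{1/2}\rbr{k(x',x') - g_n(x')}^{1/2}$ transfers this to uniform and absolute convergence of $\sum_{i} \mu_i e_i(x) e_i(x')$ to $k(x,x')$ on all of $\mathcal{X}\times\mathcal{X}$. The main obstacle is precisely this last block: the merely almost-everywhere diagonal identity from the trace must be upgraded to an everywhere identity before Dini's theorem applies, and reconciling these is the delicate point, resolved here by the injectivity granted by strict positive definiteness together with the monotonicity of $g_n$ and the strict positivity of the Borel measure.
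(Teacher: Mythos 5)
The paper never proves this statement: Mercer's theorem is imported by citation from \citet{riesz2012functional, steinwart2012mercer}, and indeed the paper's own Remark~\ref{remark:spectral_characterization_rkhs} \emph{derives} $\sum_{i\in I}\mu_i = \int_{\mathcal{X}} k(x,x)\,d\mu$ as a consequence of the Mercer expansion. This exposes the first genuine gap in your argument: you use the trace identity $\int_{\mathcal{X}} k(x,x)\,d\mu = \Tr(T_k) = \sum_{i\in I}\mu_i$ as an \emph{ingredient}, but for continuous kernels this identity is standardly a corollary of Mercer's theorem, so your route is circular unless you supply an independent proof (a Brislawn-style cell-averaging argument, which is itself nontrivial). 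What your remainder-kernel bound $\sum_{i\le n}\mu_i e_i(x)^2 \le k(x,x)$ honestly gives, after integrating, is only the inequality $\sum_{i\le n}\mu_i \le \int k(x,x)\,d\mu$ (hence trace-classness); the reverse inequality is essentially equivalent to the a.e.\ diagonal identity you are trying to prove. The second gap is the promotion from a.e.\ to everywhere, which you yourself flag as the crux but resolve only by naming ingredients: injectivity of $T_k$, i.e.\ completeness of $\{e_i\}$, is an $L_2$ statement with no pointwise content. The limit $g(x) = \sum_i \mu_i e_i(x)^2$ of the increasing continuous partial sums is merely lower semicontinuous, and an l.s.c.\ function equal a.e.\ to the continuous $k(x,x)$ can still be strictly smaller on a null set; l.s.c.\ together with full support of $\mu$ yields only $g(x_0) \le k(x_0,x_0)$, which you already know. (Separately, the claim that strict positive definiteness forces $T_k$ injective conflates pointwise and integral strict positive definiteness, which are distinct notions --- but completeness is not actually needed for the expansion, only for the paper's loose ``orthonormal basis'' phrasing; classically $\{e_i\}$ is an ONS spanning $\overline{\mathrm{ran}}\,T_k$.)

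The classical repair, which is the argument in the cited references, bypasses both gaps. Set $\tilde{k}(x,y) := \sum_i \mu_i e_i(x) e_i(y)$; for each \emph{fixed} $x$ the tail bound
\begin{align*}
    \sum_{i=m}^n \mu_i \abr{e_i(x) e_i(y)} \leq \rbr{\sum_{i=m}^n \mu_i e_i(x)^2}^{1/2} M^{1/2}, \qquad M := \sup_{x} k(x,x),
\end{align*}
shows the series converges uniformly in $y$, so $\tilde{k}(x,\cdot)$ is continuous. By the spectral theorem $T_k$ vanishes on the orthogonal complement of the eigenfunctions with positive eigenvalues, so $T_{\tilde k} = T_k$ as operators on $L_2(\mu)$; hence for each fixed $x$ the continuous function $k(x,\cdot) - \tilde{k}(x,\cdot)$ integrates to zero against every $f \in L_2(\mu)$, vanishes $\mu$-a.e., and therefore vanishes identically because $\mu$ has full support --- in particular at $y = x$. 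This yields $k(x,x) = \sum_i \mu_i e_i(x)^2$ at \emph{every} $x$ with no trace identity and no injectivity, after which your endgame --- Dini's theorem on the compact $\mathcal{X}$ for the diagonal, then $\abr{r_n(x,x')} \le r_n(x,x)^{1/2} r_n(x',x')^{1/2}$ to transfer uniform absolute convergence off the diagonal --- is exactly right. One last small point: for real kernels, positive definiteness alone does not imply symmetry of $k$; here symmetry is automatic because the paper defines a kernel through a feature map, and you should invoke that rather than derive it from positivity.
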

\begin{remark}[Spectral Characterization of RKHS]
    \label{remark:spectral_characterization_rkhs}
    With the representer property, we know that 
    \begin{align*}
        \sum_{i\in I}\mu_i e_i(x) e_i(\cdot) = k(x, \cdot) \in \mathcal{H}_k.
    \end{align*} 
    Note that, for $\beta$-finite spectrum, we can choose $I$ such that $\mu_i > 0$. If we define the inner product
    \begin{align*}
        \left\langle \sum_{i \in I} a_i e_i(\cdot), \sum_{i \in I} b_i e_i(\cdot) \right\rangle_{\mathcal{H}_k} = \sum_{i \in I}\frac{ a_i b_i}{\mu_i},
    \end{align*} 
    then we have the reproducing property 
    \begin{align*}
         \left\langle \sum_{i \in I} a_i e_i(\cdot), k(x, \cdot)\right\rangle_{\mathcal{H}_k} = \left\langle \sum_{i \in I} a_i e_i(\cdot), \sum_{i\in I} \mu_i e_i(x) e_i(\cdot)\right\rangle_{\mathcal{H}_k} = \sum_{i \in I} a_i e_i(x).
    \end{align*}
    With the spectral characterization, we know that
    \begin{align*}
        \left\|\sum_{i\in I} a_i e_i(\cdot)\right\|_{\mathcal{H}_k} = \sum_{i\in I}\frac{a_i^2}{\mu_i} \geq \frac{\sum_{i\in I} a_i^2}{\mu_1} = \frac{\left\|\sum_{i \in I} a_i e_i(\cdot)\right\|_{L_2(\mu)}}{\mu_1}.
    \end{align*}
    Hence, we know $\forall f\in\mathcal{H}_k$, $f\in L_2(\mu)$. Furthermore, note that
    \begin{align*}
        k(x, x) = \left\langle \sum_{i\in I} \mu_i e_i(x) e_i(\cdot), \sum_{i\in I} \mu_i e_i(x) e_i(\cdot)\right\rangle_{\mathcal{H}_k} = \sum_{i\in I} \mu_i e_i^2(x).
    \end{align*}
    Hence,
    \begin{align*}
        \sum_{i\in I} \mu_i = \int_{\mathcal{X}} \sum_{i\in I} \mu_i e_i^2(x) d \mu = \int_{\mathcal{X}} k(x, x) d \mu.
    \end{align*}
    The following Hilbert-Schmidt integral operator is useful in our analysis:
    \begin{align*}
        T_k:L_2(\mu) \to L_2(\mu), \quad T_k f = \int_{\mathcal{X}} k(x, x^\prime) f(x^\prime) d \mu.
    \end{align*}
    Obviously, $T_k$ is self-adjoint. Use the fact that $k(x, x^\prime) = \sum_{i\in I} \mu_i e_i(x) e_i(x^\prime)$, we know $T_k e_i = \mu_i e_i$, which means $e_i$ is the eigenfunction of $T_k$ with the corresponding eigenvalue as $\mu_i$. 

    With the spectral characterization of $T_k$, we can define the power operator $T_k$, by
    \begin{align*}
        T_k^{\tau} f: L_2(\mu) \to L_2(\mu), \quad T_k^{\tau} f = \sum_{i\in I} \mu_i^\tau \langle f, e_i\rangle e_i.
    \end{align*}
    And these power operators are all self-adjoint. Note that, $\|f\|_{\mathcal{H}_k} = \langle f, T_k^{-1} f\rangle_{L_2(\mu)}$. Throughout the paper, we work on the $L_2(\mu)$ space, and all of the operators are defined on $L_2(\mu) \to L_2(\mu)$. As $\mathcal{H}_k \subset L_2(\mu)$, all of these operators can also operator on the elements from $\mathcal{H}_k$.  
    
    The power RKHS induced by the following kernel will be helpful in our analysis:
    \begin{align*}
        \forall x, x^\prime\in\mathcal{X}, \widetilde{k}(x, x^\prime) = \sum_{i\in I} \mu_i^2 e_i(x) e_i(x^\prime). 
    \end{align*}
    And it is straightforward to see $\|f\|_{H_{\widetilde{k}}} = \langle f, T_k^{-2} f\rangle_{L_2(\mu)}$, which will be useful in the proof.
\end{remark}

\begin{definition}[Kernel with Random Feature Representation]
    \label{def:random_feature_representation}
    A kernel $k:\mathcal{X} \times \mathcal{X} \to \mathbb{R}$ is said to have a random feature representation if there exists a function $\psi:\mathcal{X} \times \Xi \to \mathbb{R}$ and a probability measure $P$ over $\Xi$ such that
    \begin{align*}
        k(x, x^\prime) = \int_{\Xi} \psi(x; \xi) \psi(x^\prime; \xi) dP(\xi).
    \end{align*}
    We then show that, $\mathcal{H}_k$ coincides with the following $\mathbb{R}$-valued function space 
    \begin{align*}
        \left\{f:\mathcal{X} \to \mathbb{R}~\bigg|~f(x) = \int_{\Xi} \widetilde{f}(\xi) \psi(x; \xi) d P(\xi), \widetilde{f} \in L_2(P) \right\},
    \end{align*} 
    with the inner product defined as $\langle f, g\rangle_{\mathcal{H}_k} = \int_{\Xi} \widetilde{f}(\xi) \widetilde{g}(\xi) d P(\xi)$. Note that, $\widetilde{k}(x, \cdot) = \psi(x; \xi)$. 
    Hence, it is straightforward to show that $\forall x\in\mathcal{X}$, $k(x, \cdot) \in \mathcal{H}_k$. Furthermore, we have 
    \begin{align*}
        f(x) = \int_{\Xi} \widetilde{f}(\xi) \psi(x; \xi) dP(\xi) = \int_{\Xi} \widetilde{f}(\xi) \widetilde{k}(x, \cdot) d P(\xi) = \langle f, k(x, \cdot) \rangle_{\mathcal{H}_k},
    \end{align*}
    which shows the reproducing property. As a result, we obtain an equivalent representation of the RKHS $\mathcal{H}_k$, which means $\forall f\in\mathcal{H}_k$, we can obtain a random feature representation.
\end{definition}
Examples of such kernel $k$ includes the Gaussian kernel and the Mat\'ern kernel. See \citet{rahimi2007random, dai2014scalable, choromanski2018geometry} for the details.

\begin{definition}[$\varepsilon$-net and $\varepsilon$-covering number and $i$-th (dyadic) entropy number~\citep{steinwart2008support}]
    Let $(T, d)$ be a metric space. $S \subset T$ is an $\varepsilon$-net for $\varepsilon > 0$, if $\forall t\in T$, $\exists s\in S$, such that $d(s, t) \leq \varepsilon$. Furthermore, the $\varepsilon$-covering number $\mathcal{N}(T, d, \varepsilon)$ is defined as as the minimum cardinality of the $\varepsilon$-net for $T$ under metric $d$, and the $i$-th entropy number $e_i(T, d)$ is the minimum $\varepsilon$ that there exists an $\varepsilon$ cover of cardinality $2^{i-1}$.
\end{definition}
\section{Main Proof}
\subsection{Technical Conditions}
\label{sec:technical_conditions}
\begin{assumption}[Regularity Conditions for Kernel]
    \label{assump:trace}
    $\mathcal{Z}$ is a compact metric space with the Lebesgue measure $\mu$ if $\mathcal{Z}$ is continuous, and $\int_{\mathcal{Z}} k(z, z) d \mu \leq 1.$
\end{assumption}

\begin{remark}
    Assumption~\ref{assump:trace} is mainly for the ease of presentation. The assumption that $\mathcal{Z}$ is compact when $\mathcal{Z}$ is continuous can be relaxed to $\mathcal{Z}$ is a general domain but requires much more involved techniques from e.g. \citet{steinwart2012mercer}. Furthermore, with Mercer's theorem (see Theorem~\ref{thm:mercer} and Remark~\ref{remark:spectral_characterization_rkhs} for the details), we know $\sum_{i\in I} \mu_i = \int_{\mathcal{Z}} k(z, z) d \mu \leq 1$.
    As $\forall i\in I, \mu_i > 0$, we know $\mu_1 \leq 1$, and $\|f\|_{L_2(\mu)} \leq \|f\|_{\mathcal{H}_k}$ without any other absolute constant, that can keep the eventual result clean. We can relax the assumption $\int_{\mathcal{Z}} k(z, z) d\mu \leq 1$ to $\int_{\mathcal{Z}} k(z, z) d \mu \leq c$ for some positive constant $c > 1$, at the cost of additional terms at most  $\mathrm{poly}(c)$  in the sample complexity.
\end{remark}

\begin{assumption}[Eigendecay Conditions for Kernel] For the reproducing kernel $k$, we assume $\mu_i$, the $i$-th eigenvalue of the operator $T_k:L_2(\mu) \to L_2(\mu)$, $T_k f =\int_{\mathcal{Z}} f(z^\prime) k(z, z^\prime)d \mu(z^\prime)$, satisfies one of the following conditions:
\begin{itemize}[leftmargin=20pt, parsep=0pt, partopsep=0pt]
\item $\beta$-finite spectrum: $\mu_i = 0$, $\forall i > \beta$, where $\beta$ is a positive integer.
\item $\beta$-polynomial decay: $\mu_i \leq C_0 i^{-\beta}$, where $C_0$ is an absolute constant and $\beta > 1$.
\item $\beta$-exponential decay: $\mu_i \leq C_1 \exp(-C_2 i^\beta)$, where $C_1$ and $C_2$ are absolute constants and $\beta > 0$.
\end{itemize}
For ease of presentation, we use $C_{\mathrm{poly}}$ to denote constants appeared in the analysis of $\beta$-polynomial decay that only depends on $C_0$ and $\beta$, and $C_{\mathrm{exp}}$ to denote constants appeared in the analysis of $\beta$-exponential decay that only depends on $C_1$, $C_2$ and $\beta$. Both of them can be varied step by step.
\end{assumption}
\begin{remark}
    We remark that, most of the existing kernels satisfy one of these eigendecay conditions. Specifically, as discussed in \citet{seeger2008information, yang2020provably}, the linear kernel and the polynomial kernel satisfy the $\beta$-finite spectrum condition, the Mat\'ern kernel satisfies the $\beta$-polynomial decay and the Gaussian kernel satisfies the $\beta$-exponential decay. Furthermore, for discrete $\mathcal{Z}$, we can directly observe that it corresponds to the case of $\beta$-finite spectrum with $\beta \leq |\mathcal{Z}|$.
\end{remark}
\subsection{Proof for the Online Setting}
\label{sec:online_proof}
\begin{theorem}[PAC Guarantee for Online Exploration, Formal]
\label{thm:pac_online}
If we choose the bonus $\hat{b}_n(s, a)$ as:
\begin{align*}
    \hat{b}_n(s, a) = \min\left\{\alpha_n\|\hat{p}_n(\cdot|s, a)\|_{L_2(\mu), \hat{\Sigma}_{n, \hat{p}_n}^{-1}}, 2\right\},
\end{align*}
where 
\begin{align*}
    \hat{\Sigma}_{n, \hat{p}_n}:L_2(\mu) \to L_2(\mu), \quad \hat{\Sigma}_{n, \hat{p}_n} := \sum_{(s_i, a_i) \in \mathcal{D}_n} \left[\hat{p}_n(z|s_i, a_i) \hat{p}_n(z|s_i, a_i)^\top\right] + \lambda T_k^{-1},
\end{align*}
$\left\|f\right\|_{L_2(\mu), \Sigma} := \sqrt{\langle f, \Sigma f\rangle_{L_2(\mu)}}$ for self-adjoint operator $\Sigma$, $\lambda$ for different eigendecay conditions is given by:
\begin{itemize}[leftmargin=20pt, parsep=0pt, partopsep=0pt]
    \item $\beta$-finite spectrum: $\lambda = \Theta(\beta \log N + \log (N|\mathcal{P}|/\delta))$
    \item $\beta$-polynomial decay: $\lambda = \Theta(C_{\mathrm{poly}}N^{1/(1 + \beta)} + \log (N|\mathcal{P}|/\delta))$;
    \item $\beta$-exponential decay: $\lambda = \Theta(C_{\mathrm{exp}}(\log N)^{1/\beta} + \log (N|\mathcal{P}|/\delta))$;
\end{itemize}
and $\alpha_n = \Theta\left(\frac{\gamma}{1-\gamma}\sqrt{|\mathcal{A}|\log(n|\mathcal{P}|/\delta) + \lambda C}\right)$, then with probability at least $1-\delta$, After interacting with the environments for $N$ episodes where 
    \begin{itemize}[leftmargin=20pt, parsep=0pt, partopsep=0pt]
        \item $N =  \Theta\left(\frac{C \beta^3 |\mathcal{A}|^2 \log (|\mathcal{P}|/\delta)}{(1-\gamma)^4\varepsilon^2}\log^3 \left(\frac{C \beta^3 |\mathcal{A}|^2 \log (|\mathcal{P}|/\delta)}{(1-\gamma)^4\varepsilon^2}\right)\right)$ for $\beta$-finite spectrum;
        \item $N = \Theta\left(C_{\mathrm{poly}}\left(\frac{|\mathcal{A}|\sqrt{C\log (|\mathcal{P}|/\delta)}}{(1-\gamma)^2\varepsilon} \log^{3/2} \left(\frac{\sqrt{C}|\mathcal{A}|\log (|\mathcal{P}|/\delta)}{(1-\gamma)^2\varepsilon} \right)\right)^{\frac{2(1+\beta)}{\beta - 1}}\right)$ for $\beta$-polynomial decay;
        \item $N =\Theta\left(\frac{C_{\mathrm{exp}}C|\mathcal{A}|^2\log (|\mathcal{P}|/\delta)}{(1-\gamma)^4\varepsilon^2} \log^{\frac{3 + 2\beta}{\beta}}\left(\frac{C|\mathcal{A}|^2\log (|\mathcal{P}|/\delta)}{(1-\gamma)^4\varepsilon^2} \right)\right)$ for $\beta$-exponential decay;
    \end{itemize} 
    we can obtain an $\varepsilon$-optimal policy with probability at least $1-\delta$.
\end{theorem}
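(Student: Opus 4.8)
The plan is to follow the REP-UCB analysis of \citet{uehara2021representation}, replacing every finite-dimensional argument by its RKHS counterpart dictated by \Asmpref{assumption:normalization}. First I would observe that, under the realizability of $\mathcal{Q}$ in \Asmpref{assumption:function_class}, maximizing the ELBO in \eqref{eq:ELBO} coincides exactly with MLE, so the standard generalization bound for a finite model class applies: with probability $1-\delta$ the learned $\hat{T}_n$ satisfies an in-distribution guarantee of the form $\E_{(s,a)\sim\rho_n}\nbr{\hat{T}_n(\cdot|s,a)-T^*(\cdot|s,a)}_{\mathrm{TV}}^2 \lesssim \log(N|\mathcal{P}|/\delta)/n$, where $\rho_n$ is the data-collection mixture. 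This is the only place $\log|\mathcal{P}|$ enters.

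Next I would prove optimism, namely $V^{\pi_n}_{\hat{T}_n,\, r+\hat b_n} \ge V^*_{T^*,r}$. The lever is that the one-step error of integrating any bounded value function $V$ against $\hat{T}_n-T^*$ can be written as $\inner{\hat p_n(\cdot|s,a)}{g}_{L_2(\mu)}$ for some $g$ of bounded RKHS norm, since $\int p(s'|\cdot)V(s')\,ds'\in\mathcal{H}_k$ by \Asmpref{assumption:normalization}. A Cauchy--Schwarz step in the $\hat\Sigma_{n,\hat p_n}$ geometry, together with operator-norm concentration of the empirical covariance toward its population version, matches this error against $\alpha_n\nbr{\hat p_n(\cdot|s,a)}_{L_2(\mu),\,\hat\Sigma_{n,\hat p_n}^{-1}}$; this is precisely the content of the bonus-concentration lemma (Lemma~\ref{lem:bonus_concentration}). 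The regularizer $\lambda T_k^{-1}$ is chosen so that the bound $\nbr{\cdot}_{\mathcal{H}_k}\le 1$ plays the role that $\nbr{\cdot}_2\le 1$ plays in finite dimensions.

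With optimism established, a telescoping simulation-lemma argument bounds the per-episode suboptimality by the bonus collected under the true occupancy, $V^*-V^{\pi_n}_{T^*,r}\lesssim \E_{(s,a)\sim d^{\pi_n}_{T^*}}[\hat b_n(s,a)]$ up to the lower-order statistical term from the first step. Summing over episodes and invoking the RKHS elliptical-potential bound (Lemma~\ref{lem:potential_function_RKHS}) gives $\sum_{n=1}^N \E[\hat b_n]\lesssim \alpha_N\sqrt{N\, d_{\mathrm{eff}}}$, where the \emph{effective dimension} $d_{\mathrm{eff}}$ is the log-determinant of $I+\lambda^{-1}\hat\Sigma$ read off from the kernel eigenvalues $\{\mu_i\}$. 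The three eigendecay regimes then yield $d_{\mathrm{eff}}=\widetilde{O}(\beta)$, $\widetilde{O}(N^{1/(1+\beta)})$, and $\widetilde{O}((\log N)^{1/\beta})$ respectively, and the stated choices of $\lambda$ are exactly what make the truncated spectral tail negligible. Finally, converting the average-regret bound into a PAC statement (returning a uniformly random $\pi_n$) and solving $\frac1N\sum_n(V^*-V^{\pi_n})\le\varepsilon$ for $N$ produces the three sample complexities.

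I expect the main obstacle to be the RKHS elliptical-potential bound. In finite dimensions the cumulative potential is controlled by $\log\det(I+\lambda^{-1}\hat\Sigma)\le d\log(\cdots)$, but in the operator setting this determinant is an infinite product over eigenvalues, so one must split the spectrum at the index where $\mu_i$ falls below a threshold of order $\lambda/N$, bounding the head by the number of retained eigenvalues and the tail by its trace via \Asmpref{assump:trace}. Getting the split point — and hence $\lambda$ — right under each decay condition is the delicate part, since it simultaneously fixes the regularization level, the effective dimension, and the final polynomial rate. A secondary subtlety is the uniform (over $\mathcal{P}$) operator-norm concentration of $\hat\Sigma_{n,\hat p_n}$ needed in the optimism step, so that the bonus built from the estimated $\hat p_n$ still dominates the true model error.
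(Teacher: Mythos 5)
Your proposal follows the same architecture as the paper's proof: ELBO-equals-MLE generalization (Lemma~\ref{lem:mle}), Cauchy--Schwarz in the regularized $L_2(\mu)$ geometry via one-step-back inequalities (Lemmas~\ref{lem:back_learned_online} and \ref{lem:back_true_online}), covariance concentration uniform over $\mathcal{P}$ (Lemma~\ref{lem:bonus_concentration}), a simulation-lemma regret decomposition (Lemmas~\ref{lem:simulation} and \ref{lem:regret}), an RKHS elliptical-potential bound (Lemma~\ref{lem:potential_function_RKHS}), and the uniform-mixture PAC conversion. Two corrections, though. First, exact optimism $V^{\pi}_{\hat{T}_n,\,r+\hat b_n}\ge V^{\pi}_{T^*,r}$ is \emph{not} achievable with this bonus: the one-step-back inequality produces, besides the elliptical-norm term that $\hat b_n$ can dominate, a constant term $\sqrt{(1-\gamma)\abr{\mathcal{A}}\,\E_{\rho_n\times\mathcal{U}(\mathcal{A})}[f_n^2]}$ coming from the initial distribution, which no state-action-dependent bonus can absorb. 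The paper therefore proves only \emph{almost} optimism, $V^{\pi}_{\hat{T}_n,\,r+\hat b_n}-V^{\pi}_{T^*,r}\ge -\sqrt{\abr{\mathcal{A}}\zeta_n/(1-\gamma)^3}$ (Lemma~\ref{lem:optimism}); your regret step already tolerates a lower-order statistical term, so the overall argument survives, but the optimism step as you stated it would fail. Second, $\log\abr{\mathcal{P}}$ enters in \emph{two} places, not one: besides the MLE bound, the covariance concentration must hold uniformly over $\mathcal{P}$ because $\hat p_n$ is data-dependent, and this union bound is exactly why $\lambda$ itself contains $\log(N\abr{\mathcal{P}}/\delta)$. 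You note this subtlety at the end, but it contradicts your earlier ``only place'' claim.

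Where you genuinely diverge from the paper is the potential lemma, and your route is arguably the more standard one. You propose splitting the spectrum of the covariance at the index where eigenvalues drop below $\lambda/N$, bounding the head by its cardinality and the tail by the trace from \Asmpref{assump:trace}; this works because any distribution $\nu$ on the unit ball of $\mathcal{H}_{\widetilde{k}}$ has $\Tr\rbr{T_k^{-2}\,\E_\nu\sbr{\phi\phi^\top}}\le 1$, so the $j$-th eigenvalue of $\E_\nu\sbr{\phi\phi^\top}$ is at most $\mu_j^2$, and the head/tail split reproduces the three effective dimensions. The paper instead bounds $\sup_\nu \log\det\rbr{I+\alpha\,\E_\nu\sbr{\phi\phi^\top}}$ by an optimal-design argument: first-order optimality conditions plus Weyl's inequality show the maximizing $\nu$ is supported on a subspace of dimension $i_0$ determined by the eigendecay, after which Jensen's inequality finishes. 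Both yield the same rates; your version is more mechanical and closer to the kernelized-bandit literature, while the paper's version avoids explicitly ordering the covariance eigenvalues at the cost of a more delicate variational characterization. Also note the paper's choice of $\lambda$ is driven by the $\varepsilon$-net covering numbers (via Carl's inequality on entropy numbers) in the concentration lemma, not by the spectral-tail truncation alone --- the two coincide in order only because the covering-number exponent and the effective dimension agree under each decay condition.
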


\paragraph{Notation}
Following the notation of \citet{uehara2021representation}, we define
\begin{align*}
    \rho_n(s) := \frac{1}{n}\sum_{i=1}^{n-1} d_{T^*}^{\pi_i}(s),
\end{align*}
and with slight abuse of notation, we overload the above notation and define 
\begin{align*}
    \rho_n(s, a) := \frac{1}{n}\sum_{i=1}^{n-1} d_{T^*}^{\pi_i}(s, a).
\end{align*}
Furthermore, we define $\rho_n^\prime(s^\prime)$ as the marginal distribution of $s^\prime$ for the following joint distribution 
\begin{align*}
    (s, a, s^\prime) \sim \rho_n(s) \times \mathcal{U}(\mathcal{A}) \times T^*(s^\prime|s, a).
\end{align*}
Finally, we define the following operators in the space of $L_2(\mu) \to L_2(\mu)$:
\begin{align*}
    \Sigma_{\rho_n\times\mathcal{U}(\mathcal{A}), \phi} = & n \mathbb{E}_{s\sim \rho_n, a\sim \mathcal{U}(\mathcal{A})} \left[\phi(s, a) \phi^\top(s, a)\right] + \lambda T_k^{-1}\\
    \Sigma_{\rho_n, \phi} = & n \mathbb{E}_{(s, a)\sim \rho_n}\left[\phi(s, a) \phi^\top(s, a)\right] + \lambda T_k^{-1}\\
\end{align*}
Note that, by the spectral theorem, if $\left\|T^{-1/2} x^\prime\right\|_{L_2(\mu)} \leq \infty$ for $x^\prime\in L_2(\mu)$, we have the following Cauchy-Schwartz inequality for weighted $L_2(\mu)$ norm: $\forall x\in L_2(\mu)$,
\begin{align*}
    \left\langle x, x^\prime \right\rangle_{L_2(\mu)} = \left\langle T^{1/2} x, T^{-1/2}x^\prime\right\rangle_{L_2(\mu)} \leq \|x\|_{L_2(\mu), T} \left\|x^\prime\right\|_{L_2(\mu), T^{-1}}.
\end{align*}
\begin{lemma}[One Step Back Inequality for the Learned Model]
    \label{lem:back_learned_online}
    Assume $g:\mathcal{S} \times \mathcal{A} \to \mathbb{R}$ such that $\|g\|_{\infty} \leq B$, then conditioning on the event that the following MLE generalization bound holds:
    \begin{align*}
        \mathbb{E}_{s\sim \rho_n, a\sim\mathcal{U}(\mathcal{A})}\left[\|\hat{T}(s, a) - T^*(s, a)\|_1\right] \leq \zeta_n,
    \end{align*} 
    $\forall \pi$, we have
    \begin{align*}
        & \left|\mathbb{E}_{(s, a) \sim d_{\hat{T}_n}^\pi}[g(s, a)]\right| \\
        \leq & \gamma \mathbb{E}_{(\widetilde{s}, \widetilde{a})\sim d_{\hat{T}_n}^\pi} \left\|\hat{p}_n(\cdot|\widetilde{s}, \widetilde{a})\right\|_{L_2(\mu), \Sigma_{\rho_n \times \mathcal{U}(\mathcal{A}), \hat{p}_n}^{-1}} \sqrt{ n|\mathcal{A}|\mathbb{E}_{s\sim \rho_n^\prime, a\sim \mathcal{U}(\mathcal{A})}  \left[g^2(s, a) \right] + \lambda B^2 C + n B^2 \zeta_n}\\
        & + \sqrt{(1-\gamma) |\mathcal{A}|\mathbb{E}_{s\sim \rho_n, a\sim \mathcal{U}(\mathcal{A})} [g^2(s, a)]}.
    \end{align*}
\end{lemma}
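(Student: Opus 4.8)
The plan is to expand $\mathbb{E}_{(s,a)\sim d_{\hat{T}_n}^\pi}[g(s,a)]$ through the Bellman flow (occupancy) identity for the \emph{learned} model, $d_{\hat{T}_n}^\pi(s,a) = (1-\gamma)d_0(s)\pi(a|s) + \gamma\mathbb{E}_{(\widetilde{s},\widetilde{a})\sim d_{\hat{T}_n}^\pi}[\hat{T}_n(s|\widetilde{s},\widetilde{a})\pi(a|s)]$. Writing $g^\pi(s):=\mathbb{E}_{a\sim\pi(\cdot|s)}[g(s,a)]$ (so $\|g^\pi\|_\infty\le B$) and integrating $g$ against both sides yields the decomposition $\mathbb{E}_{(s,a)\sim d_{\hat{T}_n}^\pi}[g] = (1-\gamma)\mathbb{E}_{s\sim d_0,a\sim\pi}[g] + \gamma\mathbb{E}_{(\widetilde{s},\widetilde{a})\sim d_{\hat{T}_n}^\pi}[\mathbb{E}_{s\sim\hat{T}_n(\cdot|\widetilde{s},\widetilde{a})}g^\pi(s)]$. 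I would bound the two summands separately: the initial-distribution term produces the additive $\sqrt{(1-\gamma)|\mathcal{A}|\cdots}$ term and the transition term produces the bonus-weighted term.

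For the initial-distribution term I would apply Jensen, $|(1-\gamma)\mathbb{E}_{s\sim d_0,a\sim\pi}[g]| \le (1-\gamma)\sqrt{\mathbb{E}_{s\sim d_0,a\sim\pi}[g^2]}$, and then perform two changes of measure: an action importance-sampling step $\mathbb{E}_{a\sim\pi}[g^2]\le|\mathcal{A}|\,\mathbb{E}_{a\sim\mathcal{U}(\mathcal{A})}[g^2]$ (since $\pi(a|s)\le 1=|\mathcal{A}|\,\mathcal{U}(a)$), and a state change of measure $d_0(s)\le(1-\gamma)^{-1}\rho_n(s)$, which holds because each occupancy $d_{T^*}^{\pi_i}$ contributing to $\rho_n$ dominates $(1-\gamma)d_0$ by the same flow identity (marginalized over $a$). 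Combining these gives exactly $\sqrt{(1-\gamma)|\mathcal{A}|\mathbb{E}_{s\sim\rho_n,a\sim\mathcal{U}(\mathcal{A})}[g^2]}$.

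The transition term is where the latent-variable linear structure enters. Using $\hat{T}_n(s|\widetilde{s},\widetilde{a}) = \langle\hat{p}_n(\cdot|\widetilde{s},\widetilde{a}),\hat{p}_n(s|\cdot)\rangle_{L_2(\mu)}$ and Fubini, I would rewrite $\mathbb{E}_{s\sim\hat{T}_n(\cdot|\widetilde{s},\widetilde{a})}[g^\pi(s)] = \langle\hat{p}_n(\cdot|\widetilde{s},\widetilde{a}),\nu\rangle_{L_2(\mu)}$ with $\nu:=\int_{\mathcal{S}}\hat{p}_n(s^\prime|\cdot)g^\pi(s^\prime)\,ds^\prime\in L_2(\mu)$. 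Applying the weighted Cauchy--Schwarz inequality recorded just above the lemma with $T=\Sigma_{\rho_n\times\mathcal{U}(\mathcal{A}),\hat{p}_n}^{-1}$ gives $\langle\hat{p}_n(\cdot|\widetilde{s},\widetilde{a}),\nu\rangle_{L_2(\mu)} \le \|\hat{p}_n(\cdot|\widetilde{s},\widetilde{a})\|_{L_2(\mu),\Sigma_{\rho_n\times\mathcal{U}(\mathcal{A}),\hat{p}_n}^{-1}}\,\|\nu\|_{L_2(\mu),\Sigma_{\rho_n\times\mathcal{U}(\mathcal{A}),\hat{p}_n}}$. The second factor is independent of $(\widetilde{s},\widetilde{a})$, so it pulls out of the expectation, matching the stated first term provided $\|\nu\|_{L_2(\mu),\Sigma_{\rho_n\times\mathcal{U}(\mathcal{A}),\hat{p}_n}}$ is bounded by the claimed square root.

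It remains to bound $\|\nu\|_{L_2(\mu),\Sigma_{\rho_n\times\mathcal{U}(\mathcal{A}),\hat{p}_n}}^2$ by expanding the operator into its two blocks. For the data block $n\,\mathbb{E}_{s\sim\rho_n,a\sim\mathcal{U}(\mathcal{A})}[\langle\nu,\hat{p}_n(\cdot|s,a)\rangle_{L_2(\mu)}^2]$, I would observe $\langle\nu,\hat{p}_n(\cdot|s,a)\rangle_{L_2(\mu)} = \mathbb{E}_{s^\prime\sim\hat{T}_n(\cdot|s,a)}[g^\pi(s^\prime)]$; Jensen gives $(\mathbb{E}_{s^\prime\sim\hat{T}_n}g^\pi)^2\le\mathbb{E}_{s^\prime\sim\hat{T}_n}[(g^\pi)^2]$, then replace $\hat{T}_n$ by $T^*$ at the cost of $B^2\|\hat{T}_n(s,a)-T^*(s,a)\|_1$ (since $\|(g^\pi)^2\|_\infty\le B^2$), and bound $\mathbb{E}_{s^\prime\sim T^*}[(g^\pi)^2]\le|\mathcal{A}|\,\mathbb{E}_{s^\prime\sim T^*(\cdot|s,a),a^\prime\sim\mathcal{U}(\mathcal{A})}[g^2]$ by a further Jensen plus action importance sampling. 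Taking $\mathbb{E}_{s\sim\rho_n,a\sim\mathcal{U}(\mathcal{A})}$, the first part becomes $|\mathcal{A}|\,\mathbb{E}_{s^\prime\sim\rho_n^\prime,a^\prime\sim\mathcal{U}(\mathcal{A})}[g^2]$ by the definition of the pushforward $\rho_n^\prime$, and the mismatch part is $\le B^2\zeta_n$ by the assumed MLE generalization bound; multiplying by $n$ gives $n|\mathcal{A}|\,\mathbb{E}_{\rho_n^\prime,\mathcal{U}}[g^2]+nB^2\zeta_n$. For the regularization block, $\lambda\langle\nu,T_k^{-1}\nu\rangle_{L_2(\mu)}=\lambda\|\nu\|_{\mathcal{H}_k}^2$ by the identity in Remark~\ref{remark:spectral_characterization_rkhs}, and Assumption~\ref{assumption:normalization} bounds $\|\nu\|_{\mathcal{H}_k}$ in terms of $B$ and $C$, producing the $\lambda B^2C$ term. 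Summing the three contributions yields the square-root factor, and combining with the initial-distribution term (via the triangle inequality on the decomposition) completes the proof. The main obstacle, and the genuinely new ingredient relative to the finite-dimensional REP-UCB analysis, is this final step in infinite dimensions: one must make sense of the operator-weighted norms $\|\cdot\|_{L_2(\mu),\Sigma^{-1}}$ and $\|\cdot\|_{L_2(\mu),\Sigma}$ and the Cauchy--Schwarz pairing between them when $\Sigma_{\rho_n\times\mathcal{U}(\mathcal{A}),\hat{p}_n}$ is an operator on $L_2(\mu)$ rather than a finite matrix; the regularizer $\lambda T_k^{-1}$ is exactly what renders $\Sigma$ positive definite with a well-defined inverse, and the identity $\|f\|_{\mathcal{H}_k}^2=\langle f,T_k^{-1}f\rangle_{L_2(\mu)}$ together with the normalization assumption is what keeps the regularization contribution finite.
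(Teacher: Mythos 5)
Your proposal is correct and follows essentially the same route as the paper's proof: the occupancy flow identity for $d_{\hat{T}_n}^\pi$, Jensen plus importance sampling for the initial-distribution term, the weighted Cauchy--Schwarz pairing $\|\cdot\|_{L_2(\mu),\Sigma^{-1}}$ versus $\|\cdot\|_{L_2(\mu),\Sigma}$, and the expansion of the $\Sigma_{\rho_n\times\mathcal{U}(\mathcal{A}),\hat{p}_n}$-weighted norm into the data block (handled via the MLE bound and the pushforward $\rho_n^\prime$) plus the $\lambda\|\cdot\|_{\mathcal{H}_k}^2$ regularization block controlled by Assumption~\ref{assumption:normalization}. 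Your only deviations are cosmetic and arguably cleaner: you apply Jensen before swapping $\hat{T}_n$ for $T^*$ (avoiding a factor of $2$ in front of $nB^2\zeta_n$), and you make explicit the change of measure $d_0 \le (1-\gamma)^{-1}\rho_n$ that the paper leaves implicit.
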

\begin{proof}
    We start from the following equality:
    \begin{align}
        \mathbb{E}_{(s, a) \sim d_{\hat{T}_n}^\pi} [g(s, a)] = \gamma \mathbb{E}_{(\widetilde{s}, \widetilde{a}) \sim d_{\hat{T}_n}^\pi, s \sim \hat{T}_n(\cdot|\widetilde{s}, \widetilde{a}), a\sim \pi(\cdot|s)} [g(s, a)] + (1-\gamma) \mathbb{E}_{s\sim d_0, a\sim \pi(\cdot|s)} [g(s, a)],
    \end{align}
    which is obtained by the property of the stationary distribution. For the second term, with Jensen's inequality and an importance sampling step, we have that
    \begin{align*}
        (1-\gamma) \mathbb{E}_{s\sim d_0, a\sim \pi(\cdot|s)} [g(s, a)] \leq \sqrt{(1-\gamma)|\mathcal{A}|\mathbb{E}_{s\sim \rho_n, a\sim \mathcal{U}(\mathcal{A})} [g^2(s, a)]}.
    \end{align*}
    Now we consider the first term. With Cauchy-Schwartz inequality of $L_2(\mu)$ inner product, we have that
    \begin{align*}
        & \gamma \mathbb{E}_{(\widetilde{s}, \widetilde{a}) \sim d_{\hat{T}_n}^\pi, s \sim \hat{T}_n(\cdot\widetilde{s}, \widetilde{a}), a\sim \pi(\cdot|s)} [g(s, a)]\\
        = & \gamma \mathbb{E}_{(\widetilde{s}, \widetilde{a})\sim d_{\hat{T}_n}^\pi} \left\langle \hat{p}_n(\cdot|\widetilde{s}, \widetilde{a}), \int_{\mathcal{S}} \sum_{a\in \mathcal{A}} \hat{p}_n(s|\cdot) \pi(a|s) g(s, a) ds\right\rangle_{L_2(\mu)}\\
        \leq & \gamma \mathbb{E}_{(\widetilde{s}, \widetilde{a})\sim d_{\hat{T}_n}^\pi} \left\|\hat{p}_n(\cdot|\widetilde{s}, \widetilde{a})\right\|_{L_2(\mu), \Sigma_{\rho_n \times \mathcal{U}(\mathcal{A}), \hat{p}_n}^{-1}} \left\| \int_{\mathcal{S}} \sum_{a\in\mathcal{A}} \hat{p}_n(s|\cdot) \pi(a|s) g(s, a) ds\right\|_{L_2(\mu), \Sigma_{\rho_n \times \mathcal{U}(\mathcal{A}), \hat{p}_n}}
    \end{align*}
    Note that
    \begin{align*}
        & \left\| \int_{\mathcal{S}} \sum_{a\in\mathcal{A}} \hat{p}_n(s|\cdot) \pi(a|s) g(s, a) ds\right\|_{L_2(\mu), \Sigma_{\rho_n \times \mathcal{U}(\mathcal{A}), \hat{p}_n}}^2\\
        = & n \mathbb{E}_{\widetilde{s} \sim \rho_n, \widetilde{a} \sim \mathcal{U}(\mathcal{A})}\left\{\mathbb{E}_{s\sim \hat{T}_n(\cdot|\widetilde{s}, \widetilde{a}), a\sim \pi(\cdot|s)}  \left[g(s, a) \right]\right\}^2 + \lambda \left\|\int_{\mathcal{S}} \sum_{a\in\mathcal{A}} \hat{p}_n(s|\cdot) \pi(a|s) g(s, a) ds\right\|_{\mathcal{H}_k}\\
        \leq & n \mathbb{E}_{\widetilde{s} \sim \rho_n, \widetilde{a} \sim \mathcal{U}(\mathcal{A})}\left\{\mathbb{E}_{s\sim T^*(\cdot|\widetilde{s}, \widetilde{a}), a\sim \pi(\cdot|s)}  \left[g(s, a) \right]\right\}^2 + \lambda B^2 C + n B^2 \zeta_n\\
        \leq & n \mathbb{E}_{\widetilde{s} \sim \rho_n, \widetilde{a} \sim \mathcal{U}(\mathcal{A}), s\sim T^*(\cdot|\widetilde{s}, \widetilde{a}), a\sim \pi(\cdot|s)}  \left[g^2(s, a) \right] + \lambda B^2 C + n B^2 \zeta_n\\
        \leq & n|\mathcal{A}|\mathbb{E}_{\widetilde{s} \sim \rho_n, \widetilde{a} \sim \mathcal{U}(\mathcal{A}), s\sim T^*(\cdot|\widetilde{s}, \widetilde{a}), a\sim \mathcal{U}(\mathcal{A})}  \left[g^2(s, a) \right] + \lambda B^2 C + n B^2 \zeta_n\\
        = & n|\mathcal{A}|\mathbb{E}_{s\sim \rho_n^\prime, a\sim \mathcal{U}(\mathcal{A})}  \left[g^2(s, a) \right] + \lambda B^2 C + n B^2 \zeta_n
    \end{align*}
    Substitute back, we obtain the desired result.
\end{proof}

\begin{lemma}[One Step Back Inequality for the True Model]
\label{lem:back_true_online}
    Assume $g:\mathcal{S} \times \mathcal{A} \to \mathbb{R}$ such that $\|g\|_{\infty} \leq B$, then
    \begin{align*}
        \mathbb{E}_{(s, a)\sim d_{T^*}^\pi[g(s, a)]} \leq & \mathbb{E}_{(\widetilde{s}, \widetilde{a})\sim d_{T^*}^\pi} \left\|p^*(\cdot|\widetilde{s}, \widetilde{a})\right\|_{L_2(\mu), \Sigma_{\rho_n, p^*}^{-1}} \sqrt{n\gamma|\mathcal{A}|\mathbb{E}_{s\sim \rho_n, a\sim \mathcal{U}(\mathcal{A})}  \left[g^2(s, a) \right] + \lambda \gamma^2 B^2 C}\\
        & + \sqrt{(1-\gamma)|\mathcal{A}| \mathbb{E}_{s\sim \rho_n, a\sim \mathcal{U}(\mathcal{A})} [g^2(s, a)]}.
    \end{align*}
\end{lemma}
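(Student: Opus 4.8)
The plan is to mirror the proof of Lemma~\ref{lem:back_learned_online}, replacing the learned model $\hat{T}_n$ and representation $\hat{p}_n$ by the true $T^*$ and $p^*$. The starting point is the Bellman-flow identity for the discounted occupancy measure,
\begin{align*}
\mathbb{E}_{(s,a)\sim d_{T^*}^\pi}[g(s,a)] = \gamma\, \mathbb{E}_{(\widetilde{s},\widetilde{a})\sim d_{T^*}^\pi,\, s\sim T^*(\cdot|\widetilde{s},\widetilde{a}),\, a\sim\pi(\cdot|s)}[g(s,a)] + (1-\gamma)\,\mathbb{E}_{s\sim d_0,\, a\sim\pi(\cdot|s)}[g(s,a)],
\end{align*}
which splits the target into a one-step term and an initial-distribution term. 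The $(1-\gamma)$ term is handled exactly as in the learned-model lemma: Jensen's inequality together with an importance-sampling change of measure from $d_0\times\pi$ to $\rho_n\times\mathcal{U}(\mathcal{A})$ (paying a factor $|\mathcal{A}|$) yields the additive summand $\sqrt{(1-\gamma)|\mathcal{A}|\,\mathbb{E}_{s\sim\rho_n,\, a\sim\mathcal{U}(\mathcal{A})}[g^2(s,a)]}$.

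For the one-step term I would use the linear-MDP factorization $T^*(s'|s,a)=\langle p^*(\cdot|s,a),\, p^*(s'|\cdot)\rangle_{L_2(\mu)}$ to rewrite the inner expectation as an $L_2(\mu)$ inner product of $p^*(\cdot|\widetilde{s},\widetilde{a})$ against $h:=\int_{\mathcal{S}}\sum_{a\in\mathcal{A}} p^*(s|\cdot)\pi(a|s)g(s,a)\,ds$, then apply the weighted Cauchy-Schwartz inequality recorded above in the geometry of $\Sigma_{\rho_n,p^*}$. This produces the factor $\mathbb{E}_{(\widetilde{s},\widetilde{a})\sim d_{T^*}^\pi}\|p^*(\cdot|\widetilde{s},\widetilde{a})\|_{L_2(\mu),\Sigma_{\rho_n,p^*}^{-1}}$ times $\gamma\|h\|_{L_2(\mu),\Sigma_{\rho_n,p^*}}$. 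Expanding $\gamma^2\|h\|_{L_2(\mu),\Sigma_{\rho_n,p^*}}^2 = \gamma^2 n\,\mathbb{E}_{(\widetilde{s},\widetilde{a})\sim\rho_n}\{\mathbb{E}_{s\sim T^*(\cdot|\widetilde{s},\widetilde{a}),a\sim\pi}[g]\}^2 + \gamma^2\lambda\|h\|_{\mathcal{H}_k}^2$, I would drop the square with Jensen, bound $\lambda\|h\|_{\mathcal{H}_k}^2\le\lambda B^2 C$ using Assumption~\ref{assumption:normalization}, and importance-sample the future action to pay $|\mathcal{A}|$. Unlike the learned-model case, there is no MLE error term $\zeta_n$, since we operate directly on $T^*$.

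The step that genuinely differs from Lemma~\ref{lem:back_learned_online} is folding the pushed-forward next-state law back onto $\rho_n$. For this I would invoke the occupancy-measure recursion from the preliminaries: summing $d_{T^*}^{\pi_i}(s,a)=(1-\gamma)d_0\,\pi(a|s)+\gamma\,\mathbb{E}_{d_{T^*}^{\pi_i}}[T^*(s|\widetilde{s},\widetilde{a})\pi(a|s)]$ over $a$ and averaging over $i=1,\dots,n-1$ gives the pointwise inequality
\begin{align*}
\gamma\,\mathbb{E}_{(\widetilde{s},\widetilde{a})\sim\rho_n}[T^*(s|\widetilde{s},\widetilde{a})] \le \rho_n(s).
\end{align*}
Since $g^2\ge 0$, this spends one factor of $\gamma$ to replace $\gamma\,\mathbb{E}_{(\widetilde{s},\widetilde{a})\sim\rho_n,\, s\sim T^*(\cdot|\widetilde{s},\widetilde{a}),\, a\sim\mathcal{U}(\mathcal{A})}[g^2]$ by $\mathbb{E}_{s\sim\rho_n,\, a\sim\mathcal{U}(\mathcal{A})}[g^2]$, leaving $n\gamma|\mathcal{A}|\,\mathbb{E}_{s\sim\rho_n,a\sim\mathcal{U}(\mathcal{A})}[g^2(s,a)] + \lambda\gamma^2 B^2 C$ under the square root, as claimed. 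The main obstacle I anticipate is precisely this $\gamma$-power bookkeeping: one must split $\gamma^2=\gamma\cdot\gamma$, spending exactly one $\gamma$ on the occupancy contraction while keeping the other, so that the first summand ends up with $\gamma$ and the regularization summand with $\gamma^2$.
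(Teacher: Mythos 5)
Your proposal is correct and follows essentially the same route as the paper's proof: the same Bellman-flow decomposition, the same weighted Cauchy--Schwartz step in the $\Sigma_{\rho_n,p^*}$ geometry, Jensen plus importance sampling over actions paying $|\mathcal{A}|$, and the occupancy-flow inequality to fold the pushed-forward next-state law back onto $\rho_n$. Your explicit $\gamma$-bookkeeping, spending one $\gamma$ via $\gamma\,\mathbb{E}_{(\widetilde{s},\widetilde{a})\sim\rho_n}\left[T^*(s|\widetilde{s},\widetilde{a})\right]\le\rho_n(s)$ while keeping $\gamma^2$ on the regularization term, is in fact slightly more careful than the paper, which records this folding step as an equality when it is only an upper bound.
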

\begin{proof}
    By the property of the stationary distribution, we have
    \begin{align}
        \mathbb{E}_{(s, a) \sim d_{T^*}^\pi} [g(s, a)] = \gamma \mathbb{E}_{(\widetilde{s}, \widetilde{a}) \sim d_{T^*}^\pi, s \sim T^*(\cdot|\widetilde{s}, \widetilde{a}), a\sim \pi(\cdot|s)} [g(s, a)] + (1-\gamma) \mathbb{E}_{s\sim d_0, a\sim \pi(\cdot|s)} [g(s, a)].
    \end{align}
    For the second term, we still use the following upper bound:
    \begin{align*}
        (1-\gamma) \mathbb{E}_{s\sim d_0, a\sim \pi(\cdot|s)} [g(s, a)] \leq \sqrt{(1-\gamma) |\mathcal{A}|\mathbb{E}_{s\sim \rho_n, a\sim \mathcal{U}(\mathcal{A})} [g^2(s, a)]}.
    \end{align*}
    For the first term, with the Cauchy-Schwartz inequality, we have
    \begin{align*}
        & \gamma \mathbb{E}_{(\widetilde{s}, \widetilde{a}) \sim d_{T^*}^\pi, s \sim T^*(\cdot\widetilde{s}, \widetilde{a}), a\sim \pi(\cdot|s)} [g(s, a)]\\
        = & \gamma \mathbb{E}_{(\widetilde{s}, \widetilde{a})\sim d_{T^*}^\pi} \left\langle p^*(\cdot|\widetilde{s}, \widetilde{a}), \int_{\mathcal{S}} \sum_{a\in \mathcal{A}} p^*(s|\cdot) \pi(a|s) g(s, a) ds\right\rangle_{L_2(\mu)}\\
        \leq & \gamma \mathbb{E}_{(\widetilde{s}, \widetilde{a})\sim d_{T^*}^\pi} \left\|p^*(\cdot|\widetilde{s}, \widetilde{a})\right\|_{L_2(\mu), \Sigma_{\rho_n, p^*}^{-1}} \left\| \int_{\mathcal{S}} \sum_{a\in\mathcal{A}} p^*(s|\cdot) \pi(a|s) g(s, a) ds\right\|_{L_2(\mu), \Sigma_{\rho_n, p^*}^2}.
    \end{align*}
    Note that
    \begin{align*}
        & \left\| \int_{\mathcal{S}} \sum_{a\in\mathcal{A}} p^*(s|\cdot) \pi(a|s) g(s, a) ds\right\|_{L_2(\mu), \Sigma_{\rho_n, p^*}^2}^2\\
        = & n \mathbb{E}_{(\widetilde{s},\widetilde{a}) \sim \rho_n}\left\{\mathbb{E}_{s\sim T^*(\cdot|\widetilde{s}, \widetilde{a}), a\sim \pi(\cdot|s)}  \left[g(s, a) \right]\right\}^2 + \lambda \left\|\int_{\mathcal{S}} \sum_{a\in\mathcal{A}} \hat{p}_n(s|\cdot) \pi(a|s) g(s, a) ds\right\|_{\mathcal{H}_k}\\
        \leq & n \mathbb{E}_{(\widetilde{s}, \widetilde{a}) \sim \rho_n, s\sim T^*(\cdot|\widetilde{s}, \widetilde{a}), a\sim \pi(\cdot|s)}  \left[g^2(s, a) \right] + \lambda B^2 C\\
        \leq & n|\mathcal{A}|\mathbb{E}_{(\widetilde{s}, \widetilde{a}) \sim \rho_n, s\sim T^*(\cdot|\widetilde{s}, \widetilde{a}), a\sim \mathcal{U}(\mathcal{A})}  \left[g^2(s, a) \right] + \lambda B^2 C \\
        = & \frac{n|\mathcal{A}|}{\gamma}\mathbb{E}_{s\sim \rho_n, a\sim \mathcal{U}(\mathcal{A})}  \left[g^2(s, a) \right] + \lambda B^2 C
    \end{align*}
    Substitute back, we obtain the desired result.
\end{proof}

\begin{lemma}[Almost Optimism at the Initial Distribution]
\label{lem:optimism} 
Consider an episode $n \in [N]$, if we set $\zeta_n = \Theta\left(\frac{\log (n|\mathcal{P}|/\delta)}{n}\right)$ (such that the MLE generalization bound holds by Lemma~\ref{lem:mle}), $\lambda$ for different eigendecay condition as follows:
\begin{itemize}
    \item $\beta$-finite spectrum: $\lambda = \Theta(\beta\log N + \log (N|\mathcal{P}|/\delta))$
    \item $\beta$-polynomial decay: $\lambda = \Theta(C_{\mathrm{poly}}N^{1/(1 + \beta)} + \log (N|\mathcal{P}|/\delta))$;
    \item $\beta$-exponential decay: $\lambda = \Theta(C_{\mathrm{exp}}(\log N)^{1/\beta} + \log (N|\mathcal{P}|/\delta))$;
\end{itemize}
and $\alpha_n = \Theta\left(\frac{\gamma}{1-\gamma}\sqrt{|\mathcal{A}|\log(n|\mathcal{P}|/\delta) + \lambda C}\right)$, the following events hold with probability at least $1-\delta$:
\begin{align*}
    \forall n \in [N], \quad \forall \pi, \quad V_{\hat{T}_n, r + \hat{b}_n}^\pi - V_{T^*, r}^\pi \geq - \sqrt{\frac{|\mathcal{A}|\zeta_n}{(1-\gamma)^3}}.
\end{align*}
\end{lemma}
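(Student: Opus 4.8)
The plan is to isolate the nonnegative bonus contribution and show that it dominates the model-mismatch term up to the stated lower-order error. Fixing an arbitrary $\pi$ and using $V_{T,r}^\pi=\mathbb{E}_{(s,a)\sim d_{T}^\pi}[r(s,a)]$ together with the linearity of the value in the reward, I would first decompose
\begin{align*}
V_{\hat T_n, r+\hat b_n}^\pi - V_{T^*, r}^\pi = \underbrace{\mathbb{E}_{(s,a)\sim d_{\hat T_n}^\pi}[\hat b_n(s,a)]}_{(\mathrm{I})} + \underbrace{\left(V_{\hat T_n, r}^\pi - V_{T^*, r}^\pi\right)}_{(\mathrm{II})},
\end{align*}
where $(\mathrm{I})\geq 0$ because $\hat b_n\geq 0$. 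Applying the value-difference (simulation) lemma with the \emph{learned} model $\hat T_n$ (whose occupancy measure is the one that surfaces) and the true model $T^*$ gives
\begin{align*}
(\mathrm{II}) = \gamma\,\mathbb{E}_{(s,a)\sim d_{\hat T_n}^\pi}\left[\int_{\mathcal{S}}\left(\hat T_n(s'|s,a)-T^*(s'|s,a)\right)V_{T^*, r}^\pi(s')\,ds'\right].
\end{align*}
It therefore suffices to prove $|(\mathrm{II})|\leq(\mathrm{I})+\sqrt{|\mathcal{A}|\zeta_n/(1-\gamma)^3}$, i.e.\ that the bonus covers the (possibly negative) model error.

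Next I would set $g(s,a)=\gamma\int_{\mathcal{S}}(\hat T_n-T^*)(s'|s,a)V_{T^*,r}^\pi(s')\,ds'$, so that $(\mathrm{II})=\mathbb{E}_{d_{\hat T_n}^\pi}[g]$ and, since $V_{T^*,r}^\pi\in[0,(1-\gamma)^{-1}]$ and $\hat T_n,T^*$ are distributions, $\|g\|_\infty\leq B:=\gamma/(1-\gamma)$. Then I apply the one-step-back inequality for the learned model (Lemma~\ref{lem:back_learned_online}) to $g$. To feed that lemma I condition on the MLE generalization event (Lemma~\ref{lem:mle}): because both $\mathcal{D}_n$ (drawn from $\rho_n\times\mathcal{U}(\mathcal{A})$) and $\mathcal{D}_n'$ (whose next-state marginal is $\rho_n'\times\mathcal{U}(\mathcal{A})$) feed the ELBO in~\eq{eq:ELBO}, and since under Assumption~\ref{assumption:function_class} the ELBO maximizer coincides with the MLE, one obtains $\mathbb{E}_{\rho_n,\mathcal{U}(\mathcal{A})}[\|\hat T_n-T^*\|_1]\leq\zeta_n$ and $\mathbb{E}_{\rho_n',\mathcal{U}(\mathcal{A})}[\|\hat T_n-T^*\|_1]\leq\zeta_n$. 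Using $\|\cdot\|_1\leq 2$ gives $g^2\leq 2B^2\|\hat T_n-T^*\|_1$, hence both $\mathbb{E}_{\rho_n',\mathcal{U}(\mathcal{A})}[g^2]$ and $\mathbb{E}_{\rho_n,\mathcal{U}(\mathcal{A})}[g^2]$ are $O(B^2\zeta_n)$. With $n\zeta_n=\Theta(\log(n|\mathcal{P}|/\delta))$, the square-root factor $\sqrt{n|\mathcal{A}|\mathbb{E}_{\rho_n'}[g^2]+\lambda B^2C+nB^2\zeta_n}$ inside Lemma~\ref{lem:back_learned_online} collapses to $\Theta\!\left(\tfrac{\gamma}{1-\gamma}\sqrt{|\mathcal{A}|\log(n|\mathcal{P}|/\delta)+\lambda C}\right)=\Theta(\alpha_n)$, matching exactly the chosen confidence radius.

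It then remains to convert the leading factor $\gamma\,\mathbb{E}_{d_{\hat T_n}^\pi}\|\hat p_n(\cdot|\tilde s,\tilde a)\|_{L_2(\mu),\Sigma_{\rho_n\times\mathcal{U}(\mathcal{A}),\hat p_n}^{-1}}$ into the bonus. Here I invoke the bonus-concentration lemma (Lemma~\ref{lem:bonus_concentration}) to compare the population covariance operator $\Sigma_{\rho_n\times\mathcal{U}(\mathcal{A}),\hat p_n}$ with the empirical $\hat\Sigma_{n,\hat p_n}$ appearing in $\hat b_n$, so that $\|\hat p_n\|_{\Sigma^{-1}}\lesssim\|\hat p_n\|_{\hat\Sigma^{-1}}$; the admissible scale of $\lambda$ in the three eigendecay regimes is precisely what makes this comparison hold. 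Combining, the leading one-step-back term is $\leq\gamma\,\Theta(1)\,\mathbb{E}_{d_{\hat T_n}^\pi}[\alpha_n\|\hat p_n\|_{\hat\Sigma^{-1}}]$, which for the prescribed constant in $\alpha_n$ and $\gamma<1$ is at most $(\mathrm{I})$. The remaining residual $\sqrt{(1-\gamma)|\mathcal{A}|\mathbb{E}_{\rho_n}[g^2]}\leq\gamma\sqrt{2|\mathcal{A}|\zeta_n/(1-\gamma)}\leq\sqrt{|\mathcal{A}|\zeta_n/(1-\gamma)^3}$, which is the target error. Finally a union bound over $n\in[N]$ (absorbed into the $\log(N|\mathcal{P}|/\delta)$ terms inside $\lambda$ and $\zeta_n$), together with the fact that every step holds for all $\pi$, yields the uniform statement with probability $1-\delta$.

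The hard part will not be the decomposition but two infinite-dimensional issues. First, the operator-valued concentration between the empirical and population covariances on $L_2(\mu)$ (Lemma~\ref{lem:bonus_concentration}) and the companion RKHS ellipsoid-potential bound (Lemma~\ref{lem:potential_function_RKHS}) must replace the finite-dimensional log-determinant arguments of REP-UCB, and it is exactly their validity ranges that determine the regime-dependent choices of $\lambda$ via the kernel's effective dimension. Second, the truncation $\hat b_n=\min\{\alpha_n\|\hat p_n\|_{\hat\Sigma^{-1}},2\}$ breaks the clean identity $\hat b_n=\alpha_n\|\hat p_n\|_{\hat\Sigma^{-1}}$, so one must split $\mathcal{S}\times\mathcal{A}$ at the threshold $\alpha_n\|\hat p_n\|_{\hat\Sigma^{-1}}=2$ and, on the high-uncertainty region, bound the model-error integrand directly by $B$ rather than through the confidence ellipsoid, then check this contribution still fits within the generous $(1-\gamma)^{-3}$ error budget.
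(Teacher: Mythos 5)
Your proposal follows essentially the same route as the paper's own proof: the simulation-lemma decomposition into a nonnegative bonus term plus a model-error term, H\"older's inequality to reduce the error to the TV-distance integrand, the one-step-back inequality (Lemma~\ref{lem:back_learned_online}) fed by the MLE/ELBO generalization bound (Lemma~\ref{lem:mle}), the empirical-vs-population covariance comparison (Lemma~\ref{lem:bonus_concentration}), the same choice of $\alpha_n$ absorbing the square-root factor, and the same residual fitting the $(1-\gamma)^{-3}$ budget, concluded by a union bound over $n$ and $\mathcal{P}$. The one point of departure is that you explicitly flag the clipping $\min\{\cdot,2\}$ in $\hat{b}_n$ as requiring a separate argument on the high-uncertainty region; the paper's proof silently passes over this (it lower-bounds the value gap using the clipped bonus but dominates the model error by the \emph{unclipped} ellipsoid term $\alpha_n\|\hat{p}_n\|_{L_2(\mu),\Sigma^{-1}}$), so your extra care there identifies a real gap in the written argument rather than introducing an unnecessary complication.
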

\begin{proof}
    With Lemma~\ref{lem:bonus_concentration} and a union bound over $\mathcal{P}$, we know using the chosen $\lambda$, $\forall \hat{T}_n \in \mathcal{P}$, with probability at least $1-\delta$, 
    \begin{align*}
        \left\|\hat{p}_n(\cdot|s, a)\right\|_{L_2(\mu), \hat{\Sigma}_{n, \hat{p}_n}^{-1}} = \Theta\left(\left\|\hat{p}_n(\cdot|s, a)\right\|_{L_2(\mu), \Sigma_{\rho_n \times \mathcal{U}(\mathcal{A}), \hat{p}_n}}^{-1}\right).
    \end{align*} 
    With Lemma~\ref{lem:simulation}, we have that
    \begin{align*}
        & (1-\gamma)\left(V_{\hat{T}_n, r + \hat{b}_n}^\pi - V_{T^*, r}^\pi\right)\\
        = & \mathbb{E}_{(s, a)\sim d_{\hat{T}_n}^\pi}\left[b_n(s, a) + \gamma \mathbb{E}_{\hat{T}_n(s^\prime|s, a)}\left[V_{T, r}^{\pi}(s^\prime)\right] - \gamma\mathbb{E}_{P(s^\prime|s, a)}\left[V_{T, r}^\pi(s^\prime)\right]\right]\\
        \succsim & \mathbb{E}_{(s, a) \sim d_{\hat{T}_n}^\pi} \left[\min\left\{\alpha_n \left\|\hat{p}_n(\cdot|s, a)\right\|_{L_2(\mu), \Sigma_{\rho_n\times \mathcal{U}(\mathcal{A}), \hat{p}_n}^{-1}}, 2\right\} + \gamma \mathbb{E}_{\hat{T}_n(s^\prime|s, a)}\left[V_{T^*, r}^{\pi}(s^\prime)\right] - \gamma\mathbb{E}_{T^*(s^\prime|s, a)}\left[V_{T^*, r}^\pi(s^\prime)\right]\right].
    \end{align*}
    Denote $f_n(s, a) = \mathrm{TV}(T^*(s^\prime|s, a), \hat{T}_n(s^\prime|s, a))$ with $\|f_n\|_{\infty} \leq 2$, with H\"older's inequality, we have that
    \begin{align*}
         \left|\mathbb{E}_{(s, a) \sim d_{\hat{T}_n}^\pi}\left[\mathbb{E}_{\hat{T}_n(s^\prime|s, a)}\left[V_{T, r}^{\pi}(s^\prime)\right] - \mathbb{E}_{T^*(s^\prime|s, a)}\left[V_{T, r}^\pi(s^\prime)\right]\right]\right| \leq   \mathbb{E}_{(s, a) \sim d_{\hat{T}_n}^\pi} \left[\frac{f_n(s, a)}{1-\gamma}\right].
    \end{align*}
    With Lemma~\ref{lem:back_learned_online}, we have that
    \begin{align*}
        & \mathbb{E}_{(s, a) \sim d_{\hat{T}_n}^\pi} \left[\frac{f_n(s, a)}{1-\gamma}\right] \\
        \leq & \mathbb{E}_{(\widetilde{s}, \widetilde{a}) \sim d_{\hat{T}_n}^\pi} \left\|\hat{p}_n(\cdot|s, a)\right\|_{L_2(\mu), \Sigma_{\rho_n\times \mathcal{U}(\mathcal{A}), \hat{p}_n}^{-1}} \sqrt{\frac{n\gamma^2|\mathcal{A}|\mathbb{E}_{s\sim\rho_n^\prime, a\sim\mathcal{U}(\mathcal{A})}\left[f_n^2(s, a)\right]}{(1-\gamma)^2} + \frac{4\lambda \gamma^2 C}{(1-\gamma)^2} + \frac{4n\gamma^2\zeta_n}{(1-\gamma)^2} } \\
        & + \sqrt{\frac{|\mathcal{A}|\mathbb{E}_{s\sim\rho_n, a\sim\mathcal{U}(\mathcal{A})}\left[f_n^2(s, a)\right]}{1-\gamma}}\\
        \leq & \mathbb{E}_{(\widetilde{s}, \widetilde{a}) \sim d_{\hat{T}_n}^\pi} \left\|\hat{p}_n(\cdot|s, a)\right\|_{L_2(\mu), \Sigma_{\rho_n\times \mathcal{U}(\mathcal{A}), \hat{p}_n}^{-1}} \sqrt{\frac{n\gamma^2|\mathcal{A}|\zeta_n}{(1-\gamma)^2} + \frac{4\lambda \gamma^2C}{(1-\gamma)^2} + \frac{4 \gamma^2n \zeta_n}{(1-\gamma)^2}} + \sqrt{\frac{|\mathcal{A}|\zeta_n}{1-\gamma}}
    \end{align*}
    Note that, we set $\alpha_n$ such that
    \begin{align*}
         \sqrt{\frac{n\gamma^2|\mathcal{A}|\zeta_n}{(1-\gamma)^2} + \frac{4\lambda \gamma^2 C}{(1-\gamma)^2} + \frac{4 n \gamma^2 \zeta_n}{(1-\gamma)^2}} \lesssim \alpha_n,
    \end{align*}
    which concludes the proof.
\end{proof}
\begin{lemma}[Regret]
    \label{lem:regret}
    With probability at least $1-\delta$, we have that 
    \begin{itemize}
    \item For $\beta$-finite spectrum, we have
    \begin{align*}
        \sum_{n=1}^N V_{T^*, r}^{\pi^*} - V_{T^*}^{\pi_n} \lesssim \frac{\beta^{3/2}|\mathcal{A}|\sqrt{CN}\log (N |\mathcal{P}|/\delta)}{(1-\gamma)^2}.
    \end{align*}
    \item For $\beta$-polynomial decay, we have
    \begin{align*}
        \sum_{n=1}^N V_{T^*, r}^{\pi^*} - V_{T^*}^{\pi_n} \lesssim \frac{C_{\mathrm{poly}}\sqrt{C} |\mathcal{A}|N^{\frac{1}{2} + \frac{1}{2(1 + \beta)}}  \log (N |\mathcal{P}|/\delta)}{(1-\gamma)^2}.
    \end{align*}
    \item For $\beta$-exponential decay, we have
    \begin{align*}
        \sum_{n=1}^N V_{T^*, r}^{\pi^*} - V_{T^*}^{\pi_n} \lesssim \frac{C_{\mathrm{exp}}|\mathcal{A}| \sqrt{CN} (\log N)^{\frac{3 + 2\beta}{2\beta}} \log (N |\mathcal{P}|/\delta)}{(1-\gamma)^2}.
    \end{align*}
\end{itemize}
\end{lemma}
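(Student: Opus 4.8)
The plan is the standard optimism-based regret decomposition, where essentially all the new difficulty sits in controlling the cumulative exploration bonus in the infinite-dimensional RKHS. First I would combine the almost-optimism guarantee of Lemma~\ref{lem:optimism} with the greedy choice $\pi_n=\argmax_\pi V^\pi_{\hat T_n,r+\hat b_n}$: for each $n$ we get $V^{\pi^*}_{T^*,r}\le V^{\pi^*}_{\hat T_n,r+\hat b_n}+\sqrt{|\mathcal{A}|\zeta_n/(1-\gamma)^3}\le V^{\pi_n}_{\hat T_n,r+\hat b_n}+\sqrt{|\mathcal{A}|\zeta_n/(1-\gamma)^3}$, so the per-episode regret $V^{\pi^*}_{T^*,r}-V^{\pi_n}_{T^*,r}$ is bounded by $V^{\pi_n}_{\hat T_n,r+\hat b_n}-V^{\pi_n}_{T^*,r}$ plus a lower-order term. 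The simulation lemma (Lemma~\ref{lem:simulation}) then rewrites $(1-\gamma)(V^{\pi_n}_{\hat T_n,r+\hat b_n}-V^{\pi_n}_{T^*,r})$ as $\mathbb{E}_{d^{\pi_n}_{\hat T_n}}[\hat b_n]+\gamma\,\mathbb{E}_{d^{\pi_n}_{\hat T_n}}[\mathbb{E}_{\hat T_n(\cdot|s,a)}[V^{\pi_n}_{T^*,r}]-\mathbb{E}_{T^*(\cdot|s,a)}[V^{\pi_n}_{T^*,r}]]$.

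\textbf{Reducing to a cumulative bonus.} Exactly as in the proof of Lemma~\ref{lem:optimism}, I would control the model-mismatch term through $f_n=\mathrm{TV}(T^*,\hat T_n)$ using the one-step-back inequality for the learned model (Lemma~\ref{lem:back_learned_online}) together with the MLE generalization bound (Lemma~\ref{lem:mle}, i.e. $\mathbb{E}_{\rho_n\times\mathcal{U}(\mathcal{A})}\|\hat T_n-T^*\|_1\le\zeta_n$). The calibration $\alpha_n=\Theta(\tfrac{\gamma}{1-\gamma}\sqrt{|\mathcal{A}|\log(n|\mathcal{P}|/\delta)+\lambda C})$ is precisely what makes this mismatch term dominated by $\mathbb{E}_{d^{\pi_n}_{\hat T_n}}[\hat b_n]$. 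Summing over episodes and using $\zeta_n=\Theta(\log(n|\mathcal{P}|/\delta)/n)$, so that $\sum_n\sqrt{\zeta_n}=O(\sqrt{N\log(N|\mathcal{P}|/\delta)})$, the whole regret reduces, up to lower-order terms, to $\tfrac{1}{1-\gamma}\sum_{n=1}^N\mathbb{E}_{d^{\pi_n}_{\hat T_n}}[\hat b_n]$.

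\textbf{Bounding the cumulative bonus.} The core is $\sum_n\alpha_n\,\mathbb{E}_{d^{\pi_n}_{\hat T_n}}\min\{\|\hat p_n(\cdot|s,a)\|_{L_2(\mu),\hat\Sigma_{n,\hat p_n}^{-1}},2/\alpha_n\}$. I would first apply the bonus-concentration lemma (Lemma~\ref{lem:bonus_concentration}) to replace the empirical operator $\hat\Sigma_{n,\hat p_n}$ by the population operator $\Sigma_{\rho_n\times\mathcal{U}(\mathcal{A}),\hat p_n}$ up to constants; then relate the query/planning distribution $d^{\pi_n}_{\hat T_n}$ to the on-policy true distribution $d^{\pi_n}_{T^*}$ (which feeds the next-episode data $\rho_{n+1}$) via the one-step-back inequalities, incurring an extra $|\mathcal{A}|$ factor from the uniform-action importance weighting; and finally use Cauchy--Schwarz over the $N$ episodes to pass to $\sqrt{N}$ times the square root of a sum of squared weighted norms. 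That sum is controlled by the RKHS elliptical-potential lemma (Lemma~\ref{lem:potential_function_RKHS}), whose bound is governed by the eigendecay of $k$; the three regimes dictate the regularization scalings $\lambda=\Theta(\beta\log N)$, $\Theta(C_{\mathrm{poly}}N^{1/(1+\beta)})$ and $\Theta(C_{\mathrm{exp}}(\log N)^{1/\beta})$ (up to $\log(N|\mathcal{P}|/\delta)$), which enter both the potential bound and $\alpha_n$. Substituting these and collecting the $|\mathcal{A}|$- and $\log$-factors reproduces the three stated rates.

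\textbf{Main obstacle.} The delicate point is the potential bound. Unlike finite-dimensional REP-UCB, here the covariance acts on $L_2(\mu)$ and is regularized by the \emph{unbounded} operator $\lambda T_k^{-1}$, the feature $\hat p_n$ is \emph{re-estimated} every episode, and the bonus is evaluated under $d^{\pi_n}_{\hat T_n}$ rather than the distribution that builds $\hat\Sigma_n$. Making the elliptical potential genuinely telescope in this setting---so that the cumulative squared norm is controlled by a finite, eigendecay-dependent ``effective dimension'' rather than the ambient $|\mathcal{Z}|$---requires splitting the spectrum of $T_k$ at the regularization level and bounding the tail through the trace condition of Assumption~\ref{assump:trace}. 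This is exactly the content of Lemma~\ref{lem:potential_function_RKHS}, and it is where the regime-specific choice of $\lambda$ is forced.
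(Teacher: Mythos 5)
Your skeleton (optimism plus the greedy choice of $\pi_n$, simulation lemma, reduction to a cumulative bonus plus mismatch, one-step-back inequalities, Cauchy--Schwarz plus an elliptical potential, eigendecay-specific $\lambda$) matches the paper's, but there is a genuine gap at the central step, and it is exactly the step your ``main obstacle'' paragraph worries about. You invoke Lemma~\ref{lem:simulation} in the form whose expectation is under the \emph{learned}-model occupancy $d_{\hat T_n}^{\pi_n}$, and thus reduce the regret to $\sum_n \mathbb{E}_{d_{\hat T_n}^{\pi_n}}[\hat b_n]$. The paper deliberately uses the \emph{other} form of Lemma~\ref{lem:simulation} for the regret (it uses your form only for optimism), so that every term is an expectation under $d_{T^*}^{\pi_n}$ --- the distribution that actually generates the data entering $\rho_{n+1}$, and the only one against which an elliptical potential can telescope. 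Your proposed bridge, ``relate $d_{\hat T_n}^{\pi_n}$ to $d_{T^*}^{\pi_n}$ via the one-step-back inequalities,'' is not something those lemmas do: Lemma~\ref{lem:back_learned_online} bounds $\mathbb{E}_{d_{\hat T_n}^{\pi}}[g]$ by second moments of $g$ under the data distribution, with an \emph{outer factor} $\mathbb{E}_{d_{\hat T_n}^{\pi}}\|\hat p_n(\cdot|s,a)\|_{L_2(\mu),\Sigma^{-1}_{\rho_n\times\mathcal{U}(\mathcal{A}),\hat p_n}}$; applying it with $g=\hat b_n$ is circular, since that outer factor is precisely the (uncapped) bonus you are trying to bound, while Lemma~\ref{lem:back_true_online} only applies to expectations that are already under $d_{T^*}^{\pi}$.

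Second, the changing-feature problem is not resolved where you claim it is. Lemma~\ref{lem:potential_function_RKHS} is a log-determinant bound for a \emph{single} fixed feature map and a single distribution $\nu$; it cannot absorb the fact that $\hat p_n$ is re-estimated every episode. In the paper this is handled by the two-level structure produced by Lemma~\ref{lem:back_true_online} applied with $g=b_n$ and $g=f_n$: the outer factor $\mathbb{E}_{d_{T^*}^{\pi_n}}\|p^*(\cdot|s,a)\|_{L_2(\mu),\Sigma_{\rho_n,p^*}^{-1}}$ involves only the \emph{fixed} representation $p^*$ and the accumulating mixture $\rho_n$, so after Cauchy--Schwarz over episodes it telescopes through concavity of $\log\mathrm{det}$ and is finished by Lemma~\ref{lem:potential_function_RKHS}; the episode-dependent $\hat p_n$ survives only inside the \emph{inner} self-normalized term $n\,\mathbb{E}_{\rho_n\times\mathcal{U}(\mathcal{A})}\|\hat p_n(\cdot|s,a)\|^2_{L_2(\mu),\Sigma^{-1}_{\rho_n\times\mathcal{U}(\mathcal{A}),\hat p_n}}$, whose defining distribution matches its own covariance and which is therefore bounded \emph{per episode} by a trace/log-det inequality plus Lemma~\ref{lem:potential_function_RKHS} --- no telescoping over $n$ is ever attempted with $\hat p_n$. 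Without this transfer from the $\hat p_n$-geometry to the $p^*$-geometry, your ``sum of squared weighted norms'' neither telescopes nor falls under any lemma in the paper, so the argument stalls exactly where the three eigendecay-dependent rates are supposed to emerge; once you install the transfer (i.e., switch to the other simulation-lemma form and run Lemma~\ref{lem:back_true_online} on the bonus and the TV error), your outline collapses into the paper's proof.
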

\begin{proof}
With Lemma~\ref{lem:optimism} and Lemma~\ref{lem:simulation}, we have that
\begin{align*}
    & V_{T^*, r}^{\pi^*} - V_{T^*, r}^{\pi_n}\\
    \leq & V_{\hat{T}_n, r + b_n}^{\pi^*} + \sqrt{\frac{|\mathcal{A}|\zeta_n}{(1-\gamma)^3}} - V_{T^*, r}^{\pi_n}\\
    \leq & V_{\hat{T}_n, r + b_n}^{\pi_n} + \sqrt{\frac{|\mathcal{A}|\zeta_n}{(1-\gamma)^3}}  - V_{T^*, r}^{\pi_n}\\
    \leq & \frac{1}{1-\gamma}\mathbb{E}_{(s, a)\sim d_{T^*}^{\pi_n}}\left[b_n(s, a) + \gamma\mathbb{E}_{\hat{T}_n(s^\prime|s, a)}[V_{\hat{T}_n, r + b_n}^{{\pi_n}}(s^\prime)] - \gamma\mathbb{E}_{T^*(s^\prime|s, a)}[V_{\hat{T}_n, r + b_n}^{\pi_n}(s^\prime)]\right] + \sqrt{\frac{|\mathcal{A}|\zeta_n}{(1-\gamma)^3}}.
\end{align*}
Applying Lemma~\ref{lem:back_true_online} and note that $b_n = O(1)$, we have that
\begin{align*}
    & \mathbb{E}_{(s, a)\sim d_{T^*}^{\pi_n}}[b_n(s, a)]\\
    \lesssim & \mathbb{E}_{(s, a) \sim d_{T^*}^{\pi_n}}\left[\min\left\{\alpha_n \left\|\hat{p}_n(\cdot|s, a)\right\|_{L_2(\mu), \Sigma_{\rho_n\times \mathcal{U}(\mathcal{A}), \hat{p}_n}^{-1}}, 2\right\}\right]\\
    \lesssim & \mathbb{E}_{(\widetilde{s}, \widetilde{a}) \sim d_{T^*}^{\pi_n}} \left\|p^*(\cdot|\widetilde{s}, \widetilde{a})\right\|_{\Sigma_{\rho_n, p^*}^{-1}} \sqrt{n\gamma |\mathcal{A}|\alpha_n^2 \mathbb{E}_{s\sim \rho_n, a\sim\mathcal{U}(\mathcal{A})}\left[\left\|\hat{p}_n(\cdot|s, a)\right\|_{L_2(\mu), \Sigma_{\rho_n\times \mathcal{U}(\mathcal{A}), \hat{p}_n}^{-1}}^2\right] + \lambda \gamma^2 C}\\
    & + \sqrt{(1-\gamma) |\mathcal{A}|\alpha_n^2 \mathbb{E}_{s\sim \rho_n, a\sim\mathcal{U}(\mathcal{A})}\left[\left\|\hat{p}_n(\cdot|s, a)\right\|_{L_2(\mu), \Sigma_{\rho_n\times \mathcal{U}(\mathcal{A}), \hat{p}_n}^{-1}}^2\right]}.
\end{align*}
Note that,
\begin{align*}
     & n\mathbb{E}_{s\sim \rho_n, a\sim\mathcal{U}(\mathcal{A})}\left[\left\|\hat{p}_n(\cdot|s, a)\right\|_{L_2(\mu), \Sigma_{\rho_n\times \mathcal{U}(\mathcal{A}), \hat{p}_n}^{-1}}^2\right]\\
     = & \mathrm{Tr} \left(n\mathbb{E}_{s\sim\rho_n, a\sim\mathcal{U}(\mathcal{A})}\left[\hat{p}_n(\cdot|s, a) \hat{p}_n(\cdot|s, a)^\top \right]\left(n \mathbb{E}_{s\sim\rho_n, a\sim\mathcal{U}(\mathcal{A})}\left[\hat{p}_n(\cdot|s, a) \hat{p}_n(\cdot|s, a)^\top \right] + \lambda T_k^{-1}\right)^{-1}\right)\\
     = & \mathrm{Tr} \left(n\mathbb{E}_{s\sim\rho_n, a\sim\mathcal{U}(\mathcal{A})}\left[T_k^{1/2}\hat{p}_n(\cdot|s, a) \hat{p}_n(\cdot|s, a)^\top T_k^{1/2} \right]\left(n \mathbb{E}_{s\sim\rho_n, a\sim\mathcal{U}(\mathcal{A})}\left[T_k^{1/2}\hat{p}_n(\cdot|s, a) \hat{p}_n(\cdot|s, a)^\top T_k^{1/2} \right] + \lambda I\right)^{-1}\right)\\
     \leq & \log \mathrm{det}\left(I + \frac{n}{\lambda} \mathbb{E}_{s\sim \rho_n, a\sim \mathcal{U}(\mathcal{A})}\left[T_k^{1/2}\hat{p}_n(\cdot|s, a)^\top\hat{p}_n(\cdot|s, a)T_k^{1/2}\right]\right),
\end{align*}
where the first equality is due to the definition of Hilbert-Schmidt inner product and the expectation operator is a linear operator, the second equality is due to the fact that $\mathrm{Tr}(A(A + B)^{-1}) = \mathrm{Tr}\left(\left(B^{-1/2} A B^{-1/2}\right) \left(I + B^{-1/2} A B^{-1/2}\right)^{-1}\right)$ for positive semi-definite operator $A$ and positive definite operator $B$, and the last inequality is due to the fact that if $A$ has the eigensystem $\{\mu_i, e_i\}$, then $A(A + \lambda I)^{-1}$ has the eigensystem $\{\frac{\mu_i}{\mu_i + \lambda}, e_i\}$, and $\frac{x}{1+x} \leq \log(1 + x)$. Here $\mathrm{det}$ denotes the Fredholm determinant. Note that, if $x \in \mathcal{B}_{\mathcal{H}_{k}}$, $T_k^{1/2} x \in \mathcal{B}_{\mathcal{H}_{\widetilde{k}}}$, and we know $\mathbb{E}_{s\sim \rho_n, a\sim \mathcal{U}(\mathcal{A})}\left[T_k^{1/2}\hat{p}_n(\cdot|s, a)^\top\hat{p}_n(\cdot|s, a)T_k^{1/2}\right]$ is in the trace class and the Fredholm determinant is well-defined. Invoking Lemma~\ref{lem:potential_function_RKHS}, we have that
\begin{itemize}
    \item For $\beta$-finite spectrum, as $\lambda = \Theta(\beta \log N + \log (N |\mathcal{P}|/\delta))$, we have $n/\lambda = O(n)$
    \begin{align*}
        \log\mathrm{det}\left(I + \frac{n}{\lambda} \mathbb{E}_{s\sim \rho_n, a\sim \mathcal{U}(\mathcal{A})}\left[T_k^{1/2}\hat{p}_n(\cdot|s, a)^\top\hat{p}_n(\cdot|s, a)T_k^{1/2}\right]\right) = O\left(\beta \log n\right),
    \end{align*}
    which means
    \begin{align*}
        \mathbb{E}_{(s, a)\sim d_{T^*}^{\pi_n}}[b_n(s, a)]\lesssim &\sqrt{\gamma|\mathcal{A}|\alpha_n^2 \beta \log (n)  + \lambda\gamma^2 C}\cdot \mathbb{E}_{(\widetilde{s}, \widetilde{a}) \sim d_{T^*}^{\pi_n}} \left\|p^*(\cdot|\widetilde{s}, \widetilde{a})\right\|_{\Sigma_{\rho_n, p^*}^{-1}} \\
        & + \sqrt{\frac{(1-\gamma)|\mathcal{A}|\alpha_n^2 \beta\log (n)}{n}}.
    \end{align*}
    \item For $\beta$-polynomial decay, as $\lambda = \Theta(C_{\mathrm{poly}}N^{1/(1+\beta)} + \log (N|\mathcal{P}|/\delta))$ and $n \leq N$, we have $n/\lambda = O\left(C_{\mathrm{poly}} n^{\frac{\beta}{1+\beta}}\right)$ and 
    \begin{align*}
        & \log\mathrm{det}\left(I + \frac{n}{\lambda} \mathbb{E}_{s\sim \rho_n, a\sim \mathcal{U}(\mathcal{A})}\left[T_k^{1/2}\hat{p}_n(\cdot|s, a)^\top\hat{p}_n(\cdot|s, a)T_k^{1/2}\right]\right) 
        = O\left(C_{\mathrm{poly}} n^{\frac{1}{2(1 + \beta)}}\log(n)\right),
    \end{align*}
    This leads to
    \begin{align*}
        \mathbb{E}_{(s, a)\sim d_{T^*}^{\pi_n}}[b_n(s, a)]\lesssim &\sqrt{\gamma|\mathcal{A}|C_{\mathrm{poly}}\alpha_n^2 n^{\frac{1}{2(1+\beta)}} \log n  + \lambda\gamma^2 C}\cdot \mathbb{E}_{(\widetilde{s}, \widetilde{a}) \sim d_{T^*}^{\pi_n}} \left\|p^*(\cdot|\widetilde{s}, \widetilde{a})\right\|_{\Sigma_{\rho_n, p^*}^{-1}} \\
        & + \sqrt{(1-\gamma)|\mathcal{A}| C_{\mathrm{poly}} n^{-1 + \frac{1}{2(1 + \beta)}}\log (n)\alpha_n^2}.
    \end{align*}
    \item For $\beta$-exponential decay, as $\lambda = \Theta\left(C_{\mathrm{exp}}(\log N)^{1/\beta} + \log(N|\mathcal{P}|/\delta)\right)$, we have $n/\lambda = O\left(C_{\mathrm{exp}} n\right)$ and
    \begin{align*}
        & \log \mathrm{det} \left(I + \frac{n}{\lambda} \mathbb{E}_{s\sim \rho_n, a\sim \mathcal{U}(\mathcal{A})}\left[T_k^{1/2}\hat{p}_n(\cdot|s, a)^\top\hat{p}_n(\cdot|s, a)T_k^{1/2}\right]\right) 
        = O\left(C_{\mathrm{exp}}(\log n)^{1 + 1/\beta}\right),
    \end{align*}
    This leads to
    \begin{align*}
        \mathbb{E}_{(s, a)\sim d_{T^*}^{\pi_n}}[b_n(s, a)]\lesssim &\sqrt{\gamma |\mathcal{A}|C_{\mathrm{exp}} (\log n)^{1 + 1/\beta}\alpha_n^2 + \lambda \gamma^2 C}\cdot \mathbb{E}_{(\widetilde{s}, \widetilde{a}) \sim d_{T^*}^{\pi_n}} \left\|p^*(\cdot|\widetilde{s}, \widetilde{a})\right\|_{\Sigma_{\rho_n, p^*}^{-1}} \\
        & + \sqrt{\frac{(1-\gamma)|\mathcal{A}|C_{\mathrm{exp}}(\log n)^{1 + 1/\beta}  
       \alpha_n^2}{n}}.
    \end{align*}
\end{itemize}
For the remaining terms, denote $f_n(s, a) = \mathrm{TV}(T^*(s^\prime|s, a), \hat{T}_n(s^\prime)|s, a)$ with $\|f_n\|_{\infty} \leq 2$. With H\"older's inequality, we have
\begin{align*}
    \left|\mathbb{E}_{(s, a) \sim d_{T^*}^{\pi_n}}\left[\mathbb{E}_{\hat{T}_n(s^\prime|s, a)}\left[V_{\hat{T}_n, r + b_n}^{\pi_n}(s^\prime)\right] - \mathbb{E}_{T^*(s^\prime|s, a)}\left[V_{\hat{T}_n, r + b_n}^{\pi_n}(s^\prime)\right]\right]\right| \lesssim \mathbb{E}_{(s, a) \sim d_{T^*}^{\pi_n}} \left[\frac{f_n(s, a)}{1-\gamma}\right].
\end{align*}
With Lemma~\ref{lem:back_true_online}, we have that
\begin{align*}
    \mathbb{E}_{(s, a) \sim d_{T^*}^{\pi_n}} \left[\frac{f_n(s, a)}{1-\gamma}\right]\leq & \mathbb{E}_{(\widetilde{s}, \widetilde{a})\sim d_{T^*}^\pi} \left\|p^*(\cdot|\widetilde{s}, \widetilde{a})\right\|_{L_2(\mu), \Sigma_{\rho_n, p^*}^{-1}} \sqrt{\frac{n\gamma|\mathcal{A}|\mathbb{E}_{s\sim \rho_n, a\sim \mathcal{U}(\mathcal{A})}  \left[f_n^2(s, a) \right]}{(1-\gamma)^2} + \frac{4\lambda \gamma^2 C}{(1-\gamma)^2}}\\
    & + \sqrt{\frac{|\mathcal{A}|\mathbb{E}_{s\sim \rho_n, a\sim \mathcal{U}(\mathcal{A})} [f_n^2(s, a)]}{1-\gamma}}\\
    \leq&  \mathbb{E}_{(\widetilde{s}, \widetilde{a})\sim d_{T^*}^\pi} \left\|p^*(\cdot|\widetilde{s}, \widetilde{a})\right\|_{L_2(\mu), \Sigma_{\rho_n, p^*}^{-1}} \cdot \sqrt{\frac{n\gamma|\mathcal{A}|\zeta_n}{(1-\gamma)^2} + \frac{4\lambda \gamma^2 C}{(1-\gamma)^2}} + \sqrt{\frac{|\mathcal{A}|\zeta_n}{1-\gamma}}\\
    \lesssim & \alpha_n \mathbb{E}_{(\widetilde{s}, \widetilde{a})\sim d_{T^*}^\pi} \left\|p^*(\cdot|\widetilde{s}, \widetilde{a})\right\|_{L_2(\mu), \Sigma_{\rho_n, p^*}^{-1}} + \sqrt{\frac{|\mathcal{A}|\zeta_n}{1-\gamma}}
\end{align*}
Combine with the previous results and take the dominating terms out, we have that
\begin{itemize}
    \item For $\beta$-finite spectrum,
    \begin{align*}
         V_{T^*, r}^{\pi^*} - V_{T^*}^{\pi_n} \\
        \lesssim & \frac{1}{1-\gamma} \sqrt{\gamma |\mathcal{A}| \alpha_n^2\beta \log n + \lambda \gamma^2 C}\cdot \mathbb{E}_{(\widetilde{s}, \widetilde{a}) \sim d_{T^*}^{\pi_n}} \left\|p^*(\cdot|\widetilde{s}, \widetilde{a})\right\|_{\Sigma_{\rho_n, p^*}^{-1}} \\
        & + \sqrt{\frac{|\mathcal{A}|\alpha_n^2 \beta \log n}{(1-\gamma) n}} +  \sqrt{\frac{|\mathcal{A}|\zeta_n}{(1-\gamma)^3}}.
    \end{align*}
    \item For $\beta$-polynomial decay, 
    \begin{align*}
        V_{T^*, r}^{\pi^*} - V_{T^*}^{\pi_n} \\
        \lesssim & \frac{1}{1-\gamma} \sqrt{\gamma |\mathcal{A}| C_{\mathrm{poly}}\alpha_n^2n^{\frac{1}{2(1+\beta)}} \log n + \lambda \gamma^2 C}\cdot \mathbb{E}_{(\widetilde{s}, \widetilde{a}) \sim d_{T^*}^{\pi_n}} \left\|p^*(\cdot|\widetilde{s}, \widetilde{a})\right\|_{\Sigma_{\rho_n, p^*}^{-1}} \\
        & + \sqrt{\frac{|\mathcal{A}|C_{\mathrm{poly}}\alpha_n^2 n^{-1 + \frac{1}{2(1 + \beta)}}\log n}{1-\gamma}} +  \sqrt{\frac{|\mathcal{A}|\zeta_n}{(1-\gamma)^3}}.
    \end{align*}
    \item For $\beta$-exponential decay,
    \begin{align*}
        V_{T^*, r}^{\pi^*} - V_{T^*}^{\pi_n} \\
        \lesssim & \frac{1}{1-\gamma}\sqrt{\gamma |\mathcal{A}|C_{\mathrm{exp}}\alpha_n^2(\log n)^{1 + 1/\beta} + \lambda \gamma^2 C}\cdot \mathbb{E}_{(\widetilde{s}, \widetilde{a}) \sim d_{T^*}^{\pi_n}} \left\|p^*(\cdot|\widetilde{s}, \widetilde{a})\right\|_{\Sigma_{\rho_n, p^*}^{-1}} \\
        & + \sqrt{\frac{|\mathcal{A}|C_{\mathrm{exp}} \alpha_n^2(\log n)^{1 + 1/\beta}  }{(1-\gamma)n}} +  \sqrt{\frac{|\mathcal{A}|\zeta_n}{(1-\gamma)^3}}.
    \end{align*}
\end{itemize}

Finally, with Cauchy-Schwartz inequality, we have
\begin{align*}
    \sum_{n=1}^N \mathbb{E}_{(\widetilde{s}, \widetilde{a}) \sim d_{T^*}^{\pi_n}} \left\|p^*(\cdot|\widetilde{s}, \widetilde{a})\right\|_{L_2(\mu), \Sigma_{\rho_n, p^*}^{-1}} \leq & \sqrt{N \sum_{n=1}^N\mathbb{E}_{(\widetilde{s}, \widetilde{a}) \sim d_{T^*}^{\pi_n}} \left\langle p^*(\cdot|\widetilde{s}, \widetilde{a}), \Sigma_{\rho_n, p^*}^{-1}p^*(\cdot|\widetilde{s}, \widetilde{a})\right\rangle_{L_2(\mu)}}.
\end{align*}
Note that
\begin{align*}
    & \mathbb{E}_{(\widetilde{s}, \widetilde{a}) \sim d_{T^*}^{\pi_n}} \left\langle p^*(\cdot|\widetilde{s}, \widetilde{a}), \Sigma_{\rho_n, p^*}^{-1}p^*(\cdot|\widetilde{s}, \widetilde{a})\right\rangle_{L_2(\mu)} \\
    = & \mathbb{E}_{(\widetilde{s}, \widetilde{a}) \sim d_{T^*}^{\pi_n}} \left\langle p^*(\cdot|\widetilde{s}, \widetilde{a}), \left(n \mathbb{E}_{(s, a) \sim \rho_n}\left[p^*(\cdot|s, a) p^*(\cdot|s, a)^\top\right] + \lambda T_k^{-1}\right)^{-1}p^*(\cdot|\widetilde{s}, \widetilde{a})\right\rangle_{L_2(\mu)} \\
    = & \mathbb{E}_{(\widetilde{s}, \widetilde{a}) \sim d_{T^*}^{\pi_n}} \left\langle T_k^{1/2}p^*(\cdot|\widetilde{s}, \widetilde{a}), \left(n \mathbb{E}_{(s, a) \sim \rho_n}\left[T_k^{1/2}p^*(\cdot|s, a) p^*(\cdot|s, a)^\top T_k^{1/2}\right] + \lambda I\right)^{-1} T_k^{1/2}p^*(\cdot|\widetilde{s}, \widetilde{a})\right\rangle_{L_2(\mu)}\\
    = &  \mathrm{Tr}\left(\left(\frac{n}{\lambda} \mathbb{E}_{(s, a) \sim \rho_n}\left[T_k^{1/2}p^*(\cdot|s, a) p^*(\cdot|s, a)^\top T_k^{1/2}\right] + I\right)^{-1}, \frac{\mathbb{E}_{(\widetilde{s}, \widetilde{a}) \sim d_{T^*}^{\pi_n}}\left[T_k^{1/2}p^*(\cdot|\widetilde{s}, \widetilde{a}) p^*(\cdot|\widetilde{s}, \widetilde{a}) T_k^{1/2}\right]}{\lambda}\right)\\
    \leq & \log \mathrm{det}\left(\left(\frac{n}{\lambda} \mathbb{E}_{(s, a) \sim \rho_{n}}\left[T_k^{1/2}p^*(\cdot|s, a) p^*(\cdot|s, a)^\top T_k^{1/2}\right] + I\right)\right) \\
    & - \log \mathrm{det}\left(\left(\frac{n-1}{\lambda} \mathbb{E}_{(s, a) \sim \rho_{n-1}}\left[T_k^{1/2}p^*(\cdot|s, a) p^*(\cdot|s, a)^\top T_k^{1/2}\right] + I\right)\right),
\end{align*}
where in the last inequality, we use the fact that $\log\mathrm{det}(X)$ is concave with positive definite operators $X$ and $\frac{d \log\mathrm{det}(X)}{dX} = (X^\top)^{-1}$.

Telescoping and applying Lemma~\ref{lem:potential_function_RKHS}, we have that:
\begin{itemize}
    \item For $\beta$-finite spectrum: as $N/\lambda = O(N)$, we have
        \begin{align*}
            \sum_{n=1}^N\mathbb{E}_{(\widetilde{s}, \widetilde{a}) \sim d_{T^*}^{\pi_n}} \left\langle p^*(\cdot|\widetilde{s}, \widetilde{a}), \Sigma_{\rho_n, p^*}^{-1}p^*(\cdot|\widetilde{s}, \widetilde{a})\right\rangle_{L_2(\mu)} = O(\beta \log N).
        \end{align*}
    \item For $\beta$-polynomial decay: as $\lambda = \Theta(C_{\mathrm{poly}}N^{1/(1+\beta)} + \log (N|\mathcal{P}|/\delta))$, we have $N/\lambda = O\left(C_{\mathrm{poly}} N^{\frac{\beta}{1+\beta}}\right)$ and 
        \begin{align*}
            \sum_{n=1}^N\mathbb{E}_{(\widetilde{s}, \widetilde{a}) \sim d_{T^*}^{\pi_n}} \left\langle p^*(\cdot|\widetilde{s}, \widetilde{a}), \Sigma_{\rho_n, p^*}^{-1}p^*(\cdot|\widetilde{s}, \widetilde{a})\right\rangle_{L_2(\mu)} = O\left(C_{\mathrm{poly}} N^{\frac{1}{2(1 + \beta)}}\log N\right).
        \end{align*}
    \item For $\beta$-exponential decay:
        \begin{align*}
            \sum_{n=1}^N\mathbb{E}_{(\widetilde{s}, \widetilde{a}) \sim d_{T^*}^{\pi_n}} \left\langle p^*(\cdot|\widetilde{s}, \widetilde{a}), \Sigma_{\rho_n, p^*}^{-1}p^*(\cdot|\widetilde{s}, \widetilde{a})\right\rangle_{L_2(\mu)} = O\left(C_{\mathrm{exp}}(\log N)^{1 + 1/\beta}\right).
        \end{align*}
\end{itemize}

Hence, after we substitute $\alpha_n$ and $\lambda$ back and take the dominating term out, we can conclude that:
\begin{itemize}
    \item For $\beta$-finite spectrum, we have
    \begin{align*}
        \sum_{n=1}^N V_{T^*, r}^{\pi^*} - V_{T^*}^{\pi_n}  \lesssim \frac{\beta^{3/2} |\mathcal{A}|\log N\sqrt{C N \log (N|\mathcal{P}|/\delta)}}{(1-\gamma)^2}.
    \end{align*}
    \item For $\beta$-polynomial decay, we have
    \begin{align*}
        \sum_{n=1}^N V_{T^*, r}^{\pi^*} - V_{T^*}^{\pi_n} \lesssim \frac{C_{\mathrm{poly}}|\mathcal{A}|N^{\frac{1}{2} + \frac{1}{1 + \beta}} \log N \sqrt{C \log (N |\mathcal{P}|/\delta)}}{(1-\gamma)^2}.
    \end{align*}
    \item For $\beta$-exponential decay, we have
    \begin{align*}
        \sum_{n=1}^N V_{T^*, r}^{\pi^*} - V_{T^*}^{\pi_n} \lesssim \frac{C_{\mathrm{exp}}|\mathcal{A}| \sqrt{CN \log (N|\mathcal{P}|/\delta)} (\log N)^{\frac{3 + 2\beta}{2\beta}}  }{(1-\gamma)^2}.
    \end{align*}
\end{itemize}
This finishes the proof.
\end{proof}
\begin{theorem}[PAC Guarantee for Online Setting]
    \label{thm:pac_guarantee_online}
    After interacting with the environments for $N$ episodes where 
    \begin{itemize}
        \item $N =  \Theta\left(\frac{C \beta^3 |\mathcal{A}|^2 \log (|\mathcal{P}|/\delta)}{(1-\gamma)^4\varepsilon^2}\log^3 \left(\frac{C \beta^3 |\mathcal{A}|^2 \log (|\mathcal{P}|/\delta)}{(1-\gamma)^4\varepsilon^2}\right)\right)$ for $\beta$-finite spectrum;
        \item $N = \Theta\left(C_{\mathrm{poly}}\left(\frac{|\mathcal{A}|\sqrt{C\log (|\mathcal{P}|/\delta)}}{(1-\gamma)^2\varepsilon} \log^{3/2}\left(\frac{\sqrt{C}|\mathcal{A}|\log (|\mathcal{P}|/\delta)}{(1-\gamma)^2\varepsilon} \right)\right)^{\frac{2(1+\beta)}{\beta - 1}}\right)$ for $\beta$-polynomial decay;
        \item $N =\Theta\left(\frac{C_{\mathrm{exp}}C|\mathcal{A}|^2\log (|\mathcal{P}|/\delta)}{(1-\gamma)^4\varepsilon^2} \log^{\frac{3 + 2\beta}{\beta}}\left(\frac{C|\mathcal{A}|^2\log (|\mathcal{P}|/\delta)}{(1-\gamma)^4\varepsilon^2} \right)\right)$ for $\beta$-exponential decay;
    \end{itemize} 
    we can obtain an $\varepsilon$-optimal policy with high probability.
\end{theorem}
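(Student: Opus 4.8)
The plan is to derive Theorem~\ref{thm:pac_guarantee_online} as an immediate online-to-batch consequence of the regret bound established in Lemma~\ref{lem:regret}. The algorithm returns the policy sequence $\pi_1,\dots,\pi_N$, and the key observation is that the best policy among them — or, since we need not evaluate them, a uniformly drawn one — inherits the \emph{averaged} regret:
\begin{align*}
    \min_{n \in [N]} \left(V_{T^*, r}^{\pi^*} - V_{T^*, r}^{\pi_n}\right) \leq \frac{1}{N}\sum_{n=1}^N \left(V_{T^*, r}^{\pi^*} - V_{T^*, r}^{\pi_n}\right) = \frac{1}{N}\,\mathrm{Regret}(N).
\end{align*}
On the probability-$(1-\delta)$ event where the bound of Lemma~\ref{lem:regret} holds, it therefore suffices to choose $N$ large enough that $\mathrm{Regret}(N)/N \leq \varepsilon$; this certifies the existence of an $\varepsilon$-optimal policy in the returned set (and, outputting $\pi_n$ with $n$ drawn uniformly from $[N]$, its recovery in expectation conditioned on the good event).

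First I would treat the $\beta$-finite-spectrum case. Dividing the regret bound by $N$ gives an averaged suboptimality of order $\tilde{O}\!\left(\beta^{3/2}|\mathcal{A}|\sqrt{C}\,(1-\gamma)^{-2} N^{-1/2}\right)$ up to polylogarithmic factors in $N$ and $|\mathcal{P}|/\delta$; setting this below $\varepsilon$ and solving for $N$ isolates the leading polynomial term $\tfrac{C\beta^3|\mathcal{A}|^2\log(|\mathcal{P}|/\delta)}{(1-\gamma)^4\varepsilon^2}$. The same recipe applies verbatim to the other two regimes. For $\beta$-polynomial decay the averaged regret scales like $\tilde{O}\!\left(N^{-(\beta-1)/(2(1+\beta))}\right)$, so inverting $N^{(\beta-1)/(2(1+\beta))} \gtrsim \tilde{O}(1/\varepsilon)$ produces the exponent $\tfrac{2(1+\beta)}{\beta-1}$, which is finite precisely because the polynomial-decay assumption requires $\beta > 1$. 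For $\beta$-exponential decay the averaged regret is again of order $\tilde{O}(N^{-1/2})$ but carries an extra $(\log N)^{(3+2\beta)/(2\beta)}$ factor, yielding the $\varepsilon^{-2}$ rate with the corresponding power of the logarithm.

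The one genuinely delicate step is resolving the self-referential logarithmic dependence: the right-hand side of each inequality contains $\log(N|\mathcal{P}|/\delta)$ and $\log N$ factors, so the condition on $N$ is transcendental rather than explicit. I would handle this with the standard inversion lemma — if $N \geq c\, a \log^{p}(a)$ for a sufficiently large absolute constant $c$, then $N \geq a \log^{p} N$ — applied with $a$ the leading polynomial factor and $p$ the regime-dependent log-power. This is what converts the clean $N^{-1/2}$-type rate into the stated $N = \Theta(\,\cdot\,\log^{3}(\cdot)\,)$ (respectively $\log^{(3+2\beta)/\beta}$) bounds, and it is also where the extra $\log(|\mathcal{P}|/\delta)$ in the leading term arises, since $\log(N|\mathcal{P}|/\delta)$ collapses to $\Theta(\log(|\mathcal{P}|/\delta))$ once $N$ is polynomially large. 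I expect this bookkeeping to be the main — though entirely routine — obstacle; everything structural is immediate from Lemma~\ref{lem:regret}.
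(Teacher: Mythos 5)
Your proposal is correct and follows essentially the same route as the paper: the paper likewise performs an online-to-batch conversion by outputting the uniform mixture over $\{\pi_n\}_{n\in[N]}$ (equivalent to your uniformly-drawn policy), divides the regret bound of Lemma~\ref{lem:regret} by $N$ to get the per-case suboptimality rates, and then chooses $N$ so that the gap falls below $\varepsilon$. The only difference is presentational — the paper simply asserts that the stated choices of $N$ suffice, whereas you make explicit the inversion step that resolves the self-referential $\log N$ and $\log(N|\mathcal{P}|/\delta)$ factors, which is exactly the bookkeeping the paper leaves implicit.
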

\begin{proof} 
    Note that, $\log \log x = O(\log x)$. We consider the case with different eigendecay conditions separately.
    \begin{itemize}
        \item For $\beta$-finite spectrum, by taking the output policy as the uniform policy over $\{\pi_i\}_{i\in [n]}$, we can obtain a policy with the sub-optimality gap 
        \begin{align*}
            O\left(\frac{\beta^{3/2}|\mathcal{A}|\log N\sqrt{C\log (N|\mathcal{P}|/\delta)}}{(1-\gamma)^2\sqrt{N}}\right).
        \end{align*}
        Take $N = \Theta\left(\frac{C \beta^3 |\mathcal{A}|^2 \log (|\mathcal{P}|/\delta)}{(1-\gamma)^4\varepsilon^2}\log^3 \left(\frac{C \beta^3 |\mathcal{A}|^2 \log (|\mathcal{P}|/\delta)}{(1-\gamma)^4\varepsilon^2}\right)\right)$, we can see the sub-optimality gap is smaller than $\varepsilon$, which finishes the proof for $\beta$-finite spectrum.
        \item For $\beta$-polynomial decay, by taking the output policy as the uniform policy over $\{\pi_i\}_{i\in[n]}$, we can obtain a policy with the sub-optimality gap
        \begin{align*}
            O\left(\frac{C_{\mathrm{poly}} N^{\frac{\beta - 1}{2(1+\beta)}} \log N\sqrt{C \log (N|\mathcal{P}|/\delta)}}{(1-\gamma)^2}\right).
        \end{align*}
        Take $N = \Theta\left(C_{\mathrm{poly}}\left(\frac{|\mathcal{A}|\sqrt{C\log (|\mathcal{P}|/\delta)}}{(1-\gamma)^2\varepsilon} \log^{3/2}\left(\frac{\sqrt{C}|\mathcal{A}|\log (|\mathcal{P}|/\delta)}{(1-\gamma)^2\varepsilon} \right)\right)^{\frac{2(1+\beta)}{\beta - 1}}\right)$, we can see the sub-optimality gap is smaller than $\varepsilon$, which finishes the proof for $\beta$-exponential decay.
        \item For $\beta$-exponential decay, by taking the output policy as the uniform policy over $\{\pi_i\}_{i\in [n]}$, we can obtain a policy with the sub-optimality gap
        \begin{align*}
        O\left(\frac{C_{\mathrm{exp}}|\mathcal{A}|\sqrt{C\log(N|\mathcal{P}|/\delta)}(\log N)^{\frac{3 + 2\beta}{2\beta}}}{(1-\gamma)^2 \sqrt{N}}\right).
        \end{align*}
        Take $N = \Theta\left(\frac{C_{\mathrm{exp}}C|\mathcal{A}|^2\log (|\mathcal{P}|/\delta)}{(1-\gamma)^4\varepsilon^2} \log^{\frac{3 + 2\beta}{\beta}}\left(\frac{C|\mathcal{A}|^2\log (|\mathcal{P}|/\delta)}{(1-\gamma)^4\varepsilon^2} \right)\right)$, we can see the sub-optimality gap is smaller than $\varepsilon$, which finishes the proof for $\beta$-exponential decay.
    \end{itemize}
    As a result, we finish the proof for the PAC guarantee.
\end{proof}
\subsection{Proof for the Offline Setting}
\label{sec:offline_proof}
Similar to the online exploration case, we can obtain the upper bound of the statistical error for $\hat{\pi}$, which is stated in the following:
\begin{theorem}[PAC Guarantee for Offline Exploitation]
\label{thm:pac_offline}
Define $\omega := \max_{s, a} \pi_b^{-1}(a|s)$, and 
\begin{align*}
    C_{\pi}^* := \sup_{x\in L_2(\mu)} \frac{\mathbb{E}_{(s, a) \sim d_{T^*}^\pi} \left[\left\langle p^*(\cdot|s, a), x\right\rangle_{L_2(\mu)}\right]^2}{\mathbb{E}_{(s, a) \sim \rho} \left[\left\langle p^*(\cdot|s, a), x\right\rangle_{L_2(\mu)}\right]^2}.
\end{align*}
If the penalty and its corresponding parameters are identical to the bonus we define in Theorem~\ref{thm:pac_online}, then with probability at least $1-\delta$, for any competitor policy $\pi$ including non-Markovian history-dependent policy, we have
\begin{itemize}
    \item For $\beta$-finite spectrum, we have
    \begin{align*}
        V_{T^*, r}^{\pi} - V_{T^*, r}^{\hat{\pi}} \lesssim \frac{\omega \beta^{3/2}\log n}{(1-\gamma)^2} \sqrt{\frac{C C_{\pi}^* \log (|\mathcal{P}|/\delta)}{n}}
    \end{align*}
    \item For $\beta$-polynomial decay, we have
    \begin{align*}
        V_{T^*, r}^{\pi} - V_{T^*, r}^{\hat{\pi}} \lesssim \frac{C_{\mathrm{poly}} \omega n^{\frac{1-\beta}{2(1+\beta)}}\log n\sqrt{C C_{\pi}^* \log (|\mathcal{P}|/\delta)}}{(1-\gamma)^2}
    \end{align*}
    \item For $\beta$-exponential decay, we have
    \begin{align*}
    V_{T^*, r}^{\pi} - V_{T^*, r}^{\hat{\pi}} \lesssim \frac{C_{\mathrm{exp}} \omega (\log n)^{\frac{3 + 2\beta}{2\beta}}}{(1-\gamma)^2} \sqrt{\frac{C C_{\pi}^* \log (|\mathcal{P}|/\delta)}{n}}
    \end{align*}
\end{itemize} 
\end{theorem}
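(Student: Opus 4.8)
The plan is to run the pessimism counterpart of the online analysis in Theorem~\ref{thm:pac_online}, with the fixed offline distribution $\rho$ in place of the exploratory mixture $\rho_n$, and with $\omega=\max_{s,a}\pi_b^{-1}(a|s)$ playing the role that $|\mathcal{A}|$ played online (it reweights behavior-policy expectations into the uniform-action reference that appears inside the one-step-back inequalities, since $1/|\mathcal{A}|\le \omega\,\pi_b(a|s)$). Fix a competitor $\pi$. Since $\hat\pi$ maximizes $V^{\cdot}_{\hat T, r-\hat b}$, I would start from the decomposition
\[
V_{T^*,r}^{\pi}-V_{T^*,r}^{\hat\pi}
=\underbrace{\big(V_{T^*,r}^{\pi}-V_{\hat T,r-\hat b}^{\pi}\big)}_{\text{(I): competitor}}
+\underbrace{\big(V_{\hat T,r-\hat b}^{\pi}-V_{\hat T,r-\hat b}^{\hat\pi}\big)}_{\le\,0}
+\underbrace{\big(V_{\hat T,r-\hat b}^{\hat\pi}-V_{T^*,r}^{\hat\pi}\big)}_{\text{(II): pessimism}},
\]
so it suffices to show that (II) is of lower order and to bound (I).

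For (II) I would establish pessimism, the offline analog of Lemma~\ref{lem:optimism}. Applying the simulation lemma (Lemma~\ref{lem:simulation}) gives $(1-\gamma)\,\text{(II)}=\mathbb{E}_{(s,a)\sim d_{\hat T}^{\hat\pi}}[-\hat b(s,a)+\gamma(\mathbb{E}_{\hat T(\cdot|s,a)}-\mathbb{E}_{T^*(\cdot|s,a)})V_{T^*,r}^{\hat\pi}(s')]$. I would bound the transition-difference term by $\mathrm{TV}(\hat T,T^*)$ and push it back one step with Lemma~\ref{lem:back_learned_online} instantiated at $\rho$, turning it into $\mathbb{E}_{d_{\hat T}^{\hat\pi}}\|\hat p_n(\cdot|s,a)\|_{L_2(\mu),\Sigma_{\rho,\hat p_n}^{-1}}$ times a factor governed by the MLE error $\zeta_n=\Theta(\log(|\mathcal{P}|/\delta)/n)$ (Lemma~\ref{lem:mle}) and by $\omega$. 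Using the bonus concentration of Lemma~\ref{lem:bonus_concentration} to replace $\hat\Sigma_n$ by its population version, the choice of $\alpha_n$ from Theorem~\ref{thm:pac_online} makes $-\hat b$ dominate this term, leaving only the residual $O(\sqrt{\omega\,\zeta_n/(1-\gamma)^3})$, which is lower order.

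For (I) I would again apply the simulation lemma, now under $d_{T^*}^\pi$, so that $(1-\gamma)\,\text{(I)}=\mathbb{E}_{d_{T^*}^\pi}[\hat b]+\gamma\,\mathbb{E}_{d_{T^*}^\pi}[(\mathbb{E}_{T^*}-\mathbb{E}_{\hat T})V_{\hat T,r-\hat b}^{\pi}]$; the transition-difference piece is controlled by Lemma~\ref{lem:back_true_online} and is dominated by the penalty piece $\mathbb{E}_{d_{T^*}^\pi}[\hat b]$. The key step is transferring this from $d_{T^*}^\pi$ to $\rho$ using $C_\pi^*$: the concentrability definition controls the relative covariance, giving the operator inequality $\mathbb{E}_{d_{T^*}^\pi}[p^*(p^*)^\top]\preceq C_\pi^*\,\mathbb{E}_{\rho}[p^*(p^*)^\top]$, whence
\[
\mathbb{E}_{d_{T^*}^\pi}\big[\|p^*(\cdot|s,a)\|_{L_2(\mu),\Sigma_{\rho,p^*}^{-1}}^2\big]
\le C_\pi^*\,\mathrm{Tr}\big(\Sigma_{\rho,p^*}^{-1}\,\mathbb{E}_{\rho}[p^*(p^*)^\top]\big).
\]
The trace on the right is exactly the ellipsoid potential, bounded in the $\beta$-finite, $\beta$-polynomial, and $\beta$-exponential regimes by Lemma~\ref{lem:potential_function_RKHS}. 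Combining the resulting $\sqrt{C_\pi^*}$, the per-regime $\lambda$, the scale $\alpha_n$, the $1/\sqrt{n}$ from $\Sigma_{\rho,p^*}\approx n\,\mathbb{E}_\rho[\cdot]$, and a final $\hat p_n$-to-$p^*$ bridge via Lemma~\ref{lem:bonus_concentration}, reproduces the three stated bounds with $\omega$ in place of $|\mathcal{A}|$.

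The main obstacle is the concentrability transfer in the infinite-dimensional RKHS. The coefficient $C_\pi^*$ is defined through the true representation $p^*$, whereas the executable penalty lives in the learned $\hat p_n$ and $\hat\Sigma_n$; the argument must therefore route the operator inequality through the population covariance and absorb the $\hat p_n$-versus-$p^*$ discrepancy through the bonus-concentration lemma, all while keeping the resulting trace a genuine effective-dimension quantity that Lemma~\ref{lem:potential_function_RKHS} can bound under each eigendecay condition. Calibrating $\alpha_n$ so that the penalty provably dominates the model error in step (II) (validating pessimism) without inflating the competitor-side bound in step (I) is the delicate balance on which the whole argument hinges.
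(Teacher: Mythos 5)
Your proposal is correct and follows essentially the same route as the paper's own proof: the identical pessimism-plus-optimality decomposition (the paper's chain $V_{T^*,r}^{\pi}-V_{T^*,r}^{\hat\pi}\le V_{T^*,r}^{\pi}-V_{\hat T,r-b}^{\hat\pi}+\text{pessimism error}\le V_{T^*,r}^{\pi}-V_{\hat T,r-b}^{\pi}+\text{pessimism error}$ is your three-term split), the offline one-step-back lemmas with $\omega$ replacing $|\mathcal{A}|$ (Lemmas~\ref{lem:back_learned_offline} and~\ref{lem:back_true_offline} are exactly your ``online lemmas instantiated at $\rho$''), pessimism via Lemma~\ref{lem:pessimism}, and the competitor-side transfer through $C_\pi^*$ followed by the ellipsoid-potential bounds of Lemma~\ref{lem:potential_function_RKHS} in each eigendecay regime. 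Your operator inequality $\mathbb{E}_{d_{T^*}^\pi}[p^*(p^*)^\top]\preceq C_\pi^*\,\mathbb{E}_{\rho}[p^*(p^*)^\top]$ is precisely the content of the paper's distribution-shift Lemma~\ref{lem:distribution_shift}, so no gap remains.
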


We start by showing that $C_{\pi}^*$ can be viewed as a measure of the offline data quality, which can be demonstrated by the following lemma, that was first introduced in \citet{chang2021mitigating}:
\begin{lemma}[Distribution Shift Lemma]
    \label{lem:distribution_shift}
    For any positive definite operator $\Lambda:L_2(\mu) \to L_2(\mu)$, we have that
    \begin{align*}
        \mathbb{E}_{(s, a) \sim d_{T^*}^\pi} \langle p^*(\cdot|s, a), \Lambda p^*(\cdot|s, a)\rangle_{L_2(\mu)} \leq C_{\pi}^* \mathbb{E}_{(s, a) \sim \rho} \langle p^*(\cdot|s, a), \Lambda p^*(\cdot|s, a)\rangle_{L_2(\mu)}.
    \end{align*}
\end{lemma}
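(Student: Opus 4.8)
The plan is to reduce the operator-level bound to the rank-one (scalar) inequality that is built into the definition of $C_\pi^*$, by diagonalizing $\Lambda$. Concretely, $\Lambda$ is a positive-definite self-adjoint operator on $L_2(\mu)$; in the intended application $\Lambda = \Sigma_{\rho_n, p^*}^{-1}$, which is bounded and compact because $\Sigma_{\rho_n, p^*}$ contains the unbounded term $\lambda T_k^{-1}$ (recall $T_k$ is trace class, so $T_k^{-1}$ has eigenvalues tending to infinity and its inverse is compact). Hence by the Spectral Theorem stated earlier I may write $\Lambda = \sum_{i\in I}\lambda_i \langle \cdot, e_i\rangle_{L_2(\mu)} e_i$ for an orthonormal system $\{e_i\}_{i\in I}$ and eigenvalues $\lambda_i > 0$.

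First I would expand the quadratic form along this eigenbasis: for each fixed $(s,a)$,
\[
\langle p^*(\cdot|s,a), \Lambda p^*(\cdot|s,a)\rangle_{L_2(\mu)} = \sum_{i\in I}\lambda_i \langle p^*(\cdot|s,a), e_i\rangle_{L_2(\mu)}^2,
\]
and take expectations. Since all summands are nonnegative, Tonelli's theorem lets the sum and the expectation commute, giving
\[
\mathbb{E}_{(s,a)\sim d_{T^*}^\pi}\langle p^*(\cdot|s,a), \Lambda p^*(\cdot|s,a)\rangle_{L_2(\mu)} = \sum_{i\in I}\lambda_i\, \mathbb{E}_{(s,a)\sim d_{T^*}^\pi}\langle p^*(\cdot|s,a), e_i\rangle_{L_2(\mu)}^2.
\]

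Second, I apply the variational definition of $C_\pi^*$ with the test function $x = e_i$: for every $i$,
\[
\mathbb{E}_{(s,a)\sim d_{T^*}^\pi}\langle p^*(\cdot|s,a), e_i\rangle_{L_2(\mu)}^2 \leq C_\pi^*\, \mathbb{E}_{(s,a)\sim \rho}\langle p^*(\cdot|s,a), e_i\rangle_{L_2(\mu)}^2.
\]
Because each $\lambda_i > 0$, I multiply by $\lambda_i$ and sum over $i$; reassembling the right-hand side back into an operator quadratic form (again by Tonelli) yields exactly $C_\pi^*\,\mathbb{E}_{(s,a)\sim\rho}\langle p^*(\cdot|s,a),\Lambda p^*(\cdot|s,a)\rangle_{L_2(\mu)}$, which is the claim.

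The main obstacle is purely technical and lies in the infinite-dimensional bookkeeping rather than in the logic of the argument: I must ensure that the spectral decomposition is legitimate (i.e. that $\Lambda$ is compact self-adjoint, which holds for the regularized inverse covariance operators used in the analysis), and that the denominator $\mathbb{E}_{(s,a)\sim\rho}\langle p^*(\cdot|s,a), e_i\rangle_{L_2(\mu)}^2$ is strictly positive for every relevant eigendirection so that the ratio defining $C_\pi^*$ is well posed. The assumed finiteness of $C_\pi^*$, which encodes the offline data-coverage (concentrability) condition, simultaneously guarantees this and controls the interchange of summation and expectation. If one wishes to avoid assuming compactness of $\Lambda$ altogether, the same reduction carries through by replacing the eigenbasis sum with the projection-valued spectral measure of $\Lambda$ and integrating the rank-one inequality against it.
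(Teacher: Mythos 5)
Your proof is correct and takes essentially the same route as the paper's: diagonalize $\Lambda$, expand the quadratic form along its eigenbasis, apply the variational definition of $C_{\pi}^*$ to each eigendirection $x = e_i$, and resum with the positive weights $\lambda_i$. The only difference is that you make explicit the technical points (compactness of $\Lambda$ in the intended application, Tonelli's theorem for interchanging sum and expectation, and the projection-valued spectral measure as a fallback for non-compact $\Lambda$) that the paper's terse eigendecomposition step leaves implicit.
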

\begin{proof}
    We denote the eigendecomposition of $\Lambda$ as $\Lambda = U \Sigma U$ where $\{\sigma_i, u_i\}$ is the eigensystem of $\Lambda$. Then we have
    \begin{align*}
        & \mathbb{E}_{(s, a) \sim d_{T^*}^\pi} \langle p^*(\cdot|s, a), \Lambda p^*(\cdot|s, a)\rangle_{L_2(\mu)} \\
        = & \sum_{i\in I} \sigma_i \mathbb{E}_{(s, a) \sim d_{T^*}^\pi} \langle u_i, p^*(\cdot|s, a)^\top\rangle_{L_2(\mu)}^2\\
        \leq & C_{\pi}^* \sum_{i\in I} \sigma_i \mathbb{E}_{(s, a) \sim \rho} \langle u_i, p^*(\cdot|s, a)^\top\rangle_{L_2(\mu)}^2\\
        = & C_{\pi}^* \mathbb{E}_{(s, a) \sim \rho} \langle p^*(\cdot|s, a), \Lambda p^*(\cdot|s, a)\rangle_{L_2(\mu)},
    \end{align*}
    which finishes the proof.
\end{proof}

We also define the $\Sigma_{\rho, \phi}:L_2(\mu) \to L_2(\mu)$:
\begin{align*}
    \Sigma_{\rho, \phi} := n\mathbb{E}_{(s, a) \sim \rho} \left[\phi(s, a) \phi^\top(s, a)\right] + \lambda T_k^{-1},
\end{align*}
where $\rho$ is the stationary distribution of $\pi_b$.
\begin{lemma}[One Step Back Inequality for the Learned Model in Offline Setting]
\label{lem:back_learned_offline}
    Assume $g:\mathcal{S}\times \mathcal{A} \to \mathbb{R}$, such that $\|g\|_{\infty} \leq B$. Then conditioning on the following generalization bound:
    \begin{align*}
        \mathbb{E}_{(s, a) \sim \rho} \|\hat{T}(s, a) - T^*(s, a)\|_1^2 \leq \zeta,
    \end{align*}
    we have that $\forall \pi$
    \begin{align*}
        & \left|\mathbb{E}_{(s, a) \sim d_{\hat{T}}^\pi}[g(s, a)]\right| \\
        \leq & \gamma \mathbb{E}_{(\widetilde{s}, \widetilde{a})\sim d_{\hat{T}}^\pi} \left\|\hat{p}(\cdot|\widetilde{s}, \widetilde{a})\right\|_{L_2(\mu), \Sigma_{\rho , \hat{p}}^{-1}} \sqrt{ n\omega\gamma\mathbb{E}_{(s, a) \sim \rho}  \left[g^2(s, a) \right] + \lambda  B^2 C + n B^2 \zeta}\\
        & + \sqrt{(1-\gamma) \omega\mathbb{E}_{(s, a) \sim \rho} [g^2(s, a)]}.
    \end{align*}
\end{lemma}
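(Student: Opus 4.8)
The plan is to mirror the proof of the online one-step-back inequality (Lemma~\ref{lem:back_learned_online}), changing only the two places where the data-collection mechanism enters: the uniform action distribution $\mathcal{U}(\mathcal{A})$ is replaced by the behavior policy $\pi_b$, so that the cardinality factor $|\mathcal{A}|$ becomes the density-ratio bound $\omega = \max_{s,a}\pi_b^{-1}(a|s)$, and the fresh one-step distribution ($\rho_n'$ in the online case) is folded back onto the single offline distribution $\rho$. First I would expand $\mathbb{E}_{(s,a)\sim d_{\hat T}^\pi}[g(s,a)]$ using the flow identity for the occupancy measure $d_{\hat T}^\pi$ into a discounted one-step term through $\hat T$ plus an initial term $(1-\gamma)\mathbb{E}_{s\sim d_0, a\sim\pi}[g]$. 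For the initial term I apply Jensen, bound $d_0 \le \frac{1}{1-\gamma}\rho$ (any occupancy measure dominates $(1-\gamma)d_0$), and importance-sample $\pi$ to $\pi_b$ via $\pi(a|s)/\pi_b(a|s)\le\omega$; this produces exactly the $\sqrt{(1-\gamma)\omega\mathbb{E}_{(s,a)\sim\rho}[g^2]}$ summand.

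For the one-step term I would write $\hat T(s|\widetilde s,\widetilde a)=\langle \hat p(\cdot|\widetilde s,\widetilde a),\hat p(s|\cdot)\rangle_{L_2(\mu)}$ and apply the weighted Cauchy-Schwarz inequality with the self-adjoint operator $\Sigma_{\rho,\hat p}$, separating $\|\hat p(\cdot|\widetilde s,\widetilde a)\|_{L_2(\mu),\Sigma_{\rho,\hat p}^{-1}}$ from $\|h\|_{L_2(\mu),\Sigma_{\rho,\hat p}}$, where $h:=\int_{\mathcal{S}}\sum_{a}\hat p(s|\cdot)\pi(a|s)g(s,a)\,ds$. Expanding $\|h\|_{L_2(\mu),\Sigma_{\rho,\hat p}}^2$ through the definition of $\Sigma_{\rho,\hat p}$ splits it into a data term $n\,\mathbb{E}_{(\widetilde s,\widetilde a)\sim\rho}\{\mathbb{E}_{s\sim\hat T,a\sim\pi}[g]\}^2$ and a regularization term $\lambda\|h\|_{\mathcal{H}_k}^2$; the latter is controlled by the normalization condition (Assumption~\ref{assumption:normalization}) applied to $\widetilde g(s)=\sum_a\pi(a|s)g(s,a)$, which satisfies $\|\widetilde g\|_\infty\le B$, yielding the $\lambda B^2 C$ contribution.

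The genuinely offline-specific manipulations are in the data term. Using $|\mathbb{E}_{\hat T}[\widetilde g]-\mathbb{E}_{T^*}[\widetilde g]|\le B\|\hat T-T^*\|_1$ together with $(a+b)^2\le 2a^2+2b^2$ replaces $\{\mathbb{E}_{s\sim\hat T}[\widetilde g]\}^2$ by $\{\mathbb{E}_{s\sim T^*}[\widetilde g]\}^2$ at the cost of $B^2\|\hat T-T^*\|_1^2$; here it is essential that the offline generalization bound controls the \emph{second} moment $\mathbb{E}_{(s,a)\sim\rho}\|\hat T-T^*\|_1^2\le\zeta$, so the error folds into a clean $n B^2\zeta$ rather than $nB^2\sqrt{\zeta}$. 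Jensen then passes $\{\mathbb{E}_{s\sim T^*,a\sim\pi}[g]\}^2$ to $\mathbb{E}_{s\sim T^*,a\sim\pi}[g^2]$, importance sampling $\pi\to\pi_b$ contributes $\omega$, and the occupancy-measure self-consistency relation $\gamma\,\mathbb{E}_{(\widetilde s,\widetilde a)\sim\rho}[T^*(\cdot|\widetilde s,\widetilde a)]\le\rho$ re-absorbs the one-step pushforward into $\rho$ with a factor $\gamma^{-1}$; combined with the Bellman $\gamma$ this leaves the $n\omega\gamma\,\mathbb{E}_{(s,a)\sim\rho}[g^2]$ inside the square root (the lower-order $\lambda B^2 C$ and $n B^2\zeta$ terms acquire only $\gamma^2\le 1$ factors that are dropped). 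Collecting the two summands gives the stated inequality.

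I expect the main obstacle to be the discount-factor bookkeeping while keeping every quantity anchored to the single distribution $\rho$. Unlike the online lemma, which may carry an explicit fresh pushforward $\rho_n'$, here both the elliptical-potential operator $\Sigma_{\rho,\hat p}$ and the generalization bound are expressed over $\rho$, so the one-step transition must be re-routed back into $\rho$ through the occupancy self-consistency identity; the interplay of the resulting $\gamma^{-1}$ with the Bellman $\gamma$, and the choice of the $(a+b)^2\le 2a^2+2b^2$ split (forced by the availability of a second-moment rather than first-moment model-error bound), are the steps that require the most care.
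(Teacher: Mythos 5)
Your proposal matches the paper's proof essentially step for step: the same flow-identity decomposition of $d_{\hat T}^\pi$, the same Jensen-plus-importance-sampling bound for the initial term (with $\omega$ replacing the online $|\mathcal{A}|$), the same weighted Cauchy--Schwarz with $\Sigma_{\rho,\hat p}$, the same use of Assumption~\ref{assumption:normalization} to get $\lambda B^2 C$, the same second-moment model-error substitution producing $n B^2 \zeta$, and the same re-absorption of the one-step pushforward into $\rho$ via stationarity of $d_{T^*}^{\pi_b}$ at the cost of a $\gamma^{-1}$ that cancels against the Bellman $\gamma$. The only deviation is your explicit $(a+b)^2 \le 2a^2 + 2b^2$ split when swapping $\hat T$ for $T^*$, which introduces a constant factor that the paper's own write-up also glosses over and which is immaterial to how the lemma is used downstream.
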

\begin{proof}
    We still start from the following inequality:
    \begin{align*}
         \mathbb{E}_{(s, a) \sim d_{\hat{T}}^\pi} [g(s, a)] = \gamma \mathbb{E}_{(\widetilde{s}, \widetilde{a}) \sim d_{\hat{T}}^\pi, s \sim \hat{p}_n(\cdot|\widetilde{s}, \widetilde{a}), a\sim \pi(\cdot|s)} [g(s, a)] + (1-\gamma) \mathbb{E}_{s\sim d_0, a\sim \pi(\cdot|s)} [g(s, a)].
    \end{align*}
    For the second term, with Jensen's inequality and an importance sampling step, we have that
    \begin{align*}
        (1-\gamma) \mathbb{E}_{s\sim d_0, a\sim \pi(\cdot|s)} [g(s, a)] \leq \sqrt{(1-\gamma)\omega \mathbb{E}_{(s, a)\sim \rho}[g^2(s, a)]}.
    \end{align*}
    For the first term, with Cauchy-Schwartz inequality of $L_2(\mu)$ inner product, we have that
    \begin{align*}
        & \gamma\mathbb{E}_{(\widetilde{s}, \widetilde{a}) \sim d_{\hat{T}}^\pi, s \sim \hat{T}(\cdot|\widetilde{s}, \widetilde{a}), a\sim \pi(\cdot|s)} [g(s, a)]\\
        = & \gamma \mathbb{E}_{(\widetilde{s}, \widetilde{a}) \sim d_{\hat{T}}^\pi} \left\langle \hat{p}(\cdot|\widetilde{s}, \widetilde{a}), \int_{\mathcal{S}} \sum_{a\in\mathcal{A}} \hat{p}(s|\cdot)\pi(a|s)g(s, a) ds \right\rangle_{L_2(\mu)}\\
        \leq & \gamma \mathbb{E}_{(\widetilde{s}, \widetilde{a}) \sim d_{\hat{T}}^\pi} \left\|\hat{p}(\cdot|\widetilde{s}, \widetilde{a})\right\|_{L_2(\mu), \Sigma_{\rho, \hat{p}}^{-1}} \left\|\int_{\mathcal{S}}\sum_{a\in\mathcal{A}} \hat{p}(s|\cdot)\pi(a|s)g(s, a) ds\right\|_{L_2(\mu), \Sigma_{\rho, \hat{p}}}.
    \end{align*}
    Note that
    \begin{align*}
        & \left\|\int_{\mathcal{S}}\sum_{a\in\mathcal{A}} \hat{p}(s|\cdot)\pi(a|s)g(s, a) ds\right\|_{L_2(\mu), \Sigma_{\rho, \hat{p}}}^2\\
        = & n\mathbb{E}_{(\widetilde{s}, \widetilde{a}) \sim \rho} \left\{\mathbb{E}_{s\sim \hat{T}(\cdot|\widetilde{s}, \widetilde{a}), a\sim \pi(\cdot|s)} [g(s, a)]\right\}^2 + \lambda \left\|\int_{\mathcal{S}}\sum_{a\in\mathcal{A}} \hat{p}(s|\cdot)\pi(a|s)g(s, a) ds\right\|_{\mathcal{H}_k}\\
        \leq & n\mathbb{E}_{(\widetilde{s}, \widetilde{a}) \sim \rho} \left\{\mathbb{E}_{s\sim T^*(\cdot|\widetilde{s}, \widetilde{a}), a\sim \pi(\cdot|s)} [g(s, a)]\right\}^2 + \lambda B^2 C + n B^2 \zeta\\
        \leq & n\mathbb{E}_{(\widetilde{s}, \widetilde{a}) \sim \rho} \left\{\mathbb{E}_{s\sim T^*(\cdot|\widetilde{s}, \widetilde{a}), a\sim \pi(\cdot|s)} [g^2(s, a)]\right\} + \lambda B^2 C + n B^2 \zeta\\
        \leq & n \omega \mathbb{E}_{(\widetilde{s}, \widetilde{a}) \sim \rho} \left\{\mathbb{E}_{s\sim T^*(\cdot|\widetilde{s}, \widetilde{a}), a\sim \pi_b(\cdot|s)} [g^2(s, a)]\right\} + \lambda B^2 C + n B^2 \zeta\\
        \leq & \frac{n\omega}{\gamma} \mathbb{E}_{(s, a) \sim \rho} [g^2(s, a)] + \lambda B^2 C + n B^2 \zeta.
    \end{align*}
    Substitute back, we have the desired result.
\end{proof}
\begin{lemma}[One Step Back Inequality for the True Model in Offline Setting]
\label{lem:back_true_offline}
    Assume $g:\mathcal{S}\times \mathcal{A} \to \mathbb{R}$, such that $\|g\|_{\infty}\leq B$. Then we have that $\forall \pi$,
    \begin{align*}
        & \left|\mathbb{E}_{(s, a) \sim d_{T^*}^\pi}[g(s, a)]\right| \\
        \leq & \gamma \mathbb{E}_{(\widetilde{s}, \widetilde{a})\sim d_{T^*}^\pi} \left\|p^*(\cdot|\widetilde{s}, \widetilde{a})\right\|_{L_2(\mu), \Sigma_{\rho , \hat{p}}^{-1}} \sqrt{ n\omega\gamma\mathbb{E}_{(s, a) \sim \rho}  \left[g^2(s, a) \right] + \lambda \gamma^2 B^2 C}\\
        & + \sqrt{(1-\gamma) \omega\mathbb{E}_{(s, a) \sim \rho} [g^2(s, a)]}.
    \end{align*}
\end{lemma}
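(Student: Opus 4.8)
The plan is to transcribe the true-model one-step-back argument of Lemma~\ref{lem:back_true_online} from the online regime to the offline one, replacing the exploratory law $\rho_n\times\mathcal{U}(\mathcal{A})$ by the behavior occupancy $\rho=d_{T^*}^{\pi_b}$ and trading the $|\mathcal{A}|$ factor for the action-coverage constant $\omega=\max_{s,a}\pi_b^{-1}(a|s)$. First I would invoke the stationarity identity for $d_{T^*}^\pi$,
\begin{align*}
\mathbb{E}_{(s,a)\sim d_{T^*}^\pi}[g(s,a)] = \gamma\,\mathbb{E}_{(\tilde s,\tilde a)\sim d_{T^*}^\pi,\,s\sim T^*(\cdot|\tilde s,\tilde a),\,a\sim\pi}[g(s,a)] + (1-\gamma)\,\mathbb{E}_{s\sim d_0,\,a\sim\pi}[g(s,a)],
\end{align*}
splitting the quantity into a discounted one-step term and an initial-distribution term.

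For the initial-distribution term I would apply Jensen's inequality and then two importance-sampling steps: reweight $\pi\to\pi_b$ through $\pi(a|s)\le\omega\,\pi_b(a|s)$, and pass from $d_0\times\pi_b$ to $\rho$ using $\rho(s,a)\ge(1-\gamma)d_0(s)\pi_b(a|s)$, which is the leading term of the stationary identity for $\rho$. This yields the $\sqrt{(1-\gamma)\,\omega\,\mathbb{E}_{\rho}[g^2]}$ contribution verbatim.

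The one-step term is where the weighting operator enters. Factoring the transition through the latent representation, $T^*(s|\tilde s,\tilde a)=\langle p^*(\cdot|\tilde s,\tilde a),p^*(s|\cdot)\rangle_{L_2(\mu)}$, this term equals $\gamma\,\mathbb{E}_{(\tilde s,\tilde a)\sim d_{T^*}^\pi}\langle p^*(\cdot|\tilde s,\tilde a),h\rangle_{L_2(\mu)}$ with $h(\cdot)=\int_{\mathcal{S}}\sum_{a}p^*(s|\cdot)\pi(a|s)g(s,a)\,ds$. I then apply the weighted Cauchy–Schwarz inequality of the excerpt with the self-adjoint operator $\Sigma_{\rho,\hat p}$, which produces exactly the factor $\|p^*(\cdot|\tilde s,\tilde a)\|_{L_2(\mu),\Sigma_{\rho,\hat p}^{-1}}$ demanded by the statement, multiplied by $\|h\|_{L_2(\mu),\Sigma_{\rho,\hat p}}$. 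Expanding $\|h\|_{L_2(\mu),\Sigma_{\rho,\hat p}}^2=n\,\mathbb{E}_{(s,a)\sim\rho}\langle\hat p(\cdot|s,a),h\rangle^2+\lambda\langle h,T_k^{-1}h\rangle$, the regularization part is controlled by the normalization Assumption~\ref{assumption:normalization}: writing $\bar g(s)=\sum_a\pi(a|s)g(s,a)$ with $\|\bar g\|_\infty\le B$, we get $\langle h,T_k^{-1}h\rangle=\|h\|_{\mathcal{H}_k}^2\le B^2C$, contributing the $\lambda\gamma^2B^2C$ term once the external $\gamma$ is absorbed. For the data-dependent part, the crucial observation is that $\langle\hat p(\cdot|s,a),h\rangle=\mathbb{E}_{s'\sim\tilde T(\cdot|s,a),\,a'\sim\pi}[g(s',a')]$, where $\tilde T(s'|s,a):=\langle\hat p(\cdot|s,a),p^*(s'|\cdot)\rangle_{L_2(\mu)}$ is \emph{itself a valid conditional density} (nonnegative and normalized, since both $\hat p(\cdot|s,a)$ and $p^*(\cdot|z)$ are densities). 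Hence Jensen gives $\langle\hat p(\cdot|s,a),h\rangle^2\le\mathbb{E}_{s'\sim\tilde T(\cdot|s,a),\,a'\sim\pi}[g^2]$, after which action importance weighting by $\omega$ and the one-step contraction of $\rho$ collapse the pushforward into $\mathbb{E}_{\rho}[g^2]$, giving the $n\omega\gamma\,\mathbb{E}_{\rho}[g^2]$ term after collecting the discount factors.

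The main obstacle is precisely the mismatch that the statement builds the covariance $\Sigma_{\rho,\hat p}$ from the \emph{learned} latent density $\hat p$ while the vector being measured is the \emph{true} $p^*$: the inner product $\langle\hat p(\cdot|s,a),h\rangle$ is governed by the mixed kernel $\tilde T$ rather than by $T^*$ or $\hat T$, so the delicate point is (i) verifying $\tilde T$ is a bona fide transition density so Jensen applies, and (ii) reconciling the $\tilde T$-pushforward of $\rho$ with $\rho$ itself, whose stationarity is expressed through $T^*$. Transferring between $\tilde T$ and $T^*$ is what makes this step more subtle than the matched online case of Lemma~\ref{lem:back_true_online}; the remaining $\gamma$- and $\omega$-bookkeeping is routine once this transfer is controlled.
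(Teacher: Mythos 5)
Your decomposition, your treatment of the initial-distribution term (Jensen, $\pi\le\omega\,\pi_b$, and $\rho\ge(1-\gamma)d_0\pi_b$), and the weighted Cauchy--Schwarz template are all exactly right, but the one-step term as you set it up contains a genuine gap that you flag and never close. Because you take the covariance in the statement literally as $\Sigma_{\rho,\hat p}$ while $h(\cdot)=\int_{\mathcal{S}}\sum_a p^*(s|\cdot)\pi(a|s)g(s,a)\,ds$ is built from $p^*$, you must control $n\,\mathbb{E}_{(s,a)\sim\rho}\langle\hat p(\cdot|s,a),h\rangle_{L_2(\mu)}^2$, an expectation under the mixed kernel $\tilde T(s'|s,a)=\langle\hat p(\cdot|s,a),p^*(s'|\cdot)\rangle_{L_2(\mu)}$. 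Your closing step --- ``the one-step contraction of $\rho$ collapses the pushforward into $\mathbb{E}_\rho[g^2]$'' --- is precisely what fails: $\rho$ is stationary for $(T^*,\pi_b)$, not for $\tilde T$, and the lemma's hypotheses give no control of the discrepancy. An adversarial $\hat p$ can push $\rho$'s mass under $\tilde T$ onto a region where $g^2$ is large but $\rho$ vanishes, so $\mathbb{E}_{\rho}\,\mathbb{E}_{\tilde T,\pi_b}[g^2]\not\lesssim\gamma^{-1}\mathbb{E}_\rho[g^2]$ in general. The only quantitative handle in the paper, the MLE bound $\mathbb{E}_\rho\|\hat T-T^*\|_1^2\le\zeta$, concerns $\hat T(s'|s,a)=\langle\hat p(\cdot|s,a),\hat p(s'|\cdot)\rangle_{L_2(\mu)}$, not your mixed $\tilde T$; and even where such conditioning is invoked (Lemma~\ref{lem:back_learned_offline}) it buys an extra $nB^2\zeta$ term that the bound you are asked to prove does not contain. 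So your argument, as proposed, cannot terminate in the stated inequality.

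The resolution is that the mismatch you are wrestling with is a typo in the lemma statement rather than a feature of the intended argument: the covariance should be the matched one, $\Sigma_{\rho,p^*}$, exactly as in the online analogue Lemma~\ref{lem:back_true_online} (where every object is built from $p^*$) and as this lemma is actually invoked in the offline PAC proof, where the term appears as $\|p^*(\cdot|\widetilde s,\widetilde a)\|_{L_2(\mu),\Sigma_{\rho,p^*}^{-1}}$. With $\Sigma_{\rho,p^*}$ the cross term is $\langle p^*(\cdot|s,a),h\rangle_{L_2(\mu)}=\mathbb{E}_{s'\sim T^*(\cdot|s,a),\,a'\sim\pi}[g(s',a')]$ \emph{exactly}, so Jensen, the importance weight $\pi\le\omega\,\pi_b$, and stationarity of $\rho$ under $(T^*,\pi_b)$ give $\|h\|_{L_2(\mu),\Sigma_{\rho,p^*}}^2\le\frac{n\omega}{\gamma}\mathbb{E}_{\rho}[g^2]+\lambda B^2C$ unconditionally, and the claimed constants follow after absorbing the external $\gamma$ --- this is the paper's (omitted) proof, a verbatim transcription of Lemma~\ref{lem:back_true_online} with $\rho_n\times\mathcal{U}(\mathcal{A})\to\rho$ and $|\mathcal{A}|\to\omega$, which is your own plan minus the detour. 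Your observation that $\tilde T$ is a bona fide transition density is correct but unneeded; the mixed-kernel step should be deleted rather than repaired.
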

\begin{proof}
The proof for this lemma is nearly identical to the previous lemma, and we omit it for simplicity.
\end{proof}
\begin{lemma}[Almost Pessimism at the Initial Distribution] 
\label{lem:pessimism}
If we set $\zeta = \Theta\left(\frac{\log (|\mathcal{P}|/\delta)}{n}\right)$, $\lambda$ for different eigendecay condition as follows:
\begin{itemize}
    \item $\beta$-finite spectrum: $\lambda = \Theta(\beta \log n + \log (|\mathcal{P}|/\delta))$
    \item $\beta$-polynomial decay: $\lambda = \Theta(C_{\mathrm{poly}}n^{1/(1 + \beta)} + \log (|\mathcal{P}|/\delta))$;
    \item $\beta$-exponential decay: $\lambda = \Theta(C_{\mathrm{exp}}(\log n)^{1/\beta} +\log (|\mathcal{P}|/\delta))$;
\end{itemize}
and $\alpha = \Theta\left(\frac{\gamma}{1-\gamma}\sqrt{\omega \log(|\mathcal{P}|/\delta) + \lambda \gamma^2 C}\right)$, the following events hold with probability at least $1-\delta$:
\begin{align*}
    \forall \pi, \quad V_{\hat{T}, r - b}^{\pi} - V_{T^*, r}^\pi \leq \sqrt{\frac{\omega \zeta}{(1-\gamma)^3}}.
\end{align*}
\end{lemma}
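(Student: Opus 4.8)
The plan is to mirror the proof of the online optimism bound (Lemma~\ref{lem:optimism}), replacing the online one-step-back inequalities by their offline counterparts and tracking the behaviour-policy coverage factor $\omega$ in place of $|\mathcal{A}|$. First I would invoke the simulation lemma (Lemma~\ref{lem:simulation}) for the two model--reward pairs $(\hat T, r-b)$ and $(T^*, r)$ under the common policy $\pi$, giving
\begin{align*}
(1-\gamma)\left(V_{\hat T, r-b}^\pi - V_{T^*, r}^\pi\right) = \mathbb{E}_{(s,a)\sim d_{\hat T}^\pi}\left[-b(s,a) + \gamma\mathbb{E}_{\hat T(\cdot|s,a)}[V_{T^*,r}^\pi] - \gamma\mathbb{E}_{T^*(\cdot|s,a)}[V_{T^*,r}^\pi]\right].
\end{align*}
In contrast to the online case the penalty now enters with a negative sign, which is exactly what we want for an \emph{upper} bound: the $-b$ term can only help, so the argument reduces to showing that the model-prediction error is dominated by the penalty up to a statistical residual.

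Next I would control the model-prediction error. Writing $f(s,a) = \mathrm{TV}(T^*(\cdot|s,a), \hat T(\cdot|s,a))$ and using $\|V_{T^*,r}^\pi\|_\infty \le (1-\gamma)^{-1}$, H\"older's inequality bounds the last two terms in absolute value by $\mathbb{E}_{d_{\hat T}^\pi}[f/(1-\gamma)]$ up to constants. I would then apply the offline one-step-back inequality for the learned model (Lemma~\ref{lem:back_learned_offline}) with $g=f$ and $B=2$, and substitute the offline MLE generalization bound $\mathbb{E}_{(s,a)\sim\rho}\|\hat T(s,a) - T^*(s,a)\|_1^2 \le \zeta$ (Lemma~\ref{lem:mle}), which yields $\mathbb{E}_\rho[f^2]\le\zeta$. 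Because $\zeta=\Theta(\log(|\mathcal{P}|/\delta)/n)$ we have $n\zeta=\Theta(\log(|\mathcal{P}|/\delta))$, so the scalar prefactor collapses to $\Theta(\alpha)$ for the stated choice of $\alpha$, and the error is bounded by
\begin{align*}
\gamma\,\mathbb{E}_{d_{\hat T}^\pi}\left[\frac{f}{1-\gamma}\right] \lesssim \alpha\,\mathbb{E}_{(\tilde s,\tilde a)\sim d_{\hat T}^\pi}\left\|\hat p(\cdot|\tilde s,\tilde a)\right\|_{L_2(\mu),\Sigma_{\rho,\hat p}^{-1}} + \sqrt{\frac{\omega\zeta}{1-\gamma}}.
\end{align*}

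Finally I would use the bonus-concentration lemma (Lemma~\ref{lem:bonus_concentration}, offline version) so that the empirical covariance is comparable to its population version $\Sigma_{\rho,\hat p}$, giving $\mathbb{E}_{d_{\hat T}^\pi}[b] \succsim \mathbb{E}_{d_{\hat T}^\pi}[\min\{\alpha\|\hat p(\cdot|s,a)\|_{L_2(\mu),\Sigma_{\rho,\hat p}^{-1}}, 2\}]$, where $b$ is the penalty fixed in Theorem~\ref{thm:pac_online}. Combining this with the previous display, the elliptical term produced by the model error is cancelled by $-b$, leaving only the residual $\sqrt{\omega\zeta/(1-\gamma)}$; dividing by $(1-\gamma)$ yields the claimed $V_{\hat T,r-b}^\pi - V_{T^*,r}^\pi \le \sqrt{\omega\zeta/(1-\gamma)^3}$, uniformly over $\pi$ since both the simulation lemma and the one-step-back inequality are stated for arbitrary $\pi$. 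The main obstacle is this cancellation step: the one-step-back bound is an \emph{aggregate} quantity $\alpha\,\mathbb{E}_{d_{\hat T}^\pi}\|\hat p\|_{\Sigma_{\rho,\hat p}^{-1}}$, whereas the penalty is the \emph{truncated} $\min\{\alpha\|\hat p\|_{\Sigma_{\rho,\hat p}^{-1}},2\}$, so matching them requires handling the saturated region $\alpha\|\hat p\|_{\Sigma_{\rho,\hat p}^{-1}}>2$ separately via the trivial bound $2\gamma/(1-\gamma)$ on the per-step error, which is precisely what the $\min\{\cdot,2\}$ truncation is designed to allow. A secondary bookkeeping point is ensuring that every $|\mathcal{A}|$ of the online analysis is correctly replaced by the single-policy coverage coefficient $\omega=\max_{s,a}\pi_b^{-1}(a|s)$ arising from the importance-sampling step from $\rho$ to $d_0$.
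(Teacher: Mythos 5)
Your proof follows essentially the same route as the paper's: the simulation lemma applied to $(\hat T, r-b)$ versus $(T^*, r)$, H\"older's inequality to reduce the model error to the TV term $\mathbb{E}_{d_{\hat T}^\pi}[f/(1-\gamma)]$, the offline one-step-back inequality (Lemma~\ref{lem:back_learned_offline}) combined with the MLE generalization bound $\mathbb{E}_\rho[f^2]\le\zeta$ so that the scalar prefactor becomes $\Theta(\alpha)$, the concentration of Lemma~\ref{lem:bonus_concentration} (union bound over $\mathcal{P}$ only, not over $n$) to pass between the empirical covariance $\hat{\Sigma}_{n,\hat p}$ and the population covariance $\Sigma_{\rho,\hat p}$, and finally the choice of $\alpha$ to absorb the elliptical term into the penalty, leaving the residual $\sqrt{\omega\zeta/(1-\gamma)}$. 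The truncation subtlety you flag (aggregate $\alpha\,\mathbb{E}\|\hat p\|_{\Sigma_{\rho,\hat p}^{-1}}$ versus the capped $\min\{\alpha\|\hat p\|_{\Sigma_{\rho,\hat p}^{-1}},2\}$) is genuine, but the paper's own proof does not treat it any more explicitly than you do---it closes with ``with the choice of $\alpha$, we can conclude the proof''---so your proposal matches the paper's argument in both structure and level of rigor.
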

\begin{proof}
    Note that, with the proof of Lemma~\ref{lem:bonus_concentration} and a union bound over $\mathcal{P}$ (but not over $n$), we know using the chosen $\lambda$, $\forall \hat{T} \in \mathcal{P}$, with probability at least $1-\delta$, 
    \begin{align*}
        \left\|\hat{p}(\cdot|s, a)\right\|_{L_2(\mu), \hat{\Sigma}_{n, \hat{p}}^{-1}} = \Theta\left(\left\|\hat{p}(\cdot|s, a)\right\|_{L_2(\mu), \Sigma_{\rho, \hat{p}}}^{-1}\right).
    \end{align*} 
    With Lemma~\ref{lem:simulation}, we have that
    \begin{align*}
        & (1-\gamma)\left(V_{\hat{T}, r-b}^\pi - V_{T^*, r}^{\pi}\right)\\
        = & \mathbb{E}_{(s, a) \sim d_{\hat{T}}^\pi} \left[-b(s, a) + \gamma \mathbb{E}_{\hat{T}(s^\prime|s, a)} \left[V_{T^*, r}^\pi(s^\prime)\right] - \gamma \mathbb{E}_{T^*(s^\prime|s, a)}\left[V_{T^*, r}^\pi(s^\prime)\right]\right] \\
        \lesssim & \mathbb{E}_{(s, a) \sim d_{\hat{T}}^\pi} \left[-\min\left\{\alpha\left\|\hat{p}(\cdot|s, a)\right\|_{L_2(\mu), \Sigma_{\rho, \hat{p}}^{-1}}, 2\right\} + \gamma \mathbb{E}_{\hat{T}(s^\prime|s, a)} \left[V_{T^*, r}^\pi(s^\prime)\right] - \gamma \mathbb{E}_{T^*(s^\prime|s, a)}\left[V_{T^*, r}^\pi(s^\prime)\right]\right]
    \end{align*}
    Denote $f(s, a) = \mathrm{TV}(\hat{T}(s, a), T^*(s, a))$, we know $\|f\|_{\infty} \leq 2$. With H\"older's inequality, we can obtain that
    \begin{align*}
        \left|\mathbb{E}_{\hat{T}(s^\prime|s, a)} \left[V_{T^*, r}^\pi(s^\prime)\right] - \mathbb{E}_{T^*(s^\prime|s, a)}\left[V_{T^*, r}^\pi(s^\prime)\right]\right| \leq \mathbb{E}_{(s, a) \sim d_{\hat{T}}^\pi} \left[\frac{f(s, a)}{1-\gamma}\right].
    \end{align*}
    With Lemma~\ref{lem:back_learned_offline}, we have that
    \begin{align*}
        & \mathbb{E}_{(s, a) \sim d_{\hat{T}}^\pi} \left[\frac{f(s, a)}{1-\gamma}\right]\\
        \leq & \mathbb{E}_{(\widetilde{s}, \widetilde{a}) \sim d_{\hat{T}}^\pi} \left\|\hat{p}(\cdot|\widetilde{s}, \widetilde{a})\right\|_{L_2(\mu), \Sigma_{\rho, \hat{p}}^{-1}} \sqrt{\frac{n\omega \gamma^2\mathbb{E}_{(s, a) \sim \rho}[f^2(s, a)]}{(1-\gamma)^2} + \frac{4\lambda \gamma^2 C}{(1-\gamma)^2} + \frac{4n\gamma^2 \zeta}{(1-\gamma)^2}}\\
        & + \sqrt{\frac{\omega \mathbb{E}_{(s, a) \sim \rho}[f^2(s, a)]}{1-\gamma}}\\
        \leq &  \mathbb{E}_{(\widetilde{s}, \widetilde{a}) \sim d_{\hat{T}}^\pi} \left\|\hat{p}(\cdot|\widetilde{s}, \widetilde{a})\right\|_{L_2(\mu), \Sigma_{\rho, \hat{p}}^{-1}} \sqrt{\frac{n\omega \gamma^2\zeta_n}{(1-\gamma)^2} + \frac{4\lambda \gamma^2 C}{(1-\gamma)^2} + \frac{4n\gamma^2 \zeta}{(1-\gamma)^2}} + \sqrt{\frac{\omega \zeta}{1-\gamma}}.
    \end{align*}
    With the choice of $\alpha$, we can conclude the proof.
\end{proof}
\begin{theorem}[PAC Guarantee for Offline Setting]
With probability at least $1-\delta$, for any competitor policy $\pi$ including non-Markovian history-dependent policy, we have
\begin{itemize}
    \item For $\beta$-finite spectrum, we have
    \begin{align*}
        V_{T^*, r}^{\pi} - V_{T^*, r}^{\hat{\pi}} \lesssim \frac{\omega \beta^{3/2}\log n}{(1-\gamma)^2} \sqrt{\frac{C C_{\pi}^* \log (|\mathcal{P}|/\delta)}{n}}
    \end{align*}
    \item For $\beta$-polynomial decay, we have
    \begin{align*}
        V_{T^*, r}^{\pi} - V_{T^*, r}^{\hat{\pi}} \lesssim \frac{C_{\mathrm{poly}} \omega n^{\frac{1-\beta}{2(1+\beta)}}\log n\sqrt{C C_{\pi}^* \log (|\mathcal{P}|/\delta)}}{(1-\gamma)^2}
    \end{align*}
    \item For $\beta$-exponential decay, we have
    \begin{align*}
    V_{T^*, r}^{\pi} - V_{T^*, r}^{\hat{\pi}} \lesssim \frac{C_{\mathrm{exp}} \omega (\log n)^{\frac{3 + 2\beta}{2\beta}}}{(1-\gamma)^2} \sqrt{\frac{C C_{\pi}^* \log (|\mathcal{P}|/\delta)}{n}}
    \end{align*}
\end{itemize}
\end{theorem}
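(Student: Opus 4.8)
The plan is to mirror the pessimism-based argument for offline linear MDPs, replacing every finite-dimensional elliptical-potential step by its RKHS analogue and inserting the concentrability coefficient $C_\pi^*$ through Lemma~\ref{lem:distribution_shift}. First I would fix an arbitrary competitor policy $\pi$ and invoke the almost-pessimism guarantee of Lemma~\ref{lem:pessimism}, which holds uniformly over all policies with probability at least $1-\delta$ once $\zeta$, $\lambda$ and $\alpha$ are chosen as prescribed. Writing $\hat\pi = \argmax_{\pi'} V^{\pi'}_{\hat T, r-b}$ and decomposing $V^{\pi}_{T^*,r} - V^{\hat\pi}_{T^*,r} = \big(V^{\pi}_{T^*,r} - V^{\hat\pi}_{\hat T, r-b}\big) + \big(V^{\hat\pi}_{\hat T, r-b} - V^{\hat\pi}_{T^*,r}\big)$, the second bracket is at most $\sqrt{\omega\zeta/(1-\gamma)^3}$ by Lemma~\ref{lem:pessimism} applied at $\hat\pi$, while optimality of $\hat\pi$ for the penalized learned model gives $V^{\hat\pi}_{\hat T, r-b} \ge V^{\pi}_{\hat T, r-b}$, so the first bracket is bounded by $V^{\pi}_{T^*,r} - V^{\pi}_{\hat T, r-b}$. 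Thus it suffices to control this last quantity for the fixed competitor $\pi$.

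Next I would apply the simulation lemma (Lemma~\ref{lem:simulation}) to write $(1-\gamma)\big(V^{\pi}_{T^*,r} - V^{\pi}_{\hat T, r-b}\big) = \mathbb{E}_{(s,a)\sim d_{T^*}^\pi}\big[b(s,a) + \gamma(\mathbb{E}_{T^*(\cdot|s,a)} - \mathbb{E}_{\hat T(\cdot|s,a)})V^{\pi}_{\hat T, r-b}(s')\big]$. The transition-error term is handled by H\"older's inequality, bounding it by $\tfrac{1}{1-\gamma}\mathbb{E}_{d_{T^*}^\pi}[\mathrm{TV}(T^*,\hat T)]$, to which Lemma~\ref{lem:back_true_offline} (one-step-back for the true model) applies with $g = \mathrm{TV}(T^*,\hat T)$ and $B=2$; using the MLE generalization bound $\mathbb{E}_\rho\|\hat T - T^*\|_1^2 \le \zeta$ this contributes at most a constant multiple of $\alpha\,\mathbb{E}_{d_{T^*}^\pi}\|p^*(\cdot|s,a)\|_{L_2(\mu),\Sigma_{\rho,\hat p}^{-1}} + \sqrt{\omega\zeta/(1-\gamma)}$ once $\alpha$ is taken as in Lemma~\ref{lem:pessimism}. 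The bonus term $\mathbb{E}_{d_{T^*}^\pi}[b]$ with $b=\min\{\alpha\|\hat p(\cdot|s,a)\|_{\Sigma_{\rho,\hat p}^{-1}},2\}$ is treated the same way via Lemma~\ref{lem:back_true_offline} with $g=b$, after using Lemma~\ref{lem:bonus_concentration} to replace the empirical operator $\hat\Sigma_{n,\hat p}$ in the realized bonus by the population operator $\Sigma_{\rho,\hat p}$ up to constants. In both cases the surviving quantity is $\mathbb{E}_{(s,a)\sim d_{T^*}^\pi}\|p^*(\cdot|s,a)\|_{L_2(\mu),\Sigma_{\rho,\hat p}^{-1}}$ together with a $\sqrt{\,\cdot\,}$ factor of order $\sqrt{\omega\alpha^2 d_{\mathrm{eff}} + \lambda C}$, where $d_{\mathrm{eff}}$ is the effective dimension introduced below.

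Finally I would push the expectation from $d_{T^*}^\pi$ onto the offline distribution $\rho$. Applying Jensen's inequality and then Lemma~\ref{lem:distribution_shift} with $\Lambda = \Sigma_{\rho,\hat p}^{-1}$ yields $\mathbb{E}_{d_{T^*}^\pi}\|p^*\|_{\Sigma_{\rho,\hat p}^{-1}} \le \sqrt{C_\pi^*}\,\big(\mathbb{E}_{\rho}\langle p^*, \Sigma_{\rho,\hat p}^{-1}p^*\rangle\big)^{1/2}$, after which the generalization bound is used once more to swap $\hat p$ for $p^*$ inside the operator up to $O(\zeta)$. The term $n\,\mathbb{E}_\rho\langle p^*, \Sigma_{\rho,p^*}^{-1}p^*\rangle$ is then bounded by the Fredholm log-determinant $\log\det(I + \tfrac{n}{\lambda}\mathbb{E}_\rho[T_k^{1/2}p^*p^{*\top}T_k^{1/2}])$ exactly as in the online regret computation, and Lemma~\ref{lem:potential_function_RKHS} converts this into $d_{\mathrm{eff}}$: namely $O(\beta\log n)$ for $\beta$-finite spectrum, $O(C_{\mathrm{poly}} n^{1/(2(1+\beta))}\log n)$ for polynomial decay, and $O(C_{\mathrm{exp}}(\log n)^{1+1/\beta})$ for exponential decay, under the prescribed choices of $\lambda$. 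Substituting $\alpha$, $\lambda$ and $\zeta = \Theta(\log(|\mathcal{P}|/\delta)/n)$ and collecting the dominant term gives a suboptimality of order $\tfrac{\omega}{(1-\gamma)^2}\,d_{\mathrm{eff}}\sqrt{C\,C_\pi^*\log(|\mathcal{P}|/\delta)/n}$ (with the $\sqrt{\lambda}$ factors absorbed into $d_{\mathrm{eff}}$), which specializes to the three stated rates; note that the two appearances of $\sqrt{\omega}$, one from $\alpha$ and one from the $\sqrt{\,\cdot\,}$ factor, combine into the single factor $\omega$.

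The main obstacle I anticipate is the bookkeeping around the two covariance operators $\Sigma_{\rho,\hat p}$ and $\Sigma_{\rho,p^*}$: the distribution-shift lemma is phrased for $p^*$, yet the realized bonus and the one-step-back inequalities naturally produce the learned $\hat p$, so each swap must be justified by the MLE bound and by the concentration of empirical second-moment operators in the RKHS norm (Lemma~\ref{lem:bonus_concentration}), all while keeping constants independent of $|\mathcal{S}|$ and $|\mathcal{Z}|$. A secondary difficulty is ensuring that the operator manipulations — trace-class membership, the identity $\mathrm{Tr}(A(A+B)^{-1}) \le \log\det(\cdot)$, and the well-definedness of the Fredholm determinant — remain valid in the infinite-dimensional setting, exactly as in the online analysis; these are precisely what Lemma~\ref{lem:potential_function_RKHS} is designed to supply.
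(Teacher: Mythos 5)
Your proposal is correct and follows essentially the same route as the paper's own proof: almost-pessimism (Lemma~\ref{lem:pessimism}) combined with optimality of $\hat{\pi}$ under the penalized learned model and the simulation lemma (Lemma~\ref{lem:simulation}), then the offline one-step-back inequality (Lemma~\ref{lem:back_true_offline}) with the MLE bound and bonus concentration (Lemma~\ref{lem:bonus_concentration}), the distribution-shift lemma (Lemma~\ref{lem:distribution_shift}) to introduce $C_{\pi}^*$, and finally the trace-to-log-determinant step with the RKHS potential lemma (Lemma~\ref{lem:potential_function_RKHS}) for the three eigendecay regimes. The only deviations are bookkeeping details --- e.g.\ whether $V^\pi_{\hat{T}, r-b}$ or $V^\pi_{T^*,r}$ sits inside the simulation-lemma expectation, and the $\hat{p}$ versus $p^*$ subscripts on the covariance operators --- on which you are, if anything, more explicit than the paper itself.
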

\begin{proof}
    With Lemma~\ref{lem:pessimism} and Lemma~\ref{lem:simulation}, we have that
    \begin{align*}
        & V_{T^*, r}^{\pi} - V_{T^*, r}^{\hat{\pi}}\\
        \leq & V_{T^*, r}^{\pi} - V_{\hat{T}, r-b}^{\hat{\pi}} + \sqrt{\frac{\omega \zeta}{(1-\gamma)^3}}\\
        \leq & V_{T^*, r}^{\pi} - V_{\hat{T}, r-b}^{\pi} + \sqrt{\frac{\omega \zeta}{(1-\gamma)^3}}\\
        \leq & \frac{1}{1-\gamma} \mathbb{E}_{(s, a) \sim d_{T^*}^{\pi}} \left[b(s, a) + \gamma \mathbb{E}_{T^*(s^\prime|s, a)} \left[V_{T^*, r}^{\pi}(s^\prime)\right] - \gamma \mathbb{E}_{\hat{T}(s^\prime|s, a)} \left[V_{T^*, r}^{\pi}(s^\prime)\right]\right] + \sqrt{\frac{\omega \zeta}{(1-\gamma)^3}}.
    \end{align*}
    As $b = O(1)$, with Lemma~\ref{lem:back_true_offline}, we have that
    \begin{align*}
        & \mathbb{E}_{(s, a) \sim d_{T^*}^{\pi}} [b(s, a)]\\
        \lesssim & \mathbb{E}\left[\min\left\{\alpha \left\|\hat{p}(\cdot|s, a)\right\|_{L_2(\mu), \Sigma_{\rho, \hat{p}}^{-1}}, 2\right\}\right]\\
        \lesssim & \mathbb{E}_{(\widetilde{s}, \widetilde{a})\sim d_{T^*}^{\pi}}\left\|p^*(\cdot|\widetilde{s}, \widetilde{a})\right\|_{L_2(\mu), \Sigma_{\rho, p^*}^{-1}}\sqrt{n\omega\gamma \alpha^2 \mathbb{E}_{(s, a)\sim\rho}\left[\left\|\hat{p}(\cdot|s, a)\right\|_{L_2(\mu), \Sigma_{\rho, \hat{p}}^{-1}}^2\right] + \lambda \gamma^2 C}\\
        & + \sqrt{(1-\gamma)\omega \alpha^2 \mathbb{E}_{(s, a)\sim\rho}\left[\left\|\hat{p}(\cdot|s, a)\right\|_{L_2(\mu), \Sigma_{\rho, \hat{p}}^{-1}}^2\right]}.
    \end{align*}
    With the reasoning similar to the proof in Lemma~\ref{lem:regret}, we have that 
    \begin{itemize}
        \item For $\beta$-finite spectrum,
        \begin{align*}
            \mathbb{E}_{(s, a)\sim\rho}\left[\left\|\hat{p}(\cdot|s, a)\right\|_{L_2(\mu), \Sigma_{\rho, \hat{p}}^{-1}}^2\right] = O\left(\beta \log n\right),
        \end{align*}
        which leads to
        \begin{align*}
            \mathbb{E}_{(s, a) \sim d_{T^*}^{\pi}} [b(s, a)] \lesssim & \mathbb{E}_{(\widetilde{s}, \widetilde{a})\sim d_{T^*}^{\pi}}\left\|p^*(\cdot|\widetilde{s}, \widetilde{a})\right\|_{L_2(\mu), \Sigma_{\rho, p^*}^{-1}}\sqrt{\omega\gamma \beta \alpha^2  \log n + \lambda \gamma^2 C}\\
            & + \sqrt{\frac{(1-\gamma) \omega \beta \alpha^2  \log n}{n}}.
        \end{align*}
        \item For $\beta$-polynomial decay,
        \begin{align*}
        \mathbb{E}_{(s, a)\sim\rho}\left[\left\|\hat{p}(\cdot|s, a)\right\|_{L_2(\mu), \Sigma_{\rho, \hat{p}}^{-1}}^2\right] = O\left(C_{\mathrm{poly}}n^{\frac{1}{2(1+\beta)}} \log n\right),
        \end{align*}
        which leads to
        \begin{align*}
            \mathbb{E}_{(s, a) \sim d_{T^*}^{\pi}} [b(s, a)] \lesssim & \mathbb{E}_{(\widetilde{s}, \widetilde{a})\sim d_{T^*}^{\pi}}\left\|p^*(\cdot|\widetilde{s}, \widetilde{a})\right\|_{L_2(\mu), \Sigma_{\rho, p^*}^{-1}}\sqrt{\omega\gamma C_{\mathrm{poly}}\alpha^2 n^{\frac{1}{2(1+\beta)}} \log n + \lambda \gamma^2 C}\\
            & + \sqrt{(1-\gamma) \omega C_{\mathrm{poly}}\alpha^2 n^{-1 + \frac{1}{2(1+\beta)}} \log n}.
        \end{align*}
        \item For $\beta$-exponential decay,
        \begin{align*}
            \mathbb{E}_{(s, a) \sim \rho}\left[\left\|\hat{p}(\cdot|s, a)\right\|_{L_2(\mu), \Sigma_{\rho, \hat{p}}^{-1}}^2\right] = O\left(C_{\mathrm{exp}}(\log n)^{1 + 1/\beta} \right),
        \end{align*}
        which leads to
        \begin{align*}
            \mathbb{E}_{(s, a) \sim d_{T^*}^{\pi}} [b(s, a)] \lesssim & \mathbb{E}_{(\widetilde{s}, \widetilde{a})\sim d_{T^*}^{\pi}}\left\|p^*(\cdot|\widetilde{s}, \widetilde{a})\right\|_{L_2(\mu), \Sigma_{\rho, p^*}^{-1}}\sqrt{\omega\gamma C_{\mathrm{exp}}\alpha^2 (\log n)^{1 + 1/\beta} + \lambda \gamma^2 C}\\
            & + \sqrt{\frac{(1-\gamma) \omega \alpha^2 C_{\mathrm{exp}} (\log n)^{1 + 1/\beta} }{n}}.
        \end{align*}
    \end{itemize}
    Furthermore, denote $f(s, a) = \mathrm{TV}(\hat{T}(s, a), T^*(s, a))$, we have $\|f\|_{\infty} \leq 2$. With H\"older's inequality, we have
    \begin{align*}
        \left|\mathbb{E}_{(s, a) \sim d_{T^*}^{\pi}}\left[\mathbb{E}_{T^*(s^\prime|s, a)}\left[V_{T^*, r}^{\pi}(s^\prime)\right] - \mathbb{E}_{\hat{T}(s^\prime|s, a)}\left[V_{T^*, r}^{\pi}(s^\prime)\right]\right]\right| \leq \mathbb{E}_{(s, a) \sim d_{T^*}^{\pi}}\left[\frac{f(s, a
        )}{1-\gamma}\right].
    \end{align*}
    With Lemma~\ref{lem:back_true_offline}, we have
    \begin{align*}
        & \mathbb{E}_{(s, a) \sim d_{T^*}^{\pi}}\left[\frac{f(s, a)}{1-\gamma}\right] \leq \mathbb{E}_{(\widetilde{s}, \widetilde{a})\sim d_{T^*}^{\pi}} \left\|p^*(\cdot|\widetilde{s}, \widetilde{a})\right\|_{L_2(\mu), \Sigma_{\rho, p^*}^{-1}} \sqrt{\frac{n\omega \gamma \mathbb{E}_{(s, a) \sim \rho}[f^2(s, a)]}{(1-\gamma)^2} + \frac{4\lambda \gamma^2 C}{(1-\gamma)^2}} \\
        & + \sqrt{\frac{\omega \mathbb{E}_{(s, a) \sim \rho} [f^2(s, a)]}{1-\gamma}}\\
        \leq & \mathbb{E}_{(\widetilde{s}, \widetilde{a})\sim d_{T^*}^{\pi}}\left\|p^*(\cdot|\widetilde{s}, \widetilde{a})\right\|_{L_2(\mu), \Sigma_{\rho, p^*}^{-1}} \cdot \sqrt{\frac{n\omega \gamma \zeta}{(1-\gamma^2)} + \frac{4\lambda \gamma^2 C}{(1-\gamma)^2}} + \sqrt{\frac{\omega \zeta}{(1-\gamma)}}\\
        \lesssim & \alpha_n \mathbb{E}_{(\widetilde{s}, \widetilde{a})\sim d_{T^*}^{\pi}} \left\|p^*(\cdot|\widetilde{s}, \widetilde{a})\right\|_{L_2(\mu), \Sigma_{\rho, p^*}^{-1}} + \sqrt{\frac{\omega \zeta}{1-\gamma}}.
    \end{align*}
    Combine with the previous results and take the dominating terms out, we have that
    \begin{itemize}
        \item For $\beta$-finite spectrum,
        \begin{align*}
            V_{T^*, r}^{\pi} - V_{T^*, r}^{\hat{\pi}}\\
            \lesssim & \frac{1}{1-\gamma}\mathbb{E}_{(\widetilde{s}, \widetilde{a})\sim d_{T^*}^{\pi}}\left\|p^*(\cdot|\widetilde{s}, \widetilde{a})\right\|_{L_2(\mu), \Sigma_{\rho, p^*}^{-1}}\sqrt{\omega\gamma \beta \alpha^2  \log n + \lambda \gamma^2 C}\\
            & + \sqrt{ \frac{\omega \beta \alpha^2  \log n}{(1-\gamma) n}} + \sqrt{\frac{\omega \zeta}{(1-\gamma)^3}}.
        \end{align*}
        \item For $\beta$-polynomial decay,
        \begin{align*}
            V_{T^*, r}^{\pi} - V_{T^*}^{\hat{\pi}}\\
            \lesssim & \frac{1}{1-\gamma}\mathbb{E}_{(\widetilde{s}, \widetilde{a})\sim d_{T^*}^{\pi}}\left\|p^*(\cdot|\widetilde{s}, \widetilde{a})\right\|_{L_2(\mu), \Sigma_{\rho, p^*}^{-1}}\sqrt{\omega\gamma C_{\mathrm{poly}}\alpha^2 n^{\frac{1}{2(1+\beta)}} \log n + \lambda \gamma^2 C}\\
            & + \sqrt{ \frac{\omega C_{\mathrm{poly}}\alpha^2 n^{-1 + \frac{1}{2(1+\beta)}} \log n}{1-\gamma}} + \sqrt{\frac{\omega \zeta}{(1-\gamma)^3}}.
        \end{align*}
        \item For $\beta$-exponential decay,
        \begin{align*}
            V_{T^*, r}^{\pi} - V_{T^*, r}^{\hat{\pi}}\\
            \lesssim & \frac{1}{1-\gamma}\mathbb{E}_{(\widetilde{s}, \widetilde{a})\sim d_{T^*}^{\pi}}\left\|p^*(\cdot|\widetilde{s}, \widetilde{a})\right\|_{L_2(\mu), \Sigma_{\rho, p^*}^{-1}}\sqrt{\omega\gamma C_{\mathrm{exp}} \alpha^2 (\log n)^{1/\beta} \log \log n + \lambda \gamma^2 C}\\
            & + \sqrt{\frac{\omega C_{\mathrm{exp}}\alpha^2 (\log n)^{1 + 1/\beta}}{(1-\gamma) n}} + \sqrt{\frac{\omega \zeta}{(1-\gamma)^3}}.
        \end{align*}
    \end{itemize}
    We now deal with the term $\mathbb{E}_{(\widetilde{s}, \widetilde{a})\sim d_{T^*}^{\pi}}\left\|p^*(\cdot|\widetilde{s}, \widetilde{a})\right\|_{L_2(\mu), \Sigma_{\rho, p^*}^{-1}}$. With Lemma~\ref{lem:distribution_shift}, we know
    \begin{align*}
        & \mathbb{E}_{(\widetilde{s}, \widetilde{a})\sim d_{T^*}^{\pi}}\left\|p^*(\cdot|\widetilde{s}, \widetilde{a})\right\|_{L_2(\mu), \Sigma_{\rho, p^*}^{-1}}\\
        \leq & \sqrt{\mathbb{E}_{(\widetilde{s}, \widetilde{a})\sim d_{T^*}^{\pi}}\left\|p^*(\cdot|\widetilde{s}, \widetilde{a})\right\|_{L_2(\mu), \Sigma_{\rho, p^*}^{-1}}^2}\\
        \leq & \sqrt{C\mathbb{E}_{(\widetilde{s}, \widetilde{a})\sim \rho}\left\|p^*(\cdot|\widetilde{s}, \widetilde{a})\right\|_{L_2(\mu), \Sigma_{\rho, p^*}^{-1}}^2}.
    \end{align*}
    Applying the identical method used in the proof of Lemma~\ref{lem:regret}, we have that:
    \begin{itemize}
        \item For $\beta$-finite spectrum, we have
        \begin{align*}
            \mathbb{E}_{(\widetilde{s}, \widetilde{a})\sim\rho}\left[\left\|p^*(\cdot|\widetilde{s}, \widetilde{a})\right\|_{L_2(\mu), \Sigma_{\rho, p^*}^{-1}}^2\right] = O\left(\beta \log n\right).
        \end{align*}
        \item For $\beta$-polynomial decay, we have
        \begin{align*}
            \mathbb{E}_{(\widetilde{s}, \widetilde{a})\sim\rho}\left[\left\|p^*(\cdot|\widetilde{s}, \widetilde{a})\right\|_{L_2(\mu), \Sigma_{\rho, p^*}^{-1}}^2\right] = O\left(C_{\mathrm{poly}} n^{\frac{1}{2(1 + \beta)}} \log n\right).
        \end{align*}
        \item For $\beta$-exponential decay, we have
        \begin{align*}
            \mathbb{E}_{(\widetilde{s}, \widetilde{a})\sim\rho}\left[\left\|p^*(\cdot|\widetilde{s}, \widetilde{a})\right\|_{L_2(\mu), \Sigma_{\rho, p^*}^{-1}}^2\right] = O\left(C_{\mathrm{exp}} \left(\log n\right)^{1 + 1/\beta}\right).
        \end{align*}
    \end{itemize}
    Substitute $\alpha$ and $\lambda$ back, we have that:
    \begin{itemize}
        \item For $\beta$-finite spectrum, we have
        \begin{align*}
            V_{T^*, r}^{\pi} - V_{T^*, r}^{\hat{\pi}} \lesssim \frac{\omega \beta^{3/2}\log n}{(1-\gamma)^2} \sqrt{\frac{C C_{\pi}^* \log (|\mathcal{P}|/\delta)}{n}}
        \end{align*}
        \item For $\beta$-polynomial decay, we have
        \begin{align*}
            V_{T^*, r}^{\pi} - V_{T^*, r}^{\hat{\pi}}  \lesssim \frac{C_{\mathrm{poly}} \omega n^{\frac{1-\beta}{2(1+\beta)}}\log n\sqrt{C C_{\pi}^* \log (|\mathcal{P}|/\delta)}}{(1-\gamma)^2}
        \end{align*}
        \item For $\beta$-exponential decay, we have
        \begin{align*}
            V_{T^*, r}^{\pi} - V_{T^*, r}^{\hat{\pi}} \lesssim \frac{C_{\mathrm{exp}} \omega (\log n)^{\frac{3 + 2\beta}{2\beta}}}{(1-\gamma)^2} \sqrt{\frac{C C_{\pi}^* \log (|\mathcal{P}|/\delta)}{n}}
        \end{align*}
    \end{itemize}
    This finishes the proof.
\end{proof}

\section{Auxillary Lemmas}
We first state the MLE generalization bound from \citep{agarwal2020flambe}. Note that, when Assumption~\ref{assumption:function_class} holds, the MLE generalization bound only depends on the complexity of $\mathcal{P}$.
\begin{lemma}[MLE Generalization Bound \citep{agarwal2020flambe}]
    \label{lem:mle}
    For a fixed episode $n$, with probability at least $1-\delta$, we have that
    \begin{align*}
        \mathbb{E}_{s\sim 0.5\rho_n + 0.5\rho_n^\prime, a\sim\mathcal{U}(\mathcal{A})} \left[\left\|\hat{T}_n(s, a) - T^*(s, a)\right\|_1^2\right] \leq \frac{\log (|\mathcal{P}|/\delta)}{n}.
    \end{align*}
    With a union bound, with probability at least $1-\delta$, we have that
    \begin{align*}
        \forall n\in\mathbb{N}^+, \mathbb{E}_{s\sim 0.5\rho_n + 0.5\rho_n^\prime, a\sim\mathcal{U}(\mathcal{A})} \left[\left\|\hat{T}_n(s, a) - T^*(s, a)\right\|_1^2\right] \leq \frac{\log (n|\mathcal{P}|/\delta)}{n}
    \end{align*}
\end{lemma}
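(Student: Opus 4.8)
The plan is to prove this by the classical exponential–supermartingale argument for maximum likelihood estimation over a finite, realizable class, and then to convert the resulting Hellinger control into the stated squared $L_1$ (total-variation) bound. Write $x_i=(s_i,a_i)$ for the $i$-th input and $y_i=s_i'$ for the observed next state, so that $y_i\sim T^*(\cdot|x_i)$ conditionally on the past $\mathcal{F}_{i-1}$. Under Assumption~\ref{assumption:function_class} we have $T^*\in\mathcal{P}$, $|\mathcal{P}|<\infty$, and the ELBO maximizer $\hat T_n$ coincides with the MLE. For a candidate $T\in\mathcal{P}$ let $H_i^2(T):=H^2\big(T(\cdot|x_i),T^*(\cdot|x_i)\big)$ be the squared Hellinger distance at the realized input, where $H^2(P,Q)=\int(\sqrt{dP}-\sqrt{dQ})^2$.

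First I would exploit optimality and realizability: since $\hat T_n$ maximizes the empirical log-likelihood and $T^*$ is feasible, $\sum_{i=1}^n\log\frac{\hat T_n(y_i|x_i)}{T^*(y_i|x_i)}\geq 0$. Next, for fixed $T$, set $Y_i(T)=\tfrac12\log\frac{T(y_i|x_i)}{T^*(y_i|x_i)}$ and compute the conditional moment generating function,
\begin{align*}
\mathbb{E}\!\left[e^{Y_i(T)}\mid\mathcal{F}_{i-1}\right]=\int\sqrt{T(y|x_i)\,T^*(y|x_i)}\;dy=1-\tfrac12 H_i^2(T)\leq e^{-\frac12 H_i^2(T)}.
\end{align*}
Hence $L_n(T):=\exp\!\big(\sum_{i\le n}Y_i(T)+\tfrac12\sum_{i\le n}H_i^2(T)\big)$ is a nonnegative supermartingale with $\mathbb{E}[L_n(T)]\leq 1$. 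Applying Markov's inequality (in the form of Ville's maximal inequality) together with a union bound over the finite class gives, with probability $\geq 1-\delta$ and simultaneously for all $T\in\mathcal{P}$,
\begin{align*}
\sum_{i=1}^n Y_i(T)+\tfrac12\sum_{i=1}^n H_i^2(T)\leq\log(|\mathcal{P}|/\delta).
\end{align*}
Specializing to $T=\hat T_n$ and discarding the nonnegative log-likelihood-ratio term yields the \emph{empirical} bound $\sum_{i=1}^n H_i^2(\hat T_n)\lesssim\log(|\mathcal{P}|/\delta)$.

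The main obstacle is that $H_i^2(\hat T_n)$ is evaluated at the realized inputs $x_i$, whereas the statement requires the \emph{population} expectation under the averaged input distribution; since $\hat T_n$ is data-dependent one cannot merely take expectations inside the empirical sum. I would remove this coupling with a second uniform concentration: writing $\nu_i$ for the conditional law of $x_i$ given $\mathcal{F}_{i-1}$, the differences $\mathbb{E}_{x\sim\nu_i}[H_i^2(T)]-H_i^2(T)$ form a bounded martingale-difference sequence (each $H_i^2\in[0,2]$, so its conditional variance is at most twice its mean), and a self-bounding Bernstein/Freedman inequality with another union bound over $\mathcal{P}$ gives $\sum_i\mathbb{E}_{x\sim\nu_i}[H_i^2(T)]\lesssim\sum_i H_i^2(T)+\log(|\mathcal{P}|/\delta)$ for all $T$. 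Plugging in $T=\hat T_n$ and the empirical bound produces $\sum_{i=1}^n\mathbb{E}_{x\sim\nu_i}[H_i^2(\hat T_n)]\lesssim\log(|\mathcal{P}|/\delta)$.

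Finally, I would identify $\tfrac1n\sum_i\nu_i$ with the input law in the statement: by the data-collection rule (Line~\ref{line:data_collection}) the triples aggregated into $\mathcal{D}_n$ have state marginal $\rho_n$ and those in $\mathcal{D}_n'$ have state marginal $\rho_n'$, both with uniform actions, so the averaged law is exactly $0.5\rho_n+0.5\rho_n'$ paired with $a\sim\mathcal{U}(\mathcal{A})$. Dividing by $n$ and converting Hellinger to total variation via $\|P-Q\|_1=2\,\mathrm{TV}(P,Q)$ and $\mathrm{TV}\leq H$ (hence $\|P-Q\|_1^2\leq 4H^2(P,Q)$) yields the fixed-$n$ bound $\mathbb{E}_{s\sim 0.5\rho_n+0.5\rho_n',\,a\sim\mathcal{U}(\mathcal{A})}\big[\|\hat T_n(s,a)-T^*(s,a)\|_1^2\big]\lesssim\log(|\mathcal{P}|/\delta)/n$. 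The uniform-in-$n$ statement then follows from a union bound over episodes (allocating failure probability $\delta/(n(n+1))$ to episode $n$), which upgrades $\log(|\mathcal{P}|/\delta)$ to $\log(n|\mathcal{P}|/\delta)$.
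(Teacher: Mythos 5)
Your proposal is correct, but note that the paper contains no proof of this lemma at all: it is imported verbatim from \citet{agarwal2020flambe}, so the relevant comparison is with the FLAMBE argument. Your reconstruction shares the same exponential-supermartingale skeleton (Chernoff on the half log-likelihood ratio, the identity $\mathbb{E}_{y\sim T^*(\cdot|x)}[\sqrt{T(y|x)/T^*(y|x)}]=1-\tfrac12 H^2$, Ville plus a union bound over $\mathcal{P}$, then realizability to discard the log-likelihood term), but deviates in the decoupling step. FLAMBE conditions on the filtration \emph{before} the input $x_i$ is drawn, so the conditional MGF bound reads $\mathbb{E}[e^{Y_i(T)}\mid\mathcal{F}_{i-1}]=\mathbb{E}_{x\sim\nu_i}\bigl[1-\tfrac12 H^2(T(\cdot|x),T^*(\cdot|x))\bigr]\leq\exp\bigl(-\tfrac12\mathbb{E}_{x\sim\nu_i}[H^2]\bigr)$; the population Hellinger sum is thereby built into the supermartingale from the start and no second concentration is needed. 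You instead stop at the empirical bound $\sum_i H_i^2(\hat T_n)\lesssim\log(|\mathcal{P}|/\delta)$ and then decouple via a Freedman-type inequality with self-bounding variance and a second union bound over $\mathcal{P}$. That step is valid (each $H_i^2\in[0,2]$, so the conditional variance is at most twice the conditional mean), but it is strictly extra work: it inflates the constants, so you obtain the stated bound only up to a universal constant rather than with the constant $1$ in the lemma (FLAMBE's own constant is $2$). Since the paper only ever invokes this lemma through $\zeta_n=\Theta(\log(n|\mathcal{P}|/\delta)/n)$, the constant slack is immaterial. One presentational caveat: when you identify the input law as $0.5\rho_n+0.5\rho_n'$ with uniform actions, you should make explicit that the two triples collected in the same episode are dependent ($s'$ is the output of the first and the input of the second), and order the $2n$ samples so that the filtration argument applies to both; this is exactly the dependence the martingale construction is designed to absorb, so nothing breaks, but it deserves a sentence.
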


\begin{lemma}[Concentration of the Bonuses] 
    \label{lem:bonus_concentration}
    Let $\mu_i$ be the conditional distribution of $\phi$ given the sampled $\phi_1, \cdots, \phi_{i-1}$, define $\Sigma: L_2(\mu) \to L_2(\mu)$, $\Sigma_n := \frac{1}{n}\sum_{i\in[n]} \mathbb{E}_{\phi\sim\mu_i} \phi \phi^\top$. Assume $\|\phi\|_{\mathcal{H}_k} \leq 1$ for any realization of $\phi$. If $\lambda$ satisfies the following conditions for each eigendecay condition:
    \begin{itemize}
        \item $\beta$-finite spectrum: $\lambda = \Theta(\beta\log N + \log (N/\delta))$;
        \item $\beta$-polynomial decay: $\lambda = \Theta(C_{\mathrm{poly}}N^{1/(1 + \beta)} + \log (N/\delta))$; 
        \item $\beta$-exponential decay: $\lambda = \Theta(C_{\mathrm{exp}}(\log N)^{1/\beta} + \log(N/\delta))$, where $C_3$ is a constant depends on $C_1$ and $C_2$;
    \end{itemize}
    then there exists absolute constant $c_1$ and $c_2$, such that $\forall x \in \mathcal{H}_k$, the following event holds with probability at least $1-\delta$:
    \begin{align*}
        \forall n\in [N], \quad  c_1  \left\langle x, \left(n\Sigma_n + \lambda T_k^{-1}\right) x \right\rangle_{L_2(\mu)}\leq &  \left\langle x, \left(\sum_{i\in [n]} \phi_i \phi_i^\top + \lambda T_k^{-1} \right) x\right\rangle_{L_2(\mu)}, \\
        \mathrm{and} \quad \left\langle x, \left(\sum_{i\in [n]} \phi_i \phi_i^\top + \lambda T_k^{-1} \right) x\right\rangle_{L_2(\mu)}\leq & c_2\left\langle x, \left(n\Sigma_n + \lambda T_k^{-1}\right)x \right\rangle_{L_2(\mu)}.
    \end{align*}
    In the same event above, the following event must hold as well:
    \begin{align*}
        \forall n\in [N], \quad \frac{1}{c_2}  \left\langle x, \left(n\Sigma_n + \lambda T_k^{-1}\right)^{-1} x \right\rangle_{L_2(\mu) }\leq & \left\langle x, \left(\sum_{i\in [n]} \phi_i \phi_i^\top + \lambda T_k^{-1} \right)^{-1} x\right\rangle_{L_2(\mu)}\\
        \mathrm{and} \quad \left\langle x, \left(\sum_{i\in [n]} \phi_i \phi_i^\top + \lambda T_k^{-1} \right)^{-1} x\right\rangle_{L_2(\mu)} \leq & \frac{1}{c_1}\left\langle x, \left(n\Sigma_n + \lambda T_k^{-1}\right)^{-1}x \right\rangle_{L_2(\mu)}
    \end{align*}
\end{lemma}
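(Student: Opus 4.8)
The plan is to reduce the entire statement to a single two-sided Löwner (operator) inequality and then read off the inverse bounds for free. Write $M := n\Sigma_n + \lambda T_k^{-1}$ and $\hat M := \sum_{i\in[n]}\phi_i\phi_i^\top + \lambda T_k^{-1}$, both understood as positive self-adjoint operators whose quadratic forms are finite on $\mathcal{H}_k$ (the regularizer $\lambda T_k^{-1}$ makes $\langle x, Mx\rangle_{L_2(\mu)}$ finite precisely for $x\in\mathcal{H}_k$). It suffices to prove $c_1 M \preceq \hat M \preceq c_2 M$ with absolute constants, say $c_1 = 1/2$ and $c_2 = 3/2$: since inversion is operator–antitone on the positive cone, this immediately yields $\tfrac{1}{c_2}M^{-1} \preceq \hat M^{-1} \preceq \tfrac{1}{c_1}M^{-1}$, which is exactly the second pair of displays once read as quadratic forms in $x$. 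Thus the whole lemma collapses to the first (Löwner) claim, uniformly over $n\in[N]$ and automatically over all $x$.

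For the Löwner claim I would precondition by $M$ and bound the self-normalized fluctuation $\|M^{-1/2}(\hat M - M)M^{-1/2}\|_{\mathrm{op}}$. Let $\mathcal{F}_{i-1}$ be the $\sigma$-field generated by $\phi_1,\dots,\phi_{i-1}$; since $\mu_i$ is $\mathcal{F}_{i-1}$-measurable, the terms $\Delta_i := \phi_i\phi_i^\top - \mathbb{E}_{\phi\sim\mu_i}[\phi\phi^\top]$ form a martingale difference sequence with $\hat M - M = \sum_{i\in[n]}\Delta_i$. The preconditioned increments $Y_i := M^{-1/2}\Delta_i M^{-1/2}$ are self-adjoint and mean-zero given the past, and each has small operator norm: from $M\succeq\lambda T_k^{-1}$ we get $M^{-1}\preceq\lambda^{-1}T_k$, so $\langle\phi, M^{-1}\phi\rangle_{L_2(\mu)}\le\lambda^{-1}\langle\phi,T_k\phi\rangle_{L_2(\mu)}\le\lambda^{-1}\mu_1\|\phi\|_{L_2(\mu)}^2\le\lambda^{-1}$, using $\mu_1\le 1$ and $\|\phi\|_{L_2(\mu)}\le\|\phi\|_{\mathcal{H}_k}\le 1$; hence $\|Y_i\|_{\mathrm{op}} = O(1/\lambda)$. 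The predictable quadratic variation satisfies $\sum_i\mathbb{E}[Y_i^2\mid\mathcal{F}_{i-1}]\preceq\lambda^{-1}M^{-1/2}(n\Sigma_n)M^{-1/2}\preceq\lambda^{-1}I$, and its trace is the effective dimension $d_{\mathrm{eff}}(\lambda) := \mathrm{Tr}\big((n\Sigma_n)(n\Sigma_n+\lambda T_k^{-1})^{-1}\big)$.

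I would then apply a dimension-free (intrinsic-dimension) matrix Freedman/Bernstein inequality for Hilbert–space–valued martingales, whose governing ``dimension'' is the trace-class quantity $d_{\mathrm{eff}}(\lambda)$ rather than the ambient dimension, to obtain that with probability at least $1-\delta/N$,
\[
\big\|M^{-1/2}(\hat M - M)M^{-1/2}\big\|_{\mathrm{op}}\;\lesssim\;\sqrt{\frac{d_{\mathrm{eff}}(\lambda)+\log(N/\delta)}{\lambda}}\;+\;\frac{d_{\mathrm{eff}}(\lambda)+\log(N/\delta)}{\lambda}.
\]
The stated choices of $\lambda$ are exactly what drives the right-hand side below $1/2$: bounding $d_{\mathrm{eff}}(\lambda)$ through the eigendecay of $T_k$ (the same computation underlying the potential-function/log-determinant estimate used later in the regret analysis) gives $O(\beta)$ in the $\beta$-finite-spectrum case, a polynomially growing $C_{\mathrm{poly}}(n/\lambda)^{\Theta(1)}$ in the $\beta$-polynomial case, and a polylogarithmic $C_{\mathrm{exp}}(\log(n/\lambda))^{1/\beta}$ in the $\beta$-exponential case, and substituting the prescribed $\lambda$ balances $d_{\mathrm{eff}}(\lambda)$ against $\lambda$ so that the fluctuation is $O(1)$ and can be taken $\le 1/2$. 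A union bound over $n\in[N]$ is absorbed by the extra $\log N$ already present in $\lambda$. The resulting $-\tfrac12 I\preceq M^{-1/2}(\hat M-M)M^{-1/2}\preceq\tfrac12 I$ rearranges to $\tfrac12 M\preceq\hat M\preceq\tfrac32 M$, which is the Löwner claim.

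The main obstacle is the infinite-dimensional geometry: a textbook matrix Bernstein inequality carries an explicit ambient-dimension factor that is vacuous in $\mathcal{H}_k$, so the argument must route through an intrinsic-dimension martingale concentration whose controlling quantity is the finite effective dimension $d_{\mathrm{eff}}(\lambda)$, and one must verify that these effective dimensions are tamed in each eigendecay regime at exactly the prescribed $\lambda$. A secondary technical point is that the preconditioner $M$ depends on the whole predictable variation of the sequence, so some care is needed to keep the martingale structure intact — handled either by working directly with the predictable quadratic variation inside the intrinsic-dimension Freedman bound, or by a short peeling argument over the scale of $M$.
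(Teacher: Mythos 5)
Your reduction of the lemma to a two-sided L\"owner bound $c_1 M \preceq \hat M \preceq c_2 M$ (with $M = n\Sigma_n + \lambda T_k^{-1}$, $\hat M = \sum_i \phi_i\phi_i^\top + \lambda T_k^{-1}$), followed by operator antitonicity of inversion, is sound and matches the structure of the paper's argument (the paper does the inverse step by conjugating with $T_k^{1/2}$ and using that $\{T_k^{-1/2}x : x \in \mathcal{H}_k\}$ spans $L_2(\mu)$, which is the same fact). Your accounting of $d_{\mathrm{eff}}(\lambda)$ against the prescribed $\lambda$ under the three eigendecay regimes is also consistent with the paper's potential-function lemma, and the paper's own remark even acknowledges a ``matrix martingale Bernstein with intrinsic dimension'' route as possible in principle. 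The problem is the step you defer as a ``secondary technical point'': it is the crux, and as written it fails. The preconditioner $M$ is random and not predictable --- $\Sigma_n$ involves $\mu_i$ for every $i \le n$, hence depends on $\phi_1,\dots,\phi_{n-1}$ --- so $Y_i = M^{-1/2}\Delta_i M^{-1/2}$ is neither adapted nor conditionally mean-zero ($M$ is correlated with $\Delta_i$), your expression $\sum_i \mathbb{E}[Y_i^2 \mid \mathcal{F}_{i-1}]$ does not factor the way your bound assumes, and no matrix Freedman/Bernstein inequality, intrinsic-dimension or otherwise, applies to the sequence $Y_i$. The displayed bound on $\bigl\|M^{-1/2}(\hat M - M)M^{-1/2}\bigr\|_{\mathrm{op}}$ is a genuinely \emph{self-normalized} statement, which is precisely what those off-the-shelf inequalities do not deliver.

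Neither of your two suggested repairs closes this gap. (a) Working with the predictable quadratic variation inside a Freedman-type bound still requires a \emph{deterministic} dominating operator; the only one available here is of the form $n\Sigma_n \preceq n T_k$ (up to conjugation), whose operator norm is $\Theta(n)$, so the resulting deviation is of order $\sqrt{n\log(N/\delta)}$ and would force $\lambda \gtrsim \sqrt{n}$ --- vastly weaker than the prescribed $\lambda = \Theta(\beta\log N + \log(N/\delta))$, etc., and useless for the downstream regret analysis. (b) ``Peeling over the scale of $M$'' is not a scalar peeling: to precondition you must localize $M$ two-sidedly \emph{as an operator}, and stratifying the possible values of the trace-class operator $n\Sigma_n$ is itself a covering problem in operator space, i.e., exactly the difficulty you were trying to avoid. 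This is the reason the paper's proof takes a different route: it fixes a direction $x$, applies a scalar self-normalized Bernstein martingale inequality (Lemma 45 of Zanette et al.) to $\langle x, \widetilde\phi_i\rangle_{L_2(\mu)}^2$ --- for scalars, peeling over the random variance is routine --- and then takes a union bound over an $\varepsilon$-net of the unit sphere of $\mathcal{H}_{\widetilde k}$ under $\|\cdot\|_{L_2(\mu)}$, whose log-covering number is controlled via Carl's inequality separately under each eigendecay condition; the prescribed values of $\lambda$ are exactly what absorb that log-covering number plus $\log(N/\delta)$. To salvage your route you would need a genuinely self-normalized operator-valued martingale inequality in which the random variation $W_n$ itself appears in the conclusion (a Tropp-style supermartingale / method-of-mixtures bound), adapted to infinite dimensions via Minsker's $\mathrm{Tr}(\exp(A) - I)$ device, since $\mathrm{Tr}\exp(\cdot)$ is infinite there; you neither cite nor prove such a tool, so the central inequality of your proposal is currently unsupported.
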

\begin{proof}
    Note that, $\|T_k^{-1/2} \phi\|_{L_2(\mu)} = \|\phi\|_{\mathcal{H}_k} \leq 1$. Hence, the operator norm of operators $\widetilde{\Sigma}_n := T_k^{-1/2} \Sigma_n T_k^{-1/2}$ that maps from $L_2(\mu)$ to $L_2(\mu)$ are upper bounded by $1$. For notation simplicity, we define $\widetilde{\phi} := T_k^{-1/2} \phi$ and $\widetilde{\mu}_i$ denotes the conditional distribution of $\widetilde{\phi}$ given the sampled $\widetilde{\phi}_1, \cdots, \widetilde{\phi}_{i-1}$. Note that $\forall x \in \mathcal{H}_k$, $T_k^{1/2} x \in\mathcal{H}_{\widetilde{k}}$. We now prove the following equivalent form of the claim: $\forall x \in \mathcal{H}_{\widetilde{k}}$, $ \forall n\geq 1$, 
    \begin{align*}
         c_1  \left\langle x, \left(n \widetilde{\Sigma}_n + \lambda T_k^{-2}\right) x \right\rangle_{L_2(\mu) }\leq \left\langle x, \left(\sum_{i\in [n]} \widetilde{\phi}\widetilde{\phi}^\top + \lambda T_k^{-2} \right) x\right\rangle_{L_2(\mu)}\leq c_2\left\langle x, \left(n \widetilde{\Sigma}_n + \lambda T_k^{-2}\right)x \right\rangle_{L_2(\mu)}
    \end{align*}
    It is sufficient to consider $x$ with $\|x\|_{\mathcal{H}_{\widetilde{k}}} = 1$. Note that, we have
    \begin{align*}
        \left\langle x, \widetilde{\phi}\widetilde{\phi}^\top x\right\rangle_{L_2(\mu)} = \left\langle x, \widetilde{\phi}\right\rangle_{L_2(\mu)}^2  \leq \|x\|^2_{L_2(\mu)}.
    \end{align*}
    Denote $\widetilde{\Sigma}^i := \mathbb{E}_{\phi\sim \mu_i} \widetilde{\phi}\widetilde{\phi}^\top$. We have
    \begin{align*}
        \mathrm{Var}_{\phi\sim\mu_i}\left[\langle x, \widetilde{\phi}\rangle_{L_2(\mu)}^2\right] \leq \|x\|_{L_2(\mu)}^2 \mathbb{E}_{\widetilde{\phi}\sim\widetilde{\mu}_i}\left[\langle x, \widetilde{\phi}\rangle_{L_2(\mu)}^2\right] = \|x\|_{L_2(\mu)}^2\langle x, \tilde{\Sigma}^i x\rangle_{L_2(\mu)}
    \end{align*}
    we can invoke a Bernstein-style martingale concentration inequality \citep[Lemma 45,][]{zanette2021cautiously}, and obtain that with probability at least $1-\delta$
    \begin{align*}
    \left|\frac{1}{n}\sum_{i\in[n]}\left[ \langle x, \widetilde{\phi}_i \rangle_{L_2(\mu)}^2 \right]- \langle x, \widetilde{\Sigma} x\rangle_{L_2(\mu)}\right| \leq c\left(\sqrt{\frac{\|x\|_{L_2(\mu)}^2\langle x, \tilde{\Sigma}_n x\rangle_{L_2(\mu)} \log (2/\delta)}{n}} + \frac{\|x\|_{L_2(\mu)}^2\log (2/\delta)}{3n}\right),
    \end{align*}
    where $c$ is an absolute constant.
    We then show that, if we have $\lambda = \Omega(\log (1/\delta))$, we have that
    \begin{align*}
        c\left(\sqrt{\frac{\|x\|_{L_2(\mu)}^2\langle x, \widetilde{\Sigma} x \rangle_{L_2(\mu)}\log (2/\delta)}{n}} + \frac{\|x\|_{L_2(\mu)^2}\log (2/\delta)}{3n}\right) \leq C\left(\langle x, \widetilde{\Sigma} x\rangle_{L_2(\mu)} + \frac{\lambda\|x\|_{L_2(\mu)}^2}{n}\right).
    \end{align*}
    where $C < 1$ is an absolute constant, following the similar reasoning in the proof of Lemma 39 in \citep{zanette2021cautiously}:
    \begin{itemize}
        \item $\langle x, \widetilde{\Sigma} x\rangle_{L_2(\mu)} \leq \frac{\lambda \|x\|_{L_2(\mu)}^2}{n}$: It's sufficient to show that $c\sqrt{\lambda \log (2/\delta)} \leq \frac{C\lambda}{2}$ and $\frac{c\log(2/\delta)}{3} \leq \frac{C\lambda}{2}$, which can be achieved by $\lambda = \Omega(\log 1/\delta)$.
        \item $\langle x, \widetilde{\Sigma} x\rangle_{L_2(\mu)} \geq \frac{\lambda \|x\|_{L_2(\mu)}^2}{n}$: It's sufficient to show that 
        \begin{align*}
            \lambda \geq \frac{c}{C} \log (2/\delta) \quad \mathrm{and} \quad  \frac{c}{C} \sqrt{\frac{\|x\|_{L_2(\mu)}^2\log (2/\delta)}{n}} \leq \sqrt{\langle x, \widetilde{\Sigma} x\rangle_{L_2(\mu)}}.
        \end{align*}
        As $\langle x, \widetilde{\Sigma }x\rangle_{L_2(\mu)} \geq \frac{\lambda \|x\|_{L_2(\mu)}^2}{n}$, when $\lambda \geq \max\left\{\frac{c}{C}, \frac{c^2}{C^2}\right\} \log (2/\delta)$, these two conditions hold simultaneously.
    \end{itemize}
    Hence, for any fixed $x$ with $\|x\|_{\mathcal{H}_{\widetilde{k}}} = 1$, we have
    \begin{align*}
        \left|\frac{1}{n}\sum_{i\in[n]}\left[ \langle x, \widetilde{\phi}_i \rangle_{L_2(\mu)}^2 \right]- \langle x, \widetilde{\Sigma} x\rangle_{L_2(\mu)}\right| \leq C\left \langle x, \left(\widetilde{\Sigma}  + \frac{\lambda}{n} I\right)x \right\rangle_{L_2(\mu)} \leq C\left \langle x, \left(\widetilde{\Sigma}  + \frac{\lambda}{n} T_k^{-2}\right)x \right\rangle_{L_2(\mu)}.
    \end{align*}
    Now, assume such condition holds for an $\varepsilon$-net $\mathcal{B}_{\varepsilon}$  of $\mathcal{S}_{\mathcal{H}_{\widetilde{k}}}$, the unit sphere of RKHS $\mathcal{H}_{\widetilde{k}}$ (i.e. $\{x:\|x\|_{\mathcal{H}_{\widetilde{k}}}=1\}$), under $\|\cdot\|_{L_2(\mu)}$. 
    Then $\forall x$ satisfies $\|x\|_{\mathcal{H}_{\widetilde{k}}} = 1$, let $x^\prime$ be the closest point of $x$ in $\mathcal{B}_\varepsilon$ under $\|\cdot\|_{L_2(\mu)}$ (note that $x^\prime \in \mathcal{H}_{\widetilde{k}}$ by the definition of $\varepsilon$-net). We have that
    \begin{align*}
        \left|\langle x, \widetilde{\Sigma} x\rangle - \langle x^\prime, \widetilde{\Sigma} x^\prime\rangle\right| \leq & 2\varepsilon\\
        \left|\left\langle x, \left(\frac{1}{n}\sum_{i\in[n]} \widetilde{\phi}_i \widetilde{\phi}_i^\top\right) x\right\rangle - \left\langle x^\prime,  \left(\frac{1}{n}\sum_{i\in[n]} \widetilde{\phi}_i \widetilde{\phi}_i^\top \right)x^\prime\right\rangle\right| \leq & 2\varepsilon\\
    \end{align*}
    With a triangle inequality, $\forall n$, $\forall \|x\|_{\mathcal{H}_k} \leq 1$, we have
    \begin{align*}
        & \left|\left\langle x, \left(\frac{1}{n}\sum_{i\in [n]} \widetilde{\phi}_i \widetilde{\phi}_i^\top + \frac{\lambda}{n} T_k^{-2}\right) x \right\rangle_{L_2(\mu)} - \left\langle x, \left(\widetilde{\Sigma} + \frac{\lambda}{n} T_k^{-2} \right)x  \right\rangle_{L_2(\mu)}\right| \\
        \leq & C\left\langle x, \left(\widetilde{\Sigma}  + \frac{\lambda}{n} T_k^{-2} \right) x\right\rangle_{L_2(\mu)} + (4 + 2C)\varepsilon
    \end{align*}

    Hence, we can choose $\varepsilon = O\left(\frac{\lambda}{n}\right)$, to guarantee that
    \begin{align*}
        C\left\langle x, \left(\widetilde{\Sigma}  + \frac{\lambda}{n} T_k^{-2} \right) x\right\rangle_{L_2(\mu)} + (4 + 2C)\varepsilon \leq C^\prime\left\langle x, \left(\widetilde{\Sigma}  + \frac{\lambda}{n} T_k^{-2} \right) x\right\rangle_{L_2(\mu)},
    \end{align*}
    where $C^\prime < 1$ is an absolute constant.

    Now we consider the covering number $\mathcal{N}(\mathbb{S}_{\mathcal{H}_{\widetilde{k}}}, \|\cdot\|_{L_2(\mu)}, \varepsilon)$. We start from the entropy number $e_i(\mathcal{S}_{\mathcal{H}_{\widetilde{k}}}, \|\cdot\|_{L_2(\mu)})$. From (A.36) in \citet{steinwart2008support}, we know        \begin{align*}
        e_i(\mathcal{B}_{\mathcal{H}_{\widetilde{k}}}, \|\cdot\|_{L_2(\mu)}) \leq e_i(\mathcal{S}_{\mathcal{H}_{\widetilde{k}}}, \|\cdot\|_{L_2(\mu)}) \leq 2e_i(\mathcal{B}_{\mathcal{H}_{\widetilde{k}}}, \|\cdot\|_{L_2(\mu)}),
    \end{align*}
    where $\mathcal{B}_{\mathcal{H}_{\widetilde{k}}}$ is the unit ball in RKHS $\mathcal{H}_{\widetilde{k}}$ (i.e. $\{x: \|x\|_{\mathcal{H}_{\widetilde{k}}} \leq 1\}$). With Carl's inequality\footnote{A more formal claim is on the approximation number of the bounded linear operator, which, as shown in \citet{steinwart2008support}, is identical to the eigenvalue of the bounded linear operator if the bounded linear operator is compact, self-adjoint and positive.} \citep{carl_stephani_1990} (also see \citep{steinwart2009optimal}), $\forall p > 0, m \in \mathbb{N}^{+}$, we have
    \begin{align*}
        & \sup_{i\in [m]} i^{1/p} e_i(\mathrm{id}:\mathcal{H}_{\widetilde{k}} \to L_2(\mu)) \\
        \leq & c_p \sup_{i\in [m]} i^{1/p}\mu_i^{1/2}\left(T_k^2:L_2(\mu) \to L_2(\mu)\right)\\
        = & c_p \sup_{i\in [m]} i^{1/p}\mu_i\left(T_k:L_2(\mu) \to L_2(\mu)\right)
    \end{align*}
    where $c_p = 128(32 + 16/p)^{1/p} $ denotes a constant only depending on $p$. 
    We then consider the entropy number under different eigendecay conditions:
    \begin{itemize}
        \item For $\beta$-finite spectrum, as we have $\sum_{i\in I} \mu_i \leq 1$ from Assumption~\ref{assump:trace}, and $\forall i > \beta, \mu_i(T_k: L_2(\mu) \to L_2(\mu)) = 0$, we know for a fixed $p$, 
        \begin{align*}
            e_i(\mathrm{id}:\mathcal{H}_{\widetilde{k}} \to L_2(\mu)) \leq 128\left(\left(32 + 16/p\right)\right)^{1/p}(\beta/i)^{1/p}.
        \end{align*}
        Take $p=\beta/i$, we know that
        \begin{align*}
            e_i(\mathrm{id}:\mathcal{H}_{\widetilde{k}} \to L_2(\mu)) \leq 128(32 \beta + 16)^{-i/\beta}.
        \end{align*}
        \item For $\beta$-polynomial decay, take $p = 2/\beta$ and obtain that 
        \begin{align*}
            e_i(\mathrm{id}:\mathcal{H}_{\widetilde{k}} \to L_2(\mu)) \leq 128C_0(32 + 8\beta)^{\beta/2} i^{-\beta}. 
        \end{align*}
        \item For $\beta$-exponential decay, note that, for a fixed $p$, 
        direct computation shows the maximum of $i^{1/p} \exp(-C_2 i^{\beta})$ is achieved when $i^\beta = \frac{1}{C_2\beta p}$. Furthermore, $i^{1/p} \exp(-C_2 i^{\beta})$ is monotonically increasing with respect to $i$ when $i^\beta < \frac{1}{C_2 \beta p}$, while monotonically decreasing with respect to $i$ when $i^\beta > \frac{1}{C_2 \beta p}$. Hence, for a given $i$, we can choose $p$ such that $i^\beta > \frac{1}{C_2 \beta p}$, and obtain that
        \begin{align*}
            e_i(\mathrm{id}:\mathcal{H}_{\widetilde{k}} \to L_2(\mu)) \leq 128(32 + 16/p)^{1/p} C_1 \exp(- C_2 i^\beta).
        \end{align*}
        As we can take $p \to \infty$, we have that
        \begin{align*}
            e_i(\mathrm{id}:\mathcal{H}_{\widetilde{k}} \to L_2(\mu)) \leq 128 C_1 \exp(-C_2 i^\beta).
        \end{align*}
    \end{itemize}
    We now convert the entropy number bound for different eigendecay conditions to the covering number bound accordingly.
    \begin{itemize}
        \item For $\beta$-finite spectrum, we fix a $\delta \in (0, 1)$ and an $\varepsilon\in (0, 128]$, and assume the integer $i\geq 1$ satisfies the condition:
        \begin{align*}
            128(1 + \delta)(32 \beta + 16)^{-(i+1)/\beta} \leq \varepsilon \leq 128(1 + \delta)(32 \beta + 16)^{-i/\beta}.
        \end{align*}
        By the definition of the entropy number and covering number, we know
        \begin{align*}
            & \log \mathcal{N}(\mathcal{B}_{\mathcal{H}_{\widetilde{k}}}, \|\cdot\|_{L_2(\mu)}, \varepsilon)\\
            \leq & \log \mathcal{N}(\mathcal{B}_{\mathcal{H}_{\widetilde{k}}}, \|\cdot\|_{L_2(\mu)}, 128(1 + \delta)(32\beta + 16)^{-(i+1)/\beta})\\
            \leq & i \log (2) \\
            \leq & \beta \log(2) \log\left(\frac{128(1 + \delta)}{\varepsilon}\right) \\
            \leq & \beta \log(2) \log\left(\frac{256}{\varepsilon}\right) = O\left(\beta \log (1/\varepsilon)\right)
        \end{align*}
        \item For $\beta$-polynomial decay, with Lemma 6.21 in \citep{steinwart2008support}, we have that
        \begin{align*}
            \log \mathcal{N}(\mathcal{B}_{\mathcal{H}_{\widetilde{k}}}, \|\cdot\|_{L_2(\mu)}, \varepsilon) \leq \log (4) \left(\frac{128 C_0(32 + 8\beta)^{\frac{\beta}{2}}}{\varepsilon}\right)^{1/\beta} = O\left(C_{\mathrm{poly}} \varepsilon^{-1/\beta}\right).
        \end{align*}
        \item For $\beta$-exponential decay, we fix a $\delta \in (0, 1)$ and an $\varepsilon \in (0, 128C_1]$, and assume the integer $i \geq 1$ satisfies the condition
        \begin{align*}
            128C_1 (1 + \delta) \exp(-C_2 (i+1)^{\beta}) \leq \varepsilon \leq 128C_1 (1 + \delta) \exp(-C_2 i^{\beta}) .
        \end{align*}
        By the definition of the entropy number and covering number, we know
        \begin{align*}
            & \log \mathcal{N}(\mathcal{B}_{\mathcal{H}_{\widetilde{k}}}, \|\cdot\|_{L_2(\mu)}, \varepsilon)\\
            \leq & \log \mathcal{N}(\mathcal{B}_{\mathcal{H}_{\widetilde{k}}}, \|\cdot\|_{L_2(\mu)}, 128C_1 (1 + \delta) \exp(-C_2 (i+1)^{\beta})) \\
            \leq & i \log (2) \\
            \leq & \log(2)\left(\frac{\log\left(\frac{128 C_1(1 + \delta)}{\varepsilon}\right)}{C_2}\right)^{1/\beta} \\
            \leq & \log(2)\left(\frac{\log\left(\frac{256 C_1}{\varepsilon}\right)}{C_2}\right)^{1/\beta} = O\left(C_{\mathrm{exp}}\log(1/\varepsilon)^{1/\beta}\right),
        \end{align*}
        where $C_3$ is a constant depends on $C_1$ and $C_2$.
    \end{itemize}
    Note that $n\leq N$. Hence, we can choose $\varepsilon$ for different eigendecay conditions and lead to the first claim as follows:
    \begin{itemize}
        \item For $\beta$-finite spectrum: we choose $\varepsilon = \Theta(n^{-1})$, and obtain the first claim with $\lambda = \Theta\left(\beta \log N + \log (N/\delta)\right)$ using a union bound over $\mathcal{B}_{\varepsilon}$ and $[N]$.
        \item For $\beta$-polynomial decay: we choose $\varepsilon = \Theta(n^{-\beta/(1+\beta)})$, and obtain the first claim with $\lambda = \Theta\left(C_{\mathrm{poly} }N^{1/(1+\beta)} + \log (N/\delta)\right)$ using a union bound over $\mathcal{B}_{\varepsilon}$ and $[N]$.
        \item For $\beta$-exponential decay: we choose $\varepsilon = \Theta(n^{-1})$, and obtain the first claim with $\lambda = \Theta\left(C_{\mathrm{exp}} (\log N)^{1/\beta} + \log (N/\delta)\right)$ using a union bound over $\mathcal{B}_{\varepsilon}$ and $[N]$.
    \end{itemize}

    For the second claim, note that, 
    \begin{align*}
        & \left\langle x, \left(n \widetilde{\Sigma}_n + \lambda T_k^{-1}\right) x \right\rangle = \left\langle T_k^{-1/2}x, T_k^{1/2}\left(n \widetilde{\Sigma}_n T_k^{1/2} + \lambda T_k^{-1}\right)T_k^{1/2} T_k^{-1/2} x \right\rangle,\\
        & \left\langle x, \left(\sum_{i=1}^n \phi_i \phi_i^\top + \lambda T_k^{-1}\right) x \right\rangle = \left\langle T_k^{-1/2} x, T_k^{1/2}\left(n \sum_{i=1}^n \phi_i \phi_i^\top + \lambda T_k^{-1}\right)T_k^{1/2} T_k^{-1/2} x \right\rangle.\\
    \end{align*}
    Note that, $\{T_k^{-1/2}x, x\in \mathcal{H}_k\}$ spans the $L_2(\mu)$, when the first claim holds, we have that, $\forall x^\prime \in L_2(\mu)$, $\forall n\in [N]$
    \begin{align*}
        \frac{1}{c_2}  \left\langle x^\prime, T_k^{-1/2}\left(n\Sigma  + \lambda T_k\right)^{-1} T_k^{-1/2} x^\prime \right\rangle_{L_2(\mu) }\leq \left\langle  x^\prime, T_k^{-1/2}\left(\sum_{i\in [n]}\phi_i \phi_i^\top + \lambda T_k^{-1} \right)^{-1} T_k^{-1/2} x^\prime\right\rangle_{L_2(\mu)},
    \end{align*}
    and
    \begin{align*}
        \left\langle x^\prime, T_k^{-1/2}\left(\sum_{i\in [n]}\phi_i \phi_i^\top + \lambda T_k^{-1} \right)^{-1} T_k^{-1/2} x^\prime\right\rangle_{L_2(\mu)} \leq \frac{1}{c_1}\left\langle x^\prime, \left(n \Sigma + \lambda T_k^{-1}\right)^{-1} T_k^{-1/2}x^\prime \right\rangle_{L_2(\mu)}.
    \end{align*}
    As $\forall x\in \mathcal{H}_k$, $T_k x \in L_2(\mu)$ and we can choose $x^\prime = T_k x$, which shows the second claim holds when the first claim holds.
\end{proof}

\begin{remark}
    Here we follow the idea of~\citet[][Lemma 45]{zanette2021cautiously} and present a less involved proof. However, it is also possible to use the Bernstein inequality for matrix martingale with intrinsic dimension~\citep[e.g.][]{minsker2017some} to prove the similar results. 
\end{remark}

\begin{lemma}[Simulation Lemma]
\label{lem:simulation}
    Suppose we have two MDP instances $\mathcal{M} = (\mathcal{S}, \mathcal{A}, P, r, d_0, \gamma)$ and $\mathcal{M}^\prime = (\mathcal{S}, \mathcal{A}, P^\prime, r + b, d_0, \gamma)$. Then for any policy $\pi$, we have that
    \begin{align*}
    V_{P^\prime, r + b}^\pi - V_{T, r}^\pi = & \frac{1}{1-\gamma}\mathbb{E}_{(s, a) \sim d_{P}^\pi}\left[b(s, a) + \gamma\left[\mathbb{E}_{P^\prime(s^\prime|s, a)}[V_{P^\prime, r + b}^{\pi}(s^\prime)] - \mathbb{E}_{P(s^\prime|s, a)}[V_{P^\prime, r + b}^\pi(s^\prime)]\right]\right],\\
    V_{P^\prime, r + b}^\pi - V_{T, r}^\pi = & \frac{1}{1-\gamma}\mathbb{E}_{(s, a) \sim d_{P^\prime}^\pi}\left[b(s, a) + \gamma\left[\mathbb{E}_{P^\prime(s^\prime|s, a)}[V_{T, r}^{\pi}(s^\prime)] - \mathbb{E}_{P(s^\prime|s, a)}[V_{T, r}^\pi(s^\prime)]\right]\right].
    \end{align*}
\end{lemma}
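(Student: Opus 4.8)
The plan is to reduce both identities to a single change-of-measure lemma for discounted occupancy measures and then substitute the appropriate Bellman equation. The one ingredient I would establish first is the elementary identity that, for any bounded $f:\mathcal{S}\to\mathbb{R}$ and any transition kernel $P$,
\begin{equation*}
\mathbb{E}_{s_0\sim d_0}[f(s_0)] = \frac{1}{1-\gamma}\mathbb{E}_{(s,a)\sim d_P^\pi}\left[f(s) - \gamma\mathbb{E}_{P(\cdot|s,a)}[f(s')]\right].
\end{equation*}
This follows directly from the occupancy recursion stated in the preliminaries: summing $d_P^\pi(s,a) = (1-\gamma)d_0(s)\pi(a|s) + \gamma\,\mathbb{E}_{(\tilde s,\tilde a)\sim d_P^\pi}[P(s|\tilde s,\tilde a)\pi(a|s)]$ over $a$ gives $d_P^\pi(s) = (1-\gamma)d_0(s) + \gamma\sum_{\tilde s,\tilde a} d_P^\pi(\tilde s,\tilde a)\,P(s|\tilde s,\tilde a)$; multiplying by $f(s)$, summing over $s$, and rearranging yields the display. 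The same computation records that the action marginal of $d_P^\pi$ conditioned on $s$ is exactly $\pi(\cdot|s)$, a fact I will use repeatedly.

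For the first identity I would take $f = V_{P',r+b}^\pi$, so that $\mathbb{E}_{s_0\sim d_0}[f(s_0)] = V_{P',r+b}^\pi$. Applying the occupancy identity with the kernel $P$ writes $V_{P',r+b}^\pi$ as $\frac{1}{1-\gamma}\mathbb{E}_{d_P^\pi}[f(s)-\gamma\mathbb{E}_{P(\cdot|s,a)}[f(s')]]$. I then rewrite the term $\mathbb{E}_{d_P^\pi}[f(s)]$ using the Bellman equation that $f=V_{P',r+b}^\pi$ satisfies in the primed model, namely $f(s)=\mathbb{E}_{a\sim\pi}[(r+b)(s,a)+\gamma\mathbb{E}_{P'(\cdot|s,a)}[f(s')]]$; because the action marginal of $d_P^\pi$ given $s$ is $\pi(\cdot|s)$, this substitution introduces $\gamma\mathbb{E}_{P'}[f]$ while the occupancy identity has left behind $-\gamma\mathbb{E}_{P}[f]$. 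Subtracting $V_{P,r}^\pi = \frac{1}{1-\gamma}\mathbb{E}_{d_P^\pi}[r]$ cancels the $r$ term, isolates the bonus $b$, and collects the mismatch $\gamma(\mathbb{E}_{P'}-\mathbb{E}_P)[f]$, which is exactly the claimed right-hand side. The second identity is the mirror image: I take $f=V_{P,r}^\pi$, apply the occupancy identity along the primed kernel $P'$ (so the expectation is over $d_{P'}^\pi$), and expand $f$ with the Bellman equation it satisfies in the unprimed model; the identical cancellation produces the stated formula.

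The only genuine care-point, and the step I would check most carefully, is the normalization convention for $d_P^\pi$: the lemma's explicit factor $\frac{1}{1-\gamma}$ together with the $(1-\gamma)$ appearing in the occupancy recursion indicate that $d_P^\pi$ must be treated as a normalized probability measure (total mass one), rather than the unnormalized discounted visitation (total mass $\frac{1}{1-\gamma}$) suggested by the raw definition. I would fix this convention at the outset so that $V_{P,r}^\pi = \frac{1}{1-\gamma}\mathbb{E}_{(s,a)\sim d_P^\pi}[r(s,a)]$ holds, after which every step is routine linear bookkeeping. No measure-theoretic difficulty arises from the possibly infinite or continuous $\mathcal{S}$, since all manipulations are linear identities in expectations of bounded functions and require only interchanging finite sums or integrals against the occupancy measure.
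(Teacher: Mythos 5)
Your proposal is correct. Note, however, that the paper does not prove this lemma at all: its ``proof'' is a one-line citation to Lemma~20 of \citet{uehara2021representation}, so there is no in-paper argument to compare against. Your derivation is the standard one, and it is essentially what that reference does: a single change-of-measure identity $\mathbb{E}_{s_0\sim d_0}[f(s_0)] = \frac{1}{1-\gamma}\mathbb{E}_{(s,a)\sim d_P^\pi}\bigl[f(s) - \gamma\mathbb{E}_{P(\cdot|s,a)}[f(s')]\bigr]$ obtained from the occupancy-flow recursion, followed by substituting the Bellman equation that the chosen $f$ satisfies in the \emph{other} model, applied once with $f=V_{P',r+b}^\pi$ along $d_P^\pi$ and once with $f=V_{P,r}^\pi$ along $d_{P'}^\pi$. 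The sign bookkeeping in both directions checks out, and factoring $\pi(a|s)$ out of the recursion to justify that the conditional action law under $d_P^\pi$ is $\pi(\cdot|s)$ is the right way to license the Bellman substitution inside the occupancy expectation. Two judgment calls you made are also the correct readings of the paper: the statement's $V_{T,r}^\pi$ is a typo for $V_{P,r}^\pi$, and the occupancy measure must be taken in its normalized (mass-one) convention — the paper's displayed recursion with the $(1-\gamma)d_0$ term is the normalized one, even though its definition of $d_{T^*}^\pi$ as a sum of discounted indicators is unnormalized, and only the normalized convention makes the lemma's explicit $\frac{1}{1-\gamma}$ prefactor and the identity $V_{P,r}^\pi=\frac{1}{1-\gamma}\mathbb{E}_{d_P^\pi}[r]$ consistent. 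In short: your proof is a correct, self-contained replacement for the external citation, with no gaps.
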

\begin{proof}
    See \citet[Lemma 20]{uehara2021representation}.
\end{proof}
\begin{lemma}[Potential Function Lemma for RKHS]
\label{lem:potential_function_RKHS}
    If $\alpha = \Omega(1)$, then for any distribution $\nu$ supported on the unit ball of $\mathcal{H}_{\widetilde{k}}$, we have that,
    \begin{itemize}
        \item For $\beta$-finite spectrum:
            \begin{align*}
                \log\mathrm{det}\left(\alpha \mathbb{E}_{\nu} [\phi \phi^\top] + I\right) = O\left(\beta \log\left(1 + \frac{\alpha}{\beta}\right)\right).
            \end{align*}
        \item For $\beta$-polynomial decay:
            \begin{align*}
                \log\mathrm{det}\left(\alpha \mathbb{E}_{\nu} [\phi \phi^\top] + I\right) = O\left(C_{\mathrm{poly}}\alpha^{1/(2\beta)}\log \alpha\right).
            \end{align*}
        \item For $\beta$-exponential decay:
            \begin{align*}
                \log\mathrm{det}\left(\alpha \mathbb{E}_{\nu} [\phi \phi^\top] + I\right) = O\left(C_{\mathrm{exp}}(\log \alpha)^{1 + 1/\beta} \right).
            \end{align*}
    \end{itemize}
    where operators are in the space of $L_2(\mu) \to L_2(\mu)$. 
    
    Meanwhile, when $\alpha = O(1)$, for any eigendecay conditions, we have that
    \begin{align*}
        \log\mathrm{det}\left(\alpha \mathbb{E}_{\nu} [\phi \phi^\top] + I\right) = O(1).
    \end{align*}
\end{lemma}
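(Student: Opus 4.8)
The plan is to reduce the Fredholm determinant to a sum over the eigenvalues of $M := \mathbb{E}_\nu[\phi\phi^\top]$ and then to control those eigenvalues using two consequences of the support constraint on $\nu$: a uniform per-mode bound and a global trace bound. First I would invoke the spectral characterization in Remark~\ref{remark:spectral_characterization_rkhs}: membership $\phi$ in the unit ball of $\mathcal{H}_{\widetilde k}$ is equivalent to $\langle\phi, T_k^{-2}\phi\rangle_{L_2(\mu)}\le 1$, i.e. to $\phi = T_k\psi$ for some $\psi\in L_2(\mu)$ with $\|\psi\|_{L_2(\mu)}\le 1$. Writing $N := \mathbb{E}_\nu[\psi\psi^\top]$, this yields the factorization $M = T_k N T_k$, where $N$ is self-adjoint and positive semidefinite with $\|N\|_{\mathrm{op}}\le 1$ and $\mathrm{Tr}(N)\le 1$. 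Since $M$ is trace class (verified below), the spectral theorem gives $\log\mathrm{det}(\alpha M + I) = \sum_{j}\log(1+\alpha\lambda_j)$, where $\{\lambda_j\}$ are the eigenvalues of $M$ in decreasing order.

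The two key estimates come next. Because $M = T_k N T_k = (T_k^2)^{1/2} N (T_k^2)^{1/2}$ and $N\preceq\|N\|_{\mathrm{op}} I\preceq I$, I would conclude $M\preceq T_k^2$, so by Weyl's monotonicity for compact self-adjoint operators $\lambda_j\le\mu_j^2$ for every $j$, where $\mu_j$ are the eigenvalues of $T_k$ from Mercer's theorem (Theorem~\ref{thm:mercer}). Simultaneously $\mathrm{Tr}(M) = \mathrm{Tr}(N T_k^2)\le\mu_1^2\,\mathrm{Tr}(N)\le\mu_1^2\le 1$, using $\mu_1\le 1$ from Assumption~\ref{assump:trace}; this also certifies that $M$ is trace class and the Fredholm determinant is well defined. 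Thus I obtain the two inequalities $\sum_j\lambda_j\le 1$ and $\lambda_j\le\mu_j^2$.

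With these bounds, each eigendecay regime follows from a head/tail split of $\sum_j\log(1+\alpha\lambda_j)$. For the $\beta$-finite spectrum, $M$ has at most $\beta$ nonzero eigenvalues, so concavity of $t\mapsto\log(1+\alpha t)$ together with the trace bound $\sum_j\lambda_j\le 1$ gives, by Jensen's inequality, $\sum_{j\le\beta}\log(1+\alpha\lambda_j)\le\beta\log(1+\tfrac{\alpha}{\beta})$, which is the claimed $O(\beta\log(1+\alpha/\beta))$. For the polynomial and exponential cases I would instead fix a cutoff $D$, bound the head by $\sum_{j\le D}\log(1+\alpha\lambda_j)\le D\log(1+\alpha)$ (since $\lambda_j\le\mu_j^2\le\mu_1^2\le 1$), and bound the tail by $\log(1+x)\le x$ and $\lambda_j\le\mu_j^2$, giving $\alpha\sum_{j>D}\mu_j^2$. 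Plugging in $\mu_j^2\le C_0^2 j^{-2\beta}$ and optimizing at $D\asymp\alpha^{1/(2\beta)}$ yields $O(C_{\mathrm{poly}}\,\alpha^{1/(2\beta)}\log\alpha)$; plugging in $\mu_j^2\le C_1^2\exp(-2C_2 j^\beta)$ and optimizing at $D\asymp(\log\alpha)^{1/\beta}$ yields $O(C_{\mathrm{exp}}(\log\alpha)^{1+1/\beta})$, with the tail contributing only a lower-order term in each case. Finally, when $\alpha = O(1)$, the single bound $\log(1+\alpha\lambda_j)\le\alpha\lambda_j$ summed against $\sum_j\lambda_j\le 1$ gives $\log\mathrm{det}(\alpha M + I)\le\alpha = O(1)$ for all regimes at once.

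I expect the main obstacle to be the infinite-dimensional operator bookkeeping rather than the scalar optimization: one must justify the Fredholm determinant identity and Weyl monotonicity for compact, non-commuting operators on $L_2(\mu)$ — in particular that $M\preceq T_k^2$ transfers to the ordered eigenvalues — and confirm that $M = T_k N T_k$ really is trace class so every manipulation is legitimate. The finite-spectrum refinement is the one place where the trace constraint $\sum_j\lambda_j\le 1$ (not merely $\lambda_j\le\mu_j^2$) is essential, so I would take care to present that case via Jensen rather than the generic split, and to state the cutoffs $D$ as integers chosen uniformly over $n\le N$ so that the resulting $\lambda$ in Theorem~\ref{thm:pac_online} indeed dominates.
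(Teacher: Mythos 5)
Your proposal is correct, and it takes a genuinely different route from the paper's proof. The paper attacks $\sup_{\nu}\log\mathrm{det}\bigl(I+\alpha\,\mathbb{E}_{\nu}[\phi\phi^\top]\bigr)$ variationally: it differentiates the objective with respect to the measure $\nu$, derives first-order optimality conditions, lower-bounds the resulting Lagrange constant, and then uses a Lagrange-multiplier argument together with Weyl's inequality to conclude that the extremal $\nu$ is supported on at most $i_0$ eigendirections of $T_k$, where $i_0$ is the largest index with $\mu_1^2\mu_i^{-2}\le\alpha\mu_1^2+1$; Jensen's inequality on that finite-dimensional block then gives each regime. You instead bypass the extremal-measure characterization entirely: the factorization $M=T_kNT_k$ with $N\preceq I$, $\Tr(N)\le 1$ gives the operator bound $M\preceq T_k^2$, hence (by Weyl/min--max monotonicity) the per-mode bound $\lambda_j(M)\le\mu_j^2$ together with $\Tr(M)\le 1$, \emph{uniformly over all admissible $\nu$}, after which each regime is a standard head/tail split (with the same Jensen step as the paper in the finite-spectrum case). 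What your route buys is robustness and simplicity: it avoids differentiating over probability measures, attainment of the supremum, and infinite-dimensional Lagrange-multiplier reasoning, which are the most delicate steps in the paper's argument, and it reduces the lemma to the familiar effective-dimension calculation from the kernel-bandit literature. What the paper's route buys is structural information about the worst case — the extremal $\nu$ concentrates on the top eigendirections — which suggests the bound's tightness, something your uniform eigenvalue domination does not directly reveal. Your constants and exponents match the paper's in all three regimes, including the tail bookkeeping for $\beta$-exponential decay with $\beta<1$ and the $\alpha=O(1)$ case via $\log(1+x)\le x$ and the trace bound.
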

\begin{proof}
    We consider the optimization problem:
    \begin{align*}
        \sup_{\nu} \log \mathrm{det}\left(I + \alpha \mathbb{E}_{\phi \sim \nu} \left[\phi\phi^\top\right]\right).
    \end{align*}
    We first consider the optimality condition of $\nu$. Note that, $\log\mathrm{det}(X)$ is concave with respect to positive definite $X$ and $\mathbb{E}_{\phi \sim \nu} \left[\phi\phi^\top\right]$ is linear with respect to $\nu$. Direct computation shows that
    \begin{align*}
        \frac{d \log \mathrm{det}\left(I + \alpha \mathbb{E}_{\phi \sim \nu} \left[\phi\phi^\top\right]\right)}{d \nu(\phi^\prime)} = & \mathrm{Tr}\left(\alpha \left(I + \alpha \mathbb{E}_{\phi \sim \nu} \left[\phi\phi^\top\right]\right)^{-1} \phi\phi^\top\right) \\
        = & \left\langle \phi^\prime, \left(\alpha^{-1} I + \mathbb{E}_{\phi \sim \nu} \left[\phi\phi^\top\right]\right)^{-1}\phi^\prime\right\rangle_{L_2(\mu)}.
    \end{align*}
    Note that, $\left(\alpha^{-1} I + \mathbb{E}_{\phi \sim \nu} \left[\phi\phi^\top\right]\right)^{-1} \preceq \alpha I$. Hence, the inner product is well-defined.
    As $\nu$ is a probability measure over the $\mathcal{B}_{\mathcal{H}_{\widetilde{k}}}$, and if $c\geq 1$, 
    \begin{align*}
        & \left\langle c\phi^\prime, \left(\alpha^{-1} I + \mathbb{E}_{\phi \sim \nu} \left[\phi\phi^\top\right]\right)^{-1}c\phi^\prime\right\rangle_{L_2(\mu)} \\
        = & c^2 \left\langle \phi^\prime, \left(\alpha^{-1} I + \mathbb{E}_{\phi \sim \nu} \left[\phi\phi^\top\right]\right)^{-1}\phi^\prime\right\rangle_{L_2(\mu)}\\
        \geq & \left\langle \phi^\prime, \left(\alpha^{-1} I + \mathbb{E}_{\phi \sim \nu} \left[\phi\phi^\top\right]\right)^{-1}\phi^\prime\right\rangle_{L_2(\mu)}.
    \end{align*}
    Hence, we can focus on the $\nu$ supported on $\mathcal{S}_{\mathcal{H}_{\widetilde{k}}}$. Furthermore, with the optimality condition of the probability measure, we know the optimal $\nu$ should satisfy that $\forall\phi^\prime\in\mathrm{supp}(\nu)$, 
    \begin{align*}
        \left\langle \phi^\prime, \left(\alpha^{-1} I + \mathbb{E}_{\phi \sim \nu} \left[\phi\phi^\top\right]\right)^{-1}\phi^\prime\right\rangle_{L_2(\mu)} = C,
    \end{align*}
    and $\forall \phi^\prime\in\mathcal{S}_{\mathcal{H}_{\widetilde{k}}} $, we have
    \begin{align*}
        \left\langle \phi^\prime, \left(\alpha^{-1} I + \mathbb{E}_{\phi \sim \nu} \left[\phi\phi^\top\right]\right)^{-1}\phi^\prime\right\rangle_{L_2(\mu)} \leq C,
    \end{align*}
    where $C$ is some constant. 
    
    We first show that, $C \geq \frac{\alpha \mu_1(T_k)^2}{\alpha \mu_1^2(T_k) + 1}$, which can be shown by consider the following constraint optimization problem
    \begin{align*}
        \inf_{\nu} \left\langle \phi^\prime, \left(\frac{\lambda}{n} I + \mathbb{E}_{\phi \sim \nu} \left[\phi\phi^\top\right]\right)^{-1}\phi^\prime\right\rangle_{L_2(\mu)},
    \end{align*}
    where $\nu$ is from the space of probability measure supported on $\mathcal{S}_{\mathcal{H}_{\widetilde{k}}}$. As $\langle x, A^{-1} x\rangle$ is convex with respect to positive definite $A$ and $\mathbb{E}_{\phi\sim\nu} [\phi\phi^\top]$ is linear with respect to $\nu$, straightforward computation shows that
    \begin{align*}
        \frac{d \left\langle \phi^\prime, \left(\frac{\lambda}{n} I + \mathbb{E}_{\phi \sim \nu} \left[\phi\phi^\top\right]\right)^{-1}\phi^\prime\right\rangle_{L_2(\mu)}}{d\nu\left(\widetilde{\phi}\right)} = -\left\langle \phi^\prime, \left(\frac{\lambda}{n} I + \mathbb{E}_{\phi \sim \nu} \left[\phi\phi^\top\right]\right)^{-1}\widetilde{\phi}\right\rangle_{L_2(\mu)}^2.
    \end{align*}
    With the optimality condition, the optimal $\nu$ should satisfy that $\forall\phi\in\mathrm{supp}(\nu)$,
    \begin{align*}
        \left\langle \phi^\prime, \left(\alpha^{-1} I + \mathbb{E}_{\phi \sim \nu} \left[\phi\phi^\top\right]\right)^{-1}\widetilde{\phi}\right\rangle_{L_2(\mu)}^2 = C^\prime.
    \end{align*}
    and $\forall \widetilde{\phi} \in \mathcal{S}_{\mathcal{H}_{\widetilde{k}}}$, we have
    \begin{align*}
        \left\langle \phi^\prime, \left(\alpha^{-1} I + \mathbb{E}_{\phi \sim \nu} \left[\phi\phi^\top\right]\right)^{-1}\widetilde{\phi}\right\rangle_{L_2(\mu)}^2 \leq C^\prime,
    \end{align*}
    where $C^\prime$ is an absolute constant. With Cauchy-Schwartz inequality, we have
    \begin{align*}
        & \left\langle \phi^\prime, \left(\alpha^{-1} I + \mathbb{E}_{\phi \sim \nu} \left[\phi\phi^\top\right]\right)^{-1}\widetilde{\phi}\right\rangle_{L_2(\mu)}^2 \\
        \leq & \left\|T_k\left(\alpha^{-1} I + \mathbb{E}_{\phi \sim \nu} \left[\phi\phi^\top\right]\right)^{-1}\phi^\prime\right\|_{L_2(\mu)} \|T_k^{-1} \widetilde{\phi}\|_{L_2(\mu)}\\
        = & \left\|T_k\left(\alpha^{-1} I + \mathbb{E}_{\phi \sim \nu} \left[\phi\phi^\top\right]\right)^{-1}\phi^\prime\right\|_{L_2(\mu)},
    \end{align*}
    where the maximum only achieves when
    \begin{align*}
        \phi^\prime = c^\prime\left(\alpha^{-1} I + \mathbb{E}_{\phi\sim\nu}[\phi\phi^\top] \right)T_k^{-2} \widetilde{\phi}.
    \end{align*}
    where $c^\prime$ is an absolute constant to make sure $\|\widetilde{\phi}\|_{\mathcal{H}_{\widetilde{k}}} = 1$. Hence, the optimal $\nu$ is a point measure supported on $\widetilde{\phi}$, which further leads to
    \begin{align*}
        \left(\alpha^{-1} I + \widetilde{\phi}\widetilde{\phi}^\top \right)T_k^{-2} \widetilde{\phi} = \left(\alpha^{-1} T_k^{-2} + I \right) \widetilde{\phi},
    \end{align*}
    Take $\phi^\prime = \mu_1(T_k) e_1$, as $e_i$ is the eigenfunction of $T_k^{-2}$, we know $\nu$ should only support on $\mu_1(T_k) e_1$, and
    \begin{align*}
        \inf_{\nu} \left\langle \phi^\prime, \left(\alpha^{-1} I + \mathbb{E}_{\phi \sim \nu} \left[\phi\phi^\top\right]\right)^{-1}\phi^\prime\right\rangle_{L_2(\mu)} = \frac{\alpha \mu_1^2(T_k)}{\alpha\mu_1^2(T_k) + 1},
    \end{align*}
    which means $C \geq \frac{\alpha \mu_1(T_k)^2}{\alpha \mu_1^2(T_k) + 1}$.
    
    Now we consider the constraint optimization problem
    \begin{align*}
        \max_{\phi^\prime} \left\langle \phi^\prime, \left( \alpha^{-1} I + \mathbb{E}_{\phi\sim\nu} \left[\phi\phi^\top\right]\right)^{-1}\phi^\prime \right\rangle, \quad \mathrm{s.t.}\quad \left\|\phi^\prime\right\|_{\mathcal{H}_{\widetilde{k}}} \leq 1.
    \end{align*}
    With the method of Lagrange multiplier, we know that $C T_k^{-2} - \left( \alpha^{-1} I - \mathbb{E}_{\phi\sim\nu} \left[\phi\phi^\top\right]\right)^{-1} \succeq 0$, and for all $\phi$ in the support of $\nu$, we have $\left(C T_k^{-2} - \left( \alpha^{-1} I + \mathbb{E}_{\phi\sim\nu} \left[\phi\phi^\top\right]\right)^{-1}\right)\phi = 0$. Note that $\left\|\left(\alpha^{-1} I + \mathbb{E}_{\phi\sim\nu} \left[\phi\phi^\top\right]\right)^{-1}\right\|_{\mathrm{op}} \leq \alpha$. With Weyl's inequality, we know that, 
    \begin{align*}
        \mu_i\left(C T_k^{-2} - \left( \alpha^{-1} I - \mathbb{E}_{\phi\sim\nu} \left[\phi\phi^\top\right]\right)^{-1}\right) \geq C \mu_i(T_k)^{-2} - \alpha \geq \alpha\left(\frac{\mu_1^2(T_k)\mu_i^{-2}(T_k)}{\alpha \mu_1^2(T_k) + 1} - 1\right),
    \end{align*}
    which means the support of $\nu$ is at most $i_0$ dimension, where $i_0$ is the largest integer that $\mu_1^2 (T_k) \mu_i^{-2}(T_k) \leq \alpha \mu_1^2(T_k) + 1$.
    
    When $\alpha = O(1)$, with Assumption~\ref{assump:trace}, we know $\mu_i(T_k) \leq 1$ and $i_0 = O(1)$. Combined with Jensen's inequality, we finish the proof of the second claim.
    
    We then consider the case when $\alpha = \Omega(1)$ under different eigendecay conditions:
    \begin{itemize}
        \item $\beta$-finite spectrum: we know $i_0 \leq \beta$. As $\|\phi\|_{L_2(\mu)} \leq \|\phi\|_{\mathcal{H}_{\widetilde{k}}} = 1$, we have $\left\|\mathbb{E}_{\phi\sim\nu}\left[\phi\phi^\top\right]\right\|_{\mathrm{op}} \leq 1$. With Jensen's inequality, we have
        \begin{align*}
            \log \mathrm{det}\left(I + \alpha\mathbb{E}_{\phi\sim\nu}\left[\phi\phi^\top\right]\right) = O\left(\beta \log\left(1 + \frac{\alpha}{\beta}\right)\right).
        \end{align*}
        \item $\beta$-polynomial decay: we know $i_0 = O\left(C_{\mathrm{poly}}\alpha^{1/(2\beta)}\right)$ dimension. As $\|\phi\|_{L_2(\mu)} \leq \|\phi\|_{\mathcal{H}_{\widetilde{k}}} = 1$, we have $\left\|\mathbb{E}_{\phi\sim\nu}\left[\phi\phi^\top\right]\right\|_{\mathrm{op}} \leq 1$. With Jensen's inequality, we have
        \begin{align*}
            \log \mathrm{det}\left(I + \alpha\mathbb{E}_{\phi\sim\nu}\left[\phi\phi^\top\right]\right) = O\left(C_{\mathrm{poly}}\alpha^{1/(2\beta)} \log \alpha\right).
        \end{align*}
        \item $\beta$-exponential decay: we know the support of $\nu$ is at most $O\left(C_{\mathrm{exp}}(\log \alpha)^{1/\beta}\right)$ dimension. 
        As $\|\phi\|_{L_2(\mu)} \leq \|\phi\|_{\mathcal{H}_{\widetilde{k}}} = 1$, we have $\left\|\mathbb{E}_{\phi\sim\nu}\left[\phi\phi^\top\right]\right\|_{\mathrm{op}} \leq 1$, With Jensen's inequality, we have
        \begin{align*}
            \log \mathrm{det}\left(I + \alpha\mathbb{E}_{\phi\sim\nu}\left[\phi\phi^\top\right]\right) = O\left(C_{\mathrm{exp}}(\log \alpha)^{1 + 1/\beta} \right).
        \end{align*}
    \end{itemize}
    Hence, we obtain the desired results.
\end{proof}
\revise{
\section{Additional Experiment Result}
\label{sec:appendix-new-results}

\subsection{Training with 1M Steps}

This section provides the learning curves with 1M training steps compared to SAC in four Mujoco control problems. 
We only tune the \emph{feature-updates-per-step} parameter from $\{1,3,5\}$ and report the best result to save computations and running time. 
The results clearly demonstrate that \algabb also achieves significantly better performance in the long run.

\begin{figure*}[thb]
{\includegraphics[width=3.3cm]{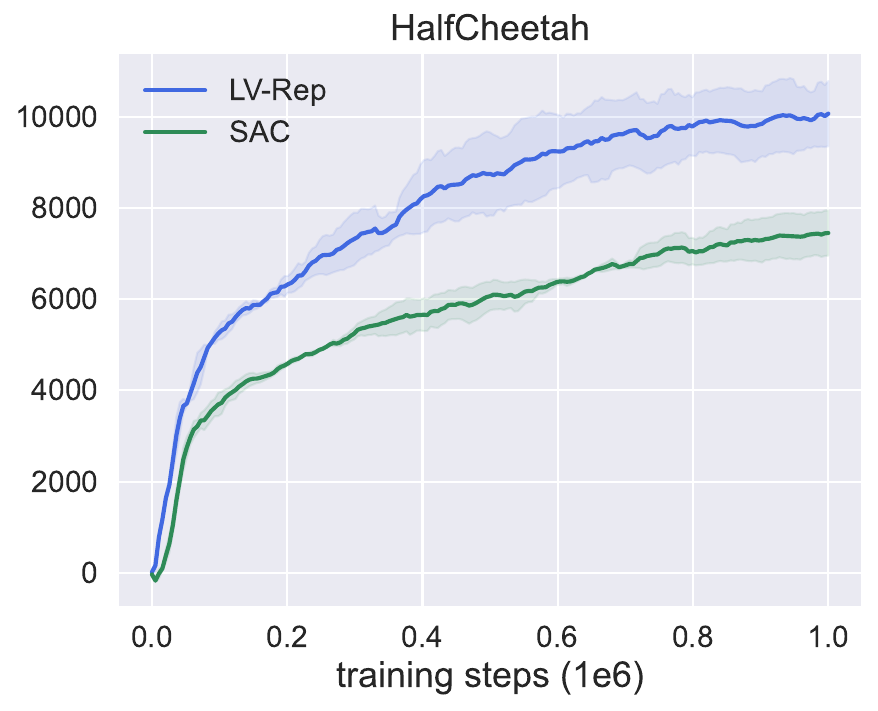}
\label{fig:gym-cheetah}
}
{\includegraphics[width=3.3cm]{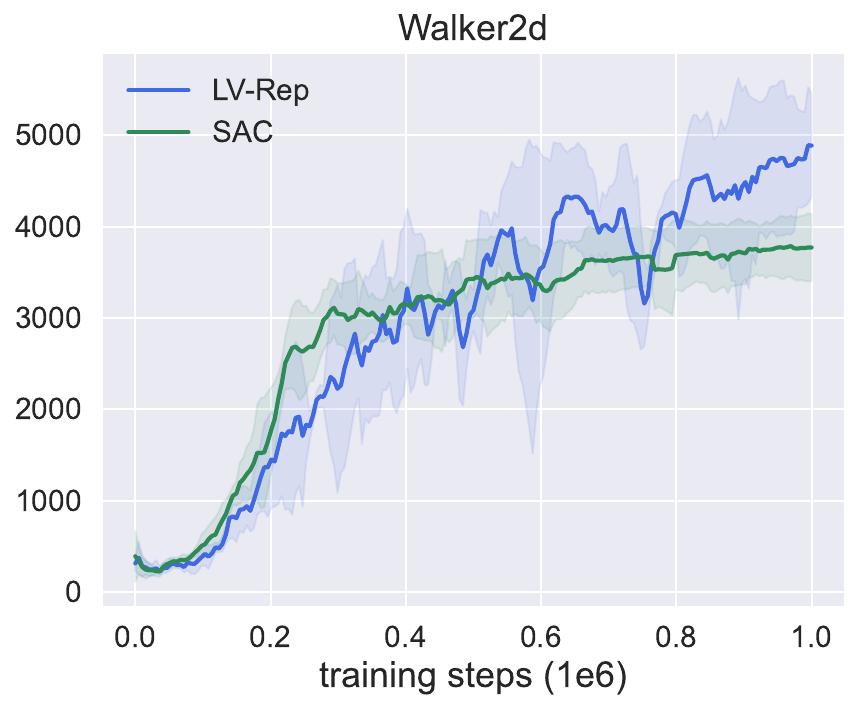}
\label{fig:gym-walker}
}
{\includegraphics[width=3.3cm]{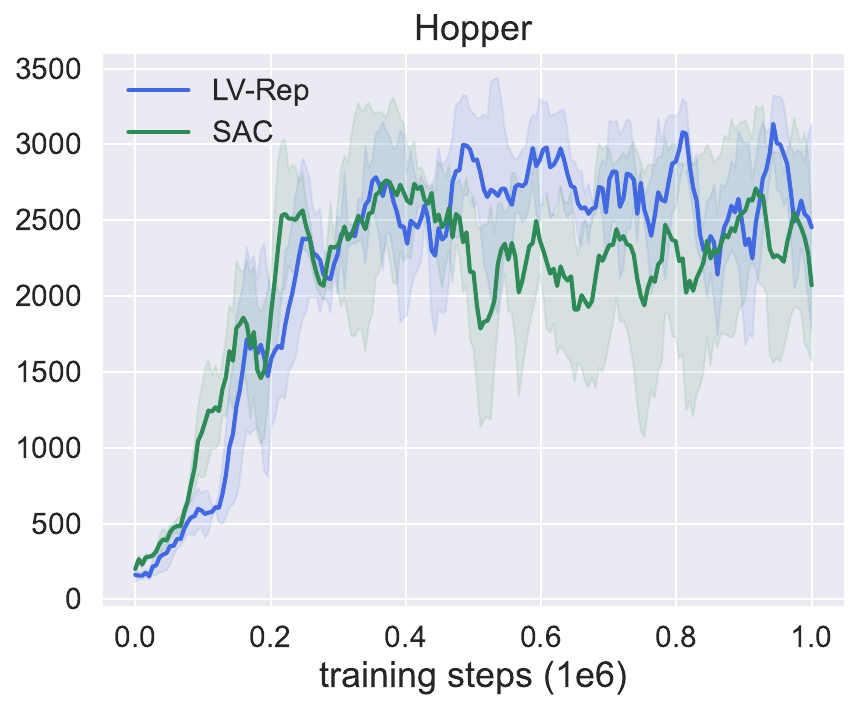}
\label{fig:gym-hopper}
}
{\includegraphics[width=3.3cm]{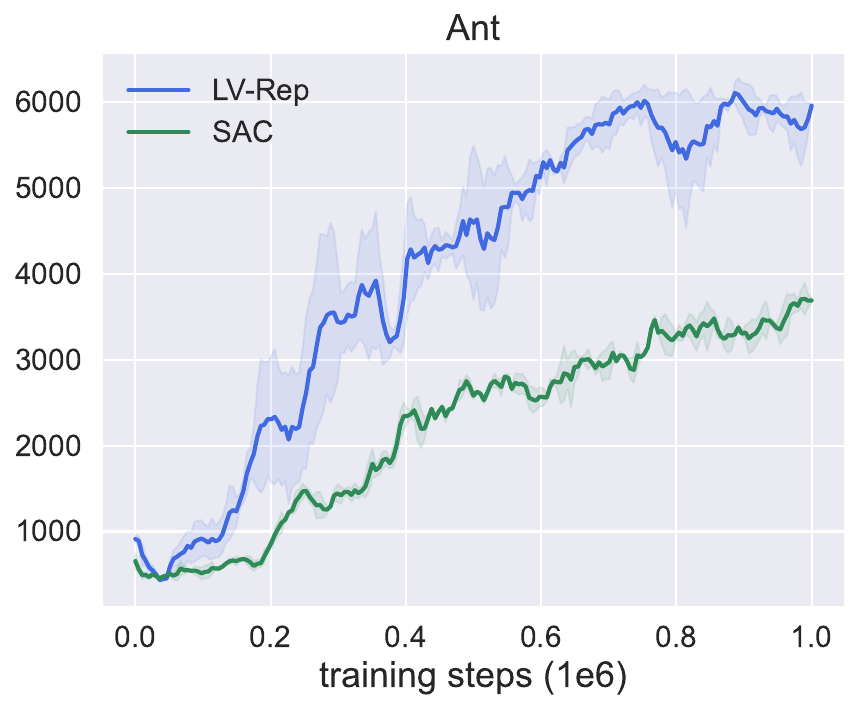}
\label{fig:gym-ant}
}
\caption{
\revise{We show the learning curves in Mujoco control compared to SAC. 
The $x$-axis shows the training iterations and $y$-axis shows the performance. 
All plots are averaged over 4 random seeds. The shaded area shows the standard error.}
}
\label{fig:mujoco-1m}
\end{figure*}

\subsection{Ablation Study on Latent Representation Size}

In this section, we provide an ablation study on the latent representation dimension to show this parameter affects the performance of \algabb.  
In all our experiments the latent feature dimension is set to 256. 
We compare to latent feature dimension 64 and 128 in HalfCheetah. 
The results are reported in Figure~\ref{fig:feature-ablation}.

\begin{figure*}[thb]
\centering
{\includegraphics[width=5cm]{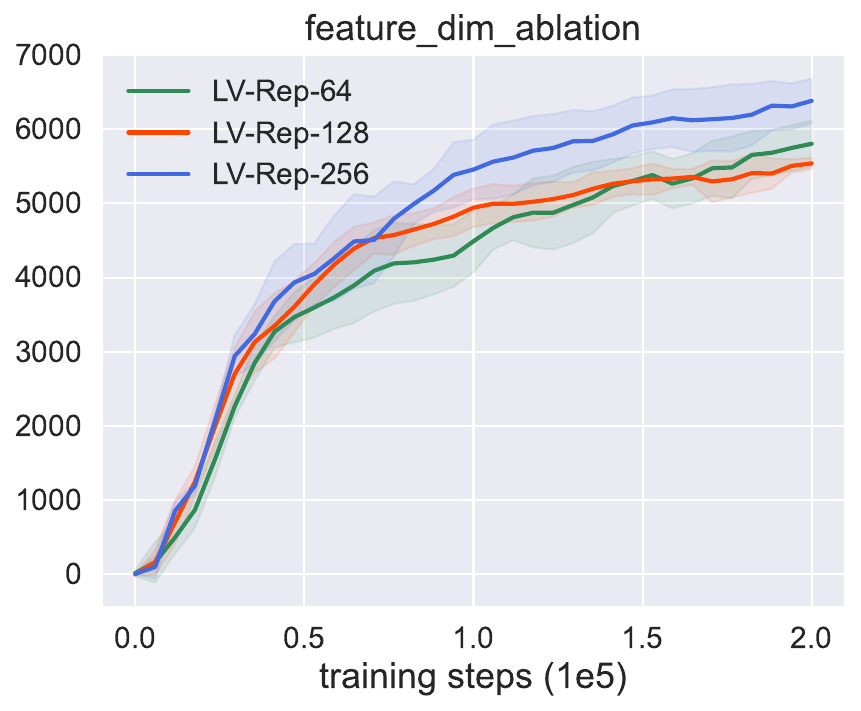}
}
\caption{
\revise{Ablation study on the dimension of latent representations.}
}
\label{fig:feature-ablation}
\end{figure*}}

\end{document}